\documentclass[11pt,reqno]{amsart}
\usepackage[foot]{amsaddr}
\usepackage{nicefrac,upgreek,verbatim}
\usepackage[mathscr]{eucal}
\usepackage{dsfont}
\usepackage[normalem]{ulem}
\usepackage{amsopn}

\usepackage[margin=1in]{geometry}

\usepackage{hyperref}
\hypersetup{
    colorlinks,
    linkcolor={red!50!black},
    citecolor={red!50!black},
    urlcolor={red!80!black},
    hypertexnames=false
}
\usepackage{color}

\usepackage{enumitem}

\usepackage{graphicx}
\usepackage{varwidth}
\usepackage{mathrsfs}
\usepackage{mathtools}
\usepackage[font=small,labelfont=bf]{caption}
\usepackage{subcaption}
\usepackage{wrapfig}
\usepackage{overpic}
\usepackage{multirow}
\usepackage[dvipsnames]{xcolor}

\usepackage{amsthm}
\usepackage{amsmath,amssymb,colonequals,etoolbox}
\usepackage{thmtools}
\usepackage{url}
\usepackage{algorithm}
\usepackage{algpseudocode}
\usepackage{bm}
\usepackage{bbm}
\usepackage{thm-restate}

\usepackage{color}
\definecolor{dmagenta}{rgb}{.4,.1,.5}
\definecolor{dblue}{rgb}{.0,.0,.5}
\definecolor{mblue}{rgb}{.0,.0,.7}
\definecolor{ddblue}{rgb}{.0,.0,.4}
\definecolor{dred}{rgb}{.7,.0,.0}
\definecolor{dgreen}{rgb}{.0,.5,.0}
\definecolor{Eeom}{rgb}{.0,.0,.5}

\allowdisplaybreaks
\usepackage[capitalize,nameinlink]{cleveref}

\usepackage[abbrev,msc-links,nobysame,citation-order]{amsrefs} 

\crefname{section}{Section}{Sections}
\crefname{subsection}{Section}{Sections}
\crefname{assmp}{Assumption}{Assumptions}
\crefname{mylemma}{Lemma}{Lemmas}
\crefname{myfact}{Fact}{Facts}

\Crefname{figure}{Figure}{Figures}

\crefformat{equation}{\textup{#2(#1)#3}}
\crefrangeformat{equation}{\textup{#3(#1)#4--#5(#2)#6}}
\crefmultiformat{equation}{\textup{#2(#1)#3}}{ and \textup{#2(#1)#3}}
{, \textup{#2(#1)#3}}{, and \textup{#2(#1)#3}}
\crefrangemultiformat{equation}{\textup{#3(#1)#4--#5(#2)#6}}
{ and \textup{#3(#1)#4--#5(#2)#6}}{, \textup{#3(#1)#4--#5(#2)#6}}
{, and \textup{#3(#1)#4--#5(#2)#6}}

\Crefformat{equation}{#2Equation~\textup{(#1)}#3}
\Crefrangeformat{equation}{Equations~\textup{#3(#1)#4--#5(#2)#6}}
\Crefmultiformat{equation}{Equations~\textup{#2(#1)#3}}{ and \textup{#2(#1)#3}}
{, \textup{#2(#1)#3}}{, and \textup{#2(#1)#3}}
\Crefrangemultiformat{equation}{Equations~\textup{#3(#1)#4--#5(#2)#6}}
{ and \textup{#3(#1)#4--#5(#2)#6}}{, \textup{#3(#1)#4--#5(#2)#6}}
{, and \textup{#3(#1)#4--#5(#2)#6}}

\crefdefaultlabelformat{#2\textup{#1}#3}

\usepackage[most]{tcolorbox}

\newtcolorbox{eqbox}{
  colback=gray!8,      
  colframe=black,
  boxrule=0.8pt,
  arc=2pt,
  left=6pt,
  right=6pt,
  top=0pt,
  bottom=1pt,
  before upper={%
    \setlength{\abovedisplayskip}{0pt}%
    \setlength{\abovedisplayshortskip}{0pt}%
    \setlength{\belowdisplayskip}{0pt}%
    \setlength{\belowdisplayshortskip}{0pt}%
  }
}

\newtcolorbox{focusbox}{
  colback=gray!8,
  colframe=black,
  boxrule=0.8pt,
  arc=2pt,
  left=6pt,
  right=6pt,
  top=4pt,          
  bottom=4pt,       
  before upper={%
    \setlength{\abovedisplayskip}{0pt}%
    \setlength{\abovedisplayshortskip}{0pt}%
    \setlength{\belowdisplayskip}{0pt}%
    \setlength{\belowdisplayshortskip}{0pt}%
  }
}

\usepackage{mdframed}

\usepackage{setspace}
\usepackage{etoolbox}
\AtBeginEnvironment{quote}{\par\singlespacing\small}

\algrenewcommand\algorithmicrequire{\textbf{Input:}}
\algrenewcommand\algorithmicensure{\textbf{Output:}}

\renewcommand{\geq}{\geqslant}

\renewcommand{\leq}{\leqslant}
\renewcommand{\preceq}{\preccurlyeq}
\renewcommand{\succeq}{\succcurlyeq}

\newtheorem{thm}{Theorem}[section]
\newtheorem{mydef}[thm]{Definition}
\newtheorem{myprop}[thm]{Proposition}
\newtheorem{mylemma}[thm]{Lemma}

\newtheorem{myfact}[thm]{Fact}

\newtheorem{mythm}[thm]{Theorem}
\newtheorem{assmp}[thm]{Assumption}

\theoremstyle{definition}
\newtheorem{rmk}[thm]{Remark}
\newtheorem{myex}[thm]{Exercise}

\newcommand{\mypara}[1]{\vspace{0.5ex}\noindent\textbf{#1:}\hspace{0.2em}}
\newcommand{\mysubpara}[1]{\vspace{0.3ex}\noindent\textit{#1.}\hspace{0.5em}}

\mdfdefinestyle{theoremstyle}{%
linecolor=black,linewidth=1pt,%
frametitlerule=true,%
frametitlebackgroundcolor=gray!20,%
innertopmargin=0em,%
innerbottommargin=0.4em,%
nobreak=true,%
}
\mdfdefinestyle{calloutstyle}{%
linecolor=black,linewidth=1pt,%
frametitlerule=true,%
frametitlebackgroundcolor=gray!20,%
innertopmargin=0.4em,%
innerbottommargin=0.4em,%
nobreak=true,%
}

\DeclareMathOperator*{\diag}{diag}
\DeclareMathOperator*{\Tr}{\mathsf{tr}}

\newcommand{\calM}{\mathcal{M}}

\newcommand{\A}{\ensuremath{\mathcal{A}}}

\newcommand{\calD}{\mathcal{D}}
\newcommand{\calP}{\mathcal{P}}

\newcommand{\R}{\ensuremath{\mathbb{R}}}

\newcommand{\N}{\ensuremath{\mathbb{N}}}

\newcommand{\norm}[1]{\lVert #1 \rVert}
\newcommand{\bignorm}[1]{\left\lVert #1 \right\rVert}
\newcommand{\bigschattennorm}[1]{\left\lVert #1 \right\rVert_{S_{p}}}

\newcommand{\bigschattenrnorm}[1]{\left\lVert #1 \right\rVert_{S_{r}}}
\newcommand{\subg}{\nu}

\newcommand{\chw}{C_{\mathsf{HW}}}

\newcommand{\ip}[2]{\ensuremath{\langle #1, #2 \rangle}}

\newcommand{\E}{\mathbb{E}}
\newcommand{\abs}[1]{\ensuremath{| #1 |}}
\newcommand{\bigabs}[1]{\ensuremath{\left| #1 \right|}}
\newcommand{\floor}[1]{\lfloor #1 \rfloor}

\renewcommand{\vec}{\mathrm{vec}}

\renewcommand{\Pr}{\mathbb{P}}
\newcommand{\T}{\mathsf{T}}

\newcommand{\calG}{\mathcal{G}}
\newcommand{\calN}{\mathcal{N}}

\newcommand{\calF}{\mathcal{F}}

\newcommand{\calW}{\mathcal{W}}

\numberwithin{equation}{section}

\newcommand{\sfN}{\mathsf{N}}

\DeclarePairedDelimiterX{\infdivx}[2]{(}{)}{%
  #1\;\delimsize\|\;#2%
}

\newcommand{\opnorm}[1]{\norm{#1}_{\mathsf{op}}}
\newcommand{\bigopnorm}[1]{\left\| #1 \right\|_{\mathsf{op}}}

\newcommand{\e}{\varepsilon}

\newcommand{\bbS}{\mathbb{S}}

\newcommand{\distconv}{\stackrel{d}{\rightsquigarrow}}

\newcommand{\rmd}{\mathrm{d}}

\newcommand{\ent}{\mathsf{Ent}}

\newcommand{\instance}{\left(A,\calW\right)}

\allowdisplaybreaks

\begin{document}
\title[CLT-Optimal Error Bounds in LDS Identification]{CLT-Optimal Parameter Error Bounds for Linear System Identification}

\author{Yichen Zhou}
\author{Stephen Tu}

\address{Ming Hsieh Department of Electrical and Computer Engineering, University of Southern California, Los Angeles, California, USA.}
\email{yzhou222@usc.edu}
\email{stephen.tu@usc.edu}

\begin{abstract}
There has been remarkable progress over the past decade in establishing finite-sample, non-asymptotic bounds on recovering unknown system parameters from observed system behavior. Surprisingly, however, we show that the current state-of-the-art bounds do not accurately capture the statistical complexity of system identification, even in the most fundamental setting of estimating a discrete-time linear dynamical system (LDS) via ordinary least-squares regression (OLS). Specifically, we utilize asymptotic normality to identify classes of problem instances for which current bounds overstate the squared parameter error, in both spectral and Frobenius norm, by a factor of the state-dimension of the system. Informed by this discrepancy, we then sharpen the OLS parameter error bounds via a novel second-order decomposition of the parameter error, where crucially the lower-order term is a matrix-valued martingale that we show correctly captures the CLT scaling. From our analysis we obtain finite-sample bounds for both (i) stable systems and (ii) the many-trajectories setting that match the instance-specific optimal rates up to constant factors in Frobenius norm, and polylogarithmic state-dimension factors in spectral norm.
\end{abstract}

\maketitle

\section{Introduction}

System identification—the task of recovering unknown parameters of a dynamical system from observed system behavior—is a central problem in control. While system identification has a long and rich history in the control theory literature, the rise of machine learning approaches to control has led to a recent surge of results, particularly in establishing non-asymptotic, finite-sample guarantees for parameter recovery~\cite{simchowitz2018learning,sarkar2019near,Faradonbeh2018identification,Ziemann2023}. Such bounds are valuable both for characterizing the statistical limits of system identification, and for providing principled guidance on the data requirements needed to learn accurate models in practice.

Despite this remarkable progress, we argue that existing state-of-the-art bounds still fail to fully capture the statistical complexity of system identification—surprisingly even in the most basic setting of estimating the dynamics matrix of a discrete-time linear dynamical system (LDS) given one or many trajectories of state observations using ordinary least-squares (OLS) regression, a model known as the $\mathrm{VAR}(1)$ (vector auto-regressive) model which we focus on in this work. At first glance, this statement seems to contradict existing claims of minimax-optimality of OLS for this setting~\cite{simchowitz2018learning,jedra2019lowerbounds,jedra2020lti}. However, a more careful introspection reveals that minimax-optimal rates are only sharp on a carefully selected set of problem instances; they do \emph{not} guarantee sharp rates on every problem instance. 

Concretely, we consider two setups where asymptotic normality can be used to compute the exact instance-specific problem scaling: (i) strictly stable transition dynamics, and (ii) the many trajectories \cite{tu2024manytraj} setting, where the number of independent trajectories outnumber the dimension. 
We remark that (i) and (ii) are not mutually exclusive.
Our analysis reveals many problem instances where the sharpest known OLS bounds \emph{overestimate} the squared parameter estimation error, in either Frobenius or operator norm, by a factor of the state-dimension. These instances are characterized by a sufficiently fast decay of the eigenvalues of either the noise covariance matrix or the inverse state covariance matrix, so that trace $\Tr(M)$ of said matrices is significantly smaller than $d \opnorm{M}$, where $d$ is the state-dimension and $\opnorm{M}$ is the operator norm. A practical setting where this naturally arises is when disturbances enter the system in a non-uniform way, such as through an approximately low-rank subspace, or due to different physical scaling across state variables. 

We then turn to the problem of sharpening the non-asymptotic OLS bounds to match the instance-specific optimal rates from asymptotic normality. We establish sufficient conditions on the trajectory length in the case of stable transition dynamics, and on the number of trajectories in the many trajectories setting, for the squared parameter error of the OLS estimator to match the instance-specific optimal rate up to a constant factor in Frobenius norm, and up to $\mathrm{polylog}(d)$ factors in operator norm. At a high level, the deficiency of existing approaches is rooted in the standard decomposition of the OLS error introduced by \cite{simchowitz2018learning}, which splits the OLS error into two terms that are analyzed separately: (i) a self-normalized martingale term~\cite{abbasi2011online}, and (ii) the minimum eigenvalue of the empirical state covariance matrix. We propose a novel second-order decomposition where the lowest order term becomes a simpler matrix-valued martingale that captures the CLT scaling. In Frobenius norm analyzing said martingale is immediate thanks to the inner-product structure, but for the operator norm extracting the CLT scaling is non-trivial and relies on the machinery of non-commutative Burkholder inequalities~\cite{randrianantoanina07,junge2008}. Returning back to our second-order decomposition, the new higher-order term places a stronger requirement on the empirical state covariance—namely a $p$-th moment approximate isometry condition—for which we establish by taking inspiration from the Hanson-Wright approach described in \cite{jedra2020lti}.

Our paper is organized as follows. In \Cref{sec:problemstatement}, we formalize the problem setting, discuss in detail what target rates we should expect from asymptotic normality, 
and illustrate examples where existing bounds are loose. In \Cref{sec:mainresult}, we state our main non-asymptotic results
for both Frobenius and operator norms.
\Cref{sec:proof_ideas} provides a detailed proof outline,
and \Cref{sec:conclusion} concludes with future directions. All proof details are deferred to \Crefrange{appendix:asymp}{sec:operatorfull}.

\subsection{Related Work}

We focus our literature discussion on 
both asymptotic and finite-sample analysis of 
parameter estimation in discrete-time linear dynamical systems with full state observation;
see \cite{Ziemann2023,hazan2025research} for a broader overview of recent
non-asymptotic results and perspectives, and \cite{ljung1999system} for
a classical treatment.

The starting point for non-asymptotic error analysis of the OLS estimator
is due to \cite{simchowitz2018learning}, which controls the operator norm of the parameter error via
decomposition into a self-normalized process and the minimum eigenvalue of 
empirical covariance. In the case where the underlying process is
stable, \cite{jedra2020lti} improves the required sample complexity
for $(\e,\delta)$-PAC,
although as $\e \to 0$ the leading order term is the same.
We will show that, in the case where the noise matrix
is non-isotropic, these results can be loose.
The work \cite{ziemann2023noiselevel} controls the excess risk (squared parameter error in a weighted Frobenius norm by the process covariance) of OLS
for the stable setting, and matches the CLT variance order-wise;
such a bound immediately yields control of the squared Frobenius norm
of the parameter error.
However as we will also discuss, this conversion
introduces un-necessary conservatism in the bound.
Most related to this paper is the work \cite{tu2024manytraj},
which initiates a study of multi-trajectory estimation for dependent 
linear regression (in contrast, all aforementioned papers consider
the single trajectory setting). Similar to \cite{ziemann2023noiselevel}, the presented
bounds are also for excess risk, and suffer from the same conversion issue.
One additional notable digression in \cite{tu2024manytraj} compared to the aforementioned papers is in controlling the \emph{expected value} of excess risk, instead of with high probability; this requires several key technical tools, which we build off of in this work.

On the asymptotic side, there is a well-established literature
on the strong consistency and asymptotic normality of single trajectory OLS 
for both autoregressive~\cite{lai1983autoregressive,lai1982leastsquares} and vector autoregressive~\cite{anderson1992asymptotic}
models. 
In \Cref{sec:asymptotic_normality_OLS} we will discuss these results, specifically \cite{anderson1992asymptotic}, in more detail.
However, we limit our discussions to the setting when 
the transition matrix 
is strictly stable, as 
single trajectory asymptotic distributions are significantly more
involved in the marginal and explosive cases~\cite{white1958limiting,phillips2013inconsistent}.
On the other hand, for the multi-trajectory setting, when the number of 
trajectories tends to infinity, we can utilize classical
$M$-estimation theory~\cite{Vaart1998}
to study the asymptotics regardless of the stability
of the dynamics matrix; we also discuss this in detail in \Cref{sec:asymptotic_normality_OLS}.

\section{Background and Problem Statement}
\label{sec:problemstatement}

Consider the linear dynamical system evolving in $\R^d$:
\begin{align}
    x_{t+1} = A x_t + w_t, \quad x_0=0, \label{eq:LDS}
\end{align}
where $\{w_{t}\}_{t\geq 0}$ is a noise process
satisfying $w_t \stackrel{\mathrm{i.i.d.}}{\sim} \calW$. 
We assume $\calW$ is zero-mean and has a finite positive definite covariance, which we denote $\Sigma_{\calW}:=\mathbb{E}_{w\sim\calW}\left[ww^{\T}\right].$ 
We observe $m$ i.i.d.\ trajectories from \eqref{eq:LDS}, 
in the form of $\calD_{m,T} := \{ \{x_t^{(i)}\}_{t=1}^{T+1} \}_{i=1}^{m}$.
The OLS estimator for $A$ is:
\begin{align}\label{eq:multitrajols}
    \hat{A}_{m,T} := Y_{m,T}^\T X_{m,T} (X_{m,T}^\T X_{m,T})^{-1},
\end{align}
where $X_{m,T} \in \R^{mT \times d}$ has rows $\{ \{x_t^{(i)} \}_{t=1}^{T} \}_{i=1}^{m}$
and $Y_{m,T} \in \R^{mT \times d}$ has rows $\{ \{x_t^{(i)}\}_{t=2}^{T+1} \}_{i=1}^{m}$.

Our goal is to study the squared parameter error
under both the Frobenius and operator (spectral) norm. Specifically, denoting a problem instance by a tuple $\instance$, we seek optimal \textit{instance-specific} upper-bounds $\gamma_{F}$ and $\gamma_{\mathsf{op}}$:
\begin{subequations}\label{eq:probstatementinformal}
\begin{align}
\E\big\|\hat{A}_{m,T}-A \big\|_F^2 
&\leq \gamma_{F}\instance, \label{eq:probstatementinformal:frob}\\
\E\big\|\hat{A}_{m,T}-A\big\|_{\mathsf{op}}^2 
&\leq \gamma_{\mathsf{op}}\instance. \label{eq:probstatementinformal:op}
\end{align}
\end{subequations}
where the expectation is taken w.r.t.\ the training data $\calD_{m,T}$.

In this paper, we consider two primary problem regimes.
The first is the \emph{strictly stable} regime, 
where the dynamics matrix $A$ satisifes $\rho(A) < 1$, with $\rho(\cdot)$ denoting
the spectral radius. The second is the \emph{many trajectories} regime~\cite{tu2024manytraj},
where no assumptions are made on $A$, but instead we assume that $m \geq C d$, 
where $C$ is a universal constant \emph{independent} of $A$.
Note that these two regimes are \emph{not} mutually exclusive; 
for problem instances that fall into both regimes, both sets of 
corresponding results apply.

\subsection{Asymptotic Normality of OLS}
\label{sec:asymptotic_normality_OLS}

We first discuss the asymptotic normality of the
OLS estimator, which is the key tool for us in
establishing the correct forms of
the bounds $\gamma_F$ and $\gamma_{\mathsf{op}}$ in
\eqref{eq:probstatementinformal}.\footnote{\Cref{appendix:asymp} contains the proofs for all claims in this subsection.}

In our formulation, there are two variables, $m$ and $T$, that 
can either be separately or jointly sent to infinity, depending on the problem 
instance.
We first consider holding $T$ fixed and taking $m$ to infinity, which
does not require any assumptions on $A$.
Indeed, by classical asymptotic normality of $M$-estimation (see e.g.~\cite{Vaart1998}),
as $m \to \infty$,
\begin{align}\label{eq:frobeniuscltm}
    \sqrt{m} \cdot \vec( \hat{A}_{m,T} - A ) \distconv \sfN\left(0, \left(\Gamma_T^{-1}/T\right) \otimes \Sigma_{\calW} \right),
\end{align}
where $\Gamma_T := \frac{1}{T} \sum_{t=1}^{T} \Sigma_t$ and $\Sigma_t := \sum_{k=0}^{t-1} A^k \Sigma_{\calW} (A^k)^\T$.

On the flip side, in order to send $T$ to infinity (either while holding $m$ fixed or 
jointly tending to infinity as well), we need to impose some assumptions on $A$.
In general, the OLS estimator is not consistent when $A$ is irregular, i.e., the geometric multiplicity of any
unstable eigenvalues of $A$ exceeds one~(see e.g.,~\cite{phillips2013inconsistent,Faradonbeh2018identification,sarkar2019near}).
Furthermore, when $A$ is regular but not strictly stable, the
limiting error distributions are not Gaussian~\cite{white1958limiting,chan1988limiting} and require functional CLT arguments to study.
Hence, for sending $T$ to infinity, we will focus only on
strictly stable dynamics. 
Specifically, when $\rho(A) < 1$, extending classical results such as \cite[Theorem 1]{anderson1992asymptotic}, we have as $T\to\infty$,
\begin{align}\label{eq:frobeniuscltt}
    \sqrt{T} \cdot \vec(\hat{A}_{m,T} - A ) \distconv \sfN\left(0, \left(\Sigma^{-1}_{\infty}/m\right) \otimes \Sigma_{\calW} \right),
\end{align}
where $\Sigma_\infty := \sum_{t=0}^{\infty} A^t \Sigma_{\calW}(A^t)^\T$.

We note that the right-hand-side of both \eqref{eq:frobeniuscltm} and \eqref{eq:frobeniuscltt} can be further unified via a joint limit on $(m, T)$.
Let $\phi : \N_+ \mapsto \N_+$ be any function such that $\phi(m) \to \infty$ as $m \to \infty$.
Then for the joint limit $(m, T_m)$ with $T_m := \phi(m)$, as $m \to \infty$,
one can show using the Lindeberg-Feller CLT \cite[Proposition 2.27]{Vaart1998},
\begin{equation}\label{eq:frobeniuscltjoint}
\sqrt{m T_m} \cdot \vec(\hat{A}_{m,T_m} - A ) \distconv \sfN\left(0, \Sigma^{-1}_{\infty} \otimes \Sigma_{\calW} \right).
\end{equation}

With the asymptotic limits 
\eqref{eq:frobeniuscltm},
\eqref{eq:frobeniuscltt}, and
\eqref{eq:frobeniuscltjoint} in place,
we now discuss how these limit distributions imply
the correct form of both $\gamma_F$ and $\gamma_{\mathsf{op}}$.
The key idea is the following fact.
Suppose $\{ \Delta_n \}_{n \geq 1}$ is a sequence of random vectors in $\R^k$ satisfying
$\sqrt{n} \Delta_n \distconv \sfN(0, V)$
and $\norm{\cdot}$ is an arbitrary norm on $\R^k$. 
If the sequence $\{ n \norm{\Delta_n}^2 \}_{n \geq 1}$ is uniformly integrable, then $\lim_{n \to \infty} n \cdot \E\norm{\Delta_n}^2 = \E_{g \sim \sfN(0, I_k)}\norm{V^{1/2} g}^2$.
Hence we will utilize the expression $\E_{g}\norm{V^{1/2} g}^2$ to compute
what the optimal form of both
$\gamma_F$ (with $\norm{\cdot}$ the $\ell_2$-norm)
and $\gamma_{\mathsf{op}}$ (with $\norm{\cdot} = \opnorm{\mathrm{mat}(\cdot)}$) should be.

\subsection{Asymptotic Analysis in Frobenius Norm}

For the Frobenius norm, 
from the limit distributions in \Cref{sec:asymptotic_normality_OLS},
we immediately identify our target goal for the quantity $\gamma_F = \gamma_F\instance$ in \eqref{eq:probstatementinformal:frob} as the following:
\begin{focusbox}
\hypertarget{goal1}{\textbf{Goal 1:}} Show under certain requirements on $\instance$, \eqref{eq:probstatementinformal:frob} holds with:
\begin{align}
    \gamma_F \lesssim \frac{\Tr(\Sigma_{\calW}) \Tr(\Gamma_T^{-1})}{mT}. \label{eq:probstatementinformal:target_frob}
\end{align}
\end{focusbox}
 
Note that the goal \eqref{eq:probstatementinformal:target_frob} is consistent with both
$m \to \infty$ limits \eqref{eq:frobeniuscltm}
as well as $T \to \infty$ limits \eqref{eq:frobeniuscltt}, \eqref{eq:frobeniuscltjoint},
as $\Gamma_T \to \Sigma_\infty$ with $T \to \infty$ whenever $A$ is strictly stable.

We now remark on the difference between the stated goal \eqref{eq:probstatementinformal:target_frob} and existing results in the literature.
We first compare to existing work in the strictly stable regime, which 
are stated for a single trajectory ($m=1$).
Here, most parameter error bounds in the literature are
in terms of the
{operator norm} of the error, which we will also discuss shortly.
However, via the norm equivalence inequality 
$\norm{ \hat{A}_{m,T} - A }_F^2 \leq d \opnorm{ \hat{A}_{m,T} - A }^2$,
operator norm bounds also imply Frobenius norm bounds.
Furthermore, most bounds work under the simplification $\Sigma_{\calW} = \sigma^2 I_d$,
which we will further assume here for comparison.
Let $K := \max_{i \in [d]} \norm{w_i}_{\psi_2}$ denote the maximum $\psi_2$-norm of each
coordinate of $w \sim \calW$, and suppose all coordinates are independent.
Note that by definition, $K \geq \sigma$.
A prototypical operator norm bound is from 
\cite{jedra2020lti}, which states that for strictly stable $A$, when $T \lambda_{\min}(\Gamma_T) \gtrsim \max\{K^2,K^4\} \mathcal{J}(A)^2 (d + \log(1/\delta))$ where $\mathcal{J}(A) := \sum_{t\geq0} \opnorm{A^t}$,
with probability at least $1-\delta$,\footnote{For this comparison,
we ignore the difference between expected value and high probability.
However, we note that converting the typical high probability result
in the literature to an
expected value bound is non-trivial, as typical results require e.g., the trajectory 
length $T$ to scale as $\log(1/\delta)$ where $\delta$ is the failure probability.
}
$\opnorm{ \hat{A}_{1,T} - A }^2 \lesssim \frac{K^2(d + \log(1/\delta))}{T \lambda_{\min}(\Sigma_\infty)}$.
Using the norm inequality:
\begin{align}
    \norm{ \hat{A}_{1,T} - A}_F^2 \lesssim \frac{K^2 d(d + \log(1/\delta))}{T \lambda_{\min}(\Sigma_\infty)}. \label{eq:goal1_loose_bound}
\end{align}
We see that this is qualitatively looser than \eqref{eq:probstatementinformal:target_frob} in \hyperlink{goal1}{Goal 1},
since both $\sigma^2 \leq K^2$ and $\Tr(\Sigma_\infty^{-1}) \leq d/\lambda_{\min}(\Sigma_\infty)$.
\cite[Theorem 5.8]{tu2024manytraj} also states a similar bound
as \eqref{eq:goal1_loose_bound} in expectation (the bound has extra log-factors
due to it being applicable to marginally stable dynamics; these can be removed
by specializing the proof to strictly stable $A$).

Let us now compare to existing results in the many trajectories setting.
The most relevant result is \cite[Theorem 5.5]{tu2024manytraj}, which states that as long as $m \gtrsim d$, we have that the \emph{excess risk} is optimally controlled as:
\begin{align*}
    \E\big\| \hat{A}_{m,T} - A \big\|^2_{\Gamma_T}\lesssim \frac{K^2 d^2}{mT}, \quad \norm{M}_{\Gamma_T}^2 := \Tr( M \Gamma_T M^\T ).
\end{align*}
While this excess risk bound is optimal, directly converting it to a Frobenius error 
bound via 
$\norm{\hat{A}_{m,T} - A}^2_{\Gamma_T} \geq \lambda_{\min}(\Gamma_T) \norm{ \hat{A}_{m,T} - A}_F^2$
implies the result:
\begin{align}
    \E\big\| \hat{A}_{m,T} - A \big\|^2_F \lesssim \frac{K^2 d^2}{mT \lambda_{\min}(\Gamma_T)},
\end{align}
which again is looser than the stated goal \eqref{eq:probstatementinformal:target_frob}.

We emphasize the difference between these two bounds.
For concreteness we focus on the strictly stable, 
single trajectory setting
with $\Sigma_{\calW} = I_d$ and
$A = \mathrm{diag}( ( \sqrt{1-1/i^2} )_{i=1}^{d})$.
Immediately, we have $\Tr( \Sigma_\infty^{-1} ) = \sum_{i=1}^{d} i^{-2} = O(1)$
and $\lambda_{\min}(\Sigma_\infty) = 1$.
Hence, the bound in \eqref{eq:goal1_loose_bound} yields
$\norm{ \hat{A}_{1,T} - A }_F^2 \lesssim \tilde{O}( K^2 d^2/T )$,
whereas \eqref{eq:probstatementinformal:target_frob} has the form
$\gamma_F \lesssim d/T$.
Here, we see two sources of gaps. First,
the gap between $K$ and one can be arbitrarily large,
even for scaled isotropic noise.\footnote{Consider the r.v.\ $\Pr(X = \pm M) = 1/(2M^2)$
and $\Pr(X=0) = 1-M^{-2}$.
Here, $\norm{X}_{\psi_2}^2/\E[X^2] = M^2/\log(1+M^2) \to \infty$ as $M \to \infty$.
}
Second, for state covariance matrices 
where the spectrum is highly non-uniform, 
there can be a non-trivial dimension factor gap
between the state-of-the-art current rates
and the optimal rate
\eqref{eq:probstatementinformal:target_frob}.\footnote{
We note that this example essentially illustrates the worst-case gap,
as $\opnorm{M} \leq \Tr(M) \leq d \opnorm{M}$ for any $d \times d$ PSD matrix $M$.
}

\subsection{Asymptotic Analysis in Operator (Spectral) Norm}

Computing the correct form of $\gamma_{\mathsf{op}}$
for the operator norm is more involved,
as the
limiting asymptotic risk involves computing the largest singular value 
of a Gaussian random matrix, which does not typically admit a closed form expression. 
Concretely, for e.g., \eqref{eq:frobeniuscltm}, we have
$\lim_{m \to \infty} m \cdot \E\opnorm{\hat{A}_{m,T} - A}^2 = \E_{g\sim\sfN(0, I_{d^2})} \opnorm{\mathrm{mat}( (\Gamma_T^{-1/2}/\sqrt{T} \otimes \Sigma_{\calW}^{1/2}) g)}^2$.
The RHS expression is equal to $T^{-1} \cdot \E_{G} \opnorm{ \Sigma_{\calW}^{1/2} G \Gamma_T^{-1/2} }^2$, where $G$ is a $d \times d$ matrix with i.i.d.\ $\sfN(0, 1)$ entries. 
Fortunately, the correct order of this expression is
available via the Gaussian Chevet inequality~\cite[Section 8.6]{vershynin2018high}, as summarized below.
\begin{restatable}{myprop}{gaussianmatrixopnorm}\label{prop:gaussian_matrix_opnorm_general}
Let $G \in \R^{n \times p}$ have i.i.d.\ $\sfN(0, 1)$ entries, and $A \in \R^{m \times n}, B \in \R^{p \times q}$ be fixed matrices.
We have:
\begin{align*}
    \E\opnorm{A G B}^2 \asymp \opnorm{A}^2 \norm{B}_F^2 + \norm{A}_F^2 \opnorm{B}^2.
\end{align*}
\end{restatable}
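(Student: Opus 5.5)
We prove matching lower and upper bounds separately. Write $X := AGB$. The lower bound is elementary; the upper bound uses Gaussian Chevet followed by Gaussian concentration to pass from first to second moments.

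\emph{Lower bound.} Fix a unit vector $v \in \R^q$. Then $GBv \sim \sfN(0, \norm{Bv}_2^2 I_n)$, so
\begin{align*}
\E\opnorm{X}^2 \geq \E\norm{AGBv}_2^2 = \norm{Bv}_2^2 \norm{A}_F^2 .
\end{align*}
Choosing $v$ to be a top right singular vector of $B$ gives $\E\opnorm{X}^2 \geq \opnorm{B}^2\norm{A}_F^2$. Symmetrically, fix a unit vector $u$ and note that $\opnorm{X} \geq \norm{u^\T X}_2$ and $u^\T A G \sim \sfN(0, \norm{A^\T u}_2^2 I_p)^\T$. This gives $\E\opnorm{X}^2 \geq \norm{A^\T u}_2^2 \norm{B}_F^2$, and maximizing over $u$ yields $\opnorm{A}^2\norm{B}_F^2$. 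Averaging the two bounds gives the $\gtrsim$ direction with constant $1/2$.

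\emph{Upper bound.} Write
\begin{align*}
\opnorm{X} = \sup_{x \in S^{q-1},\, y \in S^{m-1}} \ip{G}{A^\T y\, x^\T B^\T} = \sup_{u \in T_1,\, v\in T_2} \ip{Gv}{u},
\end{align*}
where $T_1 = A^\T S^{m-1} \subset \R^n$ and $T_2 = B S^{q-1} \subset \R^p$. The Gaussian Chevet inequality \cite[Section 8.6]{vershynin2018high} bounds the expected supremum of this bilinear Gaussian process by
\begin{align*}
w(T_1)\,\mathrm{rad}(T_2) + \mathrm{rad}(T_1)\, w(T_2),
\end{align*}
up to an absolute constant. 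Here $w(\cdot)$ is the Gaussian width and $\mathrm{rad}(\cdot)$ is the radius. We have $\mathrm{rad}(T_1) = \opnorm{A}$ and $\mathrm{rad}(T_2) = \opnorm{B}$. For the widths, $w(T_1) = \E\sup_{y}\ip{g}{A^\T y} = \E\norm{Ag}_2 \leq \norm{A}_F$ by Jensen, and likewise $w(T_2) \leq \norm{B}_F$. Hence
\begin{align*}
\E\opnorm{X} \lesssim \norm{A}_F\opnorm{B} + \opnorm{A}\norm{B}_F .
\end{align*}

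\emph{From first to second moment.} The map $G \mapsto \opnorm{AGB}$ is Lipschitz in Frobenius norm with constant $L = \opnorm{A}\opnorm{B}$. By Gaussian concentration (Poincaré suffices), $\mathrm{Var}\,\opnorm{X} \leq L^2$. Therefore
\begin{align*}
\E\opnorm{X}^2 \leq (\E\opnorm{X})^2 + \opnorm{A}^2\opnorm{B}^2 .
\end{align*}
Since $\opnorm{A}\opnorm{B} \leq \opnorm{A}\norm{B}_F$, the extra term is absorbed, which proves the $\lesssim$ direction.

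The main obstacle is the Chevet step, specifically identifying the index sets $T_1, T_2$ correctly and checking that their widths are controlled by the Frobenius norms. Once that is done, the lower bound and the variance step are routine.
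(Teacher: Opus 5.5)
Your proposal is correct and follows the paper's proof essentially step for step. The paper also uses Gaussian Chevet with index sets $A^\T \bbS^{m-1}$ and $B\bbS^{q-1}$, with widths bounded by Frobenius norms, for the first moment. It likewise uses the $\opnorm{A}\opnorm{B}$-Lipschitz property of $G \mapsto \opnorm{AGB}$ to pass to the second moment, and top singular vectors of $B$ and $A$ for the two lower bounds. The only cosmetic difference is that you get $\mathrm{Var}\,\opnorm{AGB} \leq \opnorm{A}^2\opnorm{B}^2$ directly from the Gaussian Poincar\'e inequality, while the paper invokes sub-Gaussian Lipschitz concentration, which gives the same bound up to a constant.
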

By \Cref{prop:gaussian_matrix_opnorm_general}, we have that
$\E_{G} \opnorm{ \Sigma_{\calW}^{1/2} G \Gamma_T^{-1/2} }^2 \asymp \opnorm{\Sigma_{\calW}} \Tr(\Gamma_T^{-1}) + \Tr(\Sigma_{\calW}) \opnorm{\Gamma_T^{-1}}$.
Similar then to the Frobenius norm case, this yields the following target
goal for the quantity $\gamma_{\mathsf{op}} = \gamma_{\mathsf{op}}\instance$ in 
\eqref{eq:probstatementinformal:op}.
\begin{focusbox}
\hypertarget{goal2}{\textbf{\mbox{Goal~2:}}} Show under certain requirements on $\instance$, 
\eqref{eq:probstatementinformal:op} holds with 
\begin{align}
    \gamma_{\mathsf{op}} \lesssim \frac{\opnorm{\Sigma_{\calW}} \Tr(\Gamma_{T}^{-1})}{mT} + \frac{\Tr(\Sigma_{\calW})}{mT \lambda_{\min}(\Gamma_T)}. \label{eq:probstatementinformal:target_op}
\end{align}
\end{focusbox}

As in the Frobenius norm case, \eqref{eq:probstatementinformal:target_op} is consistent with both
$m \to \infty$ limits \eqref{eq:frobeniuscltm}
as well as $T \to \infty$ limits \eqref{eq:frobeniuscltt}, \eqref{eq:frobeniuscltjoint},
as $\Gamma_T \to \Sigma_\infty$ with $T \to \infty$ whenever $A$ is strictly stable.

We again compare to the operator norm bound from 
\cite{jedra2020lti}, in the case when $\Sigma_{\calW} = \sigma^2 I_d$,
$m=1$,
and $A$ is strictly stable.
In this case, the target rate in \eqref{eq:probstatementinformal:target_op} 
simplifies to $\frac{\sigma^2 d}{T \lambda_{\min}(\Sigma_\infty)}$, which shows that while the dependence on
$d/\lambda_{\min}(\Sigma_\infty)$ is correct in
the rate from \cite{jedra2020lti},
the $\sigma^2$ vs.\ $K^2$ gap
persists in operator norm as well.
Modifications to the proof of \cite[Theorem 5.5]{tu2024manytraj} also show  
the same 
gap for many trajectories.

The issue is further exacerbated when $\Sigma_{\calW}$ is no longer a scaled identity matrix,
a setting that has received a lot less attention in the literature.
To the best of our knowledge, the only explicit result in the literature
covering non scaled-isotropic process noise is \cite{Faradonbeh2018identification}.
While their result handles general sub-Weibull process noise, we will instantiate 
their result for $\calW = \sfN(0, \Sigma_{\calW})$, i.e., sub-Weibull with $\alpha=2$.
From \cite[Corollary 1]{Faradonbeh2018identification}, when $A$ is strictly stable,
with probability at least $1-\delta$,
\begin{align}
    \opnorm{\hat{A}_{1,T} - A}^2 \lesssim \frac{ (\opnorm{\Sigma_{\calW}}\vee 1) \nu_{\calW}^2 \cdot \tilde{O}(\Psi(A)) }{T \lambda_{\min}(\Sigma_\infty)} , \label{eq:non_isotropic_noise_result_existing}
\end{align}
where $\nu_{\calW} := \max_{i \in [d]} (\Sigma_{\calW})_{ii}$,
$\Psi(A)$ is an expression that depends on various properties of $A$'s Jordan decomposition,
and $\tilde{O}(\cdot)$ hides log factors.
Note that when $\Sigma_{\calW} = \sigma^2 I_d$, 
\eqref{eq:non_isotropic_noise_result_existing} does not reduce to 
\eqref{eq:goal1_loose_bound}. A sharper result in the non-isotropic sub-Gaussian $\calW$
case can be derived
for both the strictly stable
and many trajectory setting by extending \cite{tu2024manytraj}.
For the strictly stable case for example,
\begin{align}
    \E\opnorm{\hat{A}_{1,T} - A}^2 \lesssim \frac{K_{\mathrm{vec}}^2 d}{T \lambda_{\min}(\Sigma_\infty)}, \label{eq:goal2_loose_bound}
\end{align}
where $K^2_{\mathrm{vec}} := \sup_{v \in \bbS^{d-1}} \norm{ \ip{v}{w} }^2_{\psi_2}$,
and similarly for the many trajectory setting.

Let us compare these bounds in the case where
$\Sigma_{\calW} = \diag((i^{-2})_{i=1}^{d})$
and $A = \diag( (\sqrt{1-1/i^4})_{i=1}^{d} )$,
in which case $\Sigma_\infty = \diag((i^2)_{i=1}^{d})$.
Here, \eqref{eq:goal2_loose_bound} yields
$\E\opnorm{\hat{A}_{1,T}-A}^2 \lesssim K^2_{\mathrm{vec}} d/T$, whereas 
\eqref{eq:probstatementinformal:target_op}
states $\gamma_{\mathsf{op}} \lesssim 1/T$.
We again see the two sources of gaps as
we did in the Frobenius norm case;
both in the gap between $K_{\mathrm{vec}}$ and one,
and due to non-isotropic state covariance matrices.

\subsection{Assumptions for Non-Asymptotic Analysis}

Towards setting up and presenting our main non-asymtotic results,
we first collect and discuss the various assumptions 
on the noise process $\{w_{t}\}_{t\geq 0}$ that we utilize in our analysis. 
Throughout the article, we utilize the following notation:
\begin{align*}
\calF_{t}:=\sigma(w_{0:t-1}),\quad \Pr_{t}(\cdot) := \Pr(\cdot \mid \calF_t ),\quad \E_{t}[\cdot]:=\E[\cdot\mid \calF_{t}],
\end{align*}
with the understanding that $\calF_0$ is the trivial $\sigma$-algebra.
It is clear that $\{w_{t}\}_{t\geq 0}$ is $\calF_{t+1}$-adapted and $\{x_{t}\}_{t\geq 0}$ is $\calF_{t}$-adapted. 
We begin by formalizing our aforementioned problem setting with an additional sub-Gaussianity condition. 

\begin{assmp}\label{assmp:sub_gaussian_noise}
We assume that $\{w_{t}\}_{t\geq 0}$ satisfies:
\begin{enumerate}[label=(\roman*), leftmargin=30pt, itemindent=0pt, listparindent=0pt, labelsep=.5em]
\item $w_t \stackrel{\mathrm{i.i.d.}}{\sim}\calW$, $\E_{w\sim \calW}[w]=0$, $\Sigma_{\calW}=\E_{w\sim \calW}[ww^{\T}]\succ 0$.
\item For $w\sim \calW$, $\bar{w} := \Sigma^{-1/2}_{\calW} w$ is {directionally $\subg$-sub-Gaussian}: for any $\lambda\in \R$,
$$
\sup_{v\in \mathbb{S}^{d-1}}\mathbb{E}\left[ \exp\left(\lambda\left\langle v,\bar{w}\right\rangle\right) \right]\leq \exp\left(\lambda^{2}\subg^{2}/2\right).
$$
\end{enumerate}
\end{assmp}
\Cref{assmp:sub_gaussian_noise} is fairly standard in the non-asymptotic LDS identification literature \cite{simchowitz2018learning,sarkar2019near,tu2024manytraj}, as it provides access to sub-Gaussian self-normalized martingale tail bounds \cite{abbasi2011online}. 
This assumption alone, however, is insufficient for our
purposes, as we also need to control the behaviour of the empirical covariance $\Sigma_{m,T} := (mT)^{-1} X_{m,T}^\T X_{m,T}$ as part of the OLS error.
We now introduce two additional assumptions. The first provides, in the two regimes we consider, high probability control on empirical covariance error $\opnorm{ \Sigma_{m,T} - \Gamma_T }$. 
To state the assumption, we denote the entropy functional of a nonnegative integrable function $f$ with respect to a probability distribution $\mu$ as:
$$
\ent_{\mu}(f):=\E_{x\sim\mu}[f(x)\log f(x)]-\E_{x\sim \mu}[f(x)]\log \E_{x\sim \mu}[f(x)].
$$
We also denote the whitened noise process $\{\bar{w}_{t}\}_{t\geq 0}:=\left\{\Sigma_{\calW}^{-1/2}w_{t}\right\}_{t\geq 0}$, and the whitened distribution of $\calW$ as $\bar{\calW}$, such that $\bar{w}_{t} \stackrel{\mathrm{i.i.d.}}{\sim} \bar{\calW}$.
\begin{assmp}\label{assmp:trajccp}
We assume that $\bar{\calW}$ satisfies \textbf{\emph{either}} of the following:
\begin{enumerate}[label=(\roman*), leftmargin=30pt, itemindent=0pt, listparindent=0pt, labelsep=.5em]
\item $\bar{\calW}=\otimes_{i=1}^{d}\bar{\calW}_{i}$ for one-dimensional distributions $\bar{\calW}_{i}$, i.e., $\bar{w}_{t}$ has independent coordinates.
\item $\bar{\calW}$ satisfies the log-Sobolev inequality $\mathrm{LS}(\subg^{2})$: for any continuously differentiable $f\in L^{2}(\bar{\calW})$ such that $\nabla f\in L^{2}(\bar{\calW})$,
$$
\ent_{\bar{\calW}}(f^{2})\leq 2\subg^{2} \E_{w\sim \bar{\calW}}\left[ \norm{\nabla f(w)}^{2}\right].
$$
\end{enumerate}
\end{assmp}
\Cref{assmp:trajccp} $(i)$ is fairly standard, having appeared in several prior works on LTI identification~\cite{sarkar2019near,jedra2020lti}.
On the other hand, \Cref{assmp:trajccp} $(ii)$ is non-standard, and we will discuss verifying it in a moment.
The role of \Cref{assmp:trajccp} is to allow us to obtain concentration of $\opnorm{ \Sigma_{m,T} - \Gamma_T }$ via different Hanson-Wright inequalities \cite{RudelsonVershynin2013,adamczak15}.
Specifically, \Cref{assmp:trajccp} $(i)$ gives access to the classical Hanson-Wright inequality \cite{RudelsonVershynin2013} requiring independence, here across time $t$ and across the coordinates of $\bar{w}_t$. 
On the other hand, \Cref{assmp:trajccp} $(ii)$
enables the use of a Hanson-Wright inequality under the convex concentration property~\cite{adamczak15},
where the whitened noise distribution $\bar{\calW}$ is allowed to have dependencies between coordinates, but must instead satisfy a geometric restriction. We note that for \Cref{assmp:trajccp} $(ii)$, requiring the constant for log-Sobolev inequality to coincide with the sub-Gaussian constant in \Cref{assmp:sub_gaussian_noise} $(ii)$ is without loss of generality, as $\bar{\calW}$ satisfying $\mathrm{LS}(C)$ implies $\bar{\calW}$ is directionally $\sqrt{C}$-sub-Gaussian (cf.~\Cref{fact:lsitoccp}).

Now off the typical events where we have  concentration of $\opnorm{ \Sigma_{m,T} - \Gamma_T }$, we still require control on how degenerate $\Sigma_{m,T}$ can become. 
Hence, our next assumption,
from \cite[Definition 4.1]{tu2024manytraj},
yields control on the \emph{lower tail} of $\lambda_{\min}(\Sigma_{m,T})$, and
generalizes small-ball conditions in the i.i.d.\ setting~\cite{mourtada2022exact}.
\begin{assmp}[Trajectory small-ball (Traj-SB)]
\label{assmp:traj_small_ball}
Assume the state trajectory $\{x_{t}\}_{t\geq0}$ from \eqref{eq:LDS} defined by $(A,\calW)$
satisfies the {trajectory small-ball condition}: for any trajectory length $T$, there exists constants $c \geq 1$ and $\alpha \in (0, 1]$ such that for any excitation window size $k \in [T]$, 
window index $j \in \{1,\dots,\floor{T/k}\}$, $v \in \R^d \setminus \{0\}$, and $\e > 0$:
\begin{align*}
    \Pr_{(j-1)k}\left\{ \frac{1}{k} \sum_{t=(j-1)k+1}^{jk} \ip{v}{x_t}^2 \leq \e \cdot v^\T \Gamma_k v \right\} \leq (c \e)^\alpha\quad \mathrm{a.s.}
\end{align*}
\end{assmp}

We now discuss verifying \Crefrange{assmp:sub_gaussian_noise}{assmp:traj_small_ball}.
It turns out that a very broad family of distributions $\bar{\calW}$ can be readily shown to simultaneously satisfy these assumptions, namely the class of log-concave distributions.
$\bar{\calW}$ is said to be log-concave if it is (a) absolutely continuous w.r.t.\ the Lebesgue measure on $\R^d$ (denoted $\lambda_d$),
(b) supported on a convex set $S$, and
(c) the negative log density $\psi_{\bar{\calW}} := -\log \frac{\rmd \bar{\calW}}{\rmd \lambda_d}$ is convex on $S$.
By Carbery and Wright's classical work on 
anti-concentration for polynomials of log-concave distributions (see \cite[Theorem 8]{carbery2001distributional} and \cite[Example 4.6]{tu2024manytraj}),
a log-concave $\bar{\calW}$ implies \Cref{assmp:traj_small_ball}.
Next, it turns out that 
there are two cases where
additionally we have that \Cref{assmp:trajccp} $(ii)$ holds (which then implies \Cref{assmp:sub_gaussian_noise}).
The first case (\emph{strongly log-concave}, or SLC for short), is when $\psi_{\bar{\calW}}$ is twice-differentiable and $\nu^{-2}$-strongly-convex (i.e., $\nabla^2 \psi_{\bar{\calW}} \succcurlyeq \nu^{-2} I_d$ everywhere on $S$);
this follows from the classic Bakry-{\'{E}mery} criteria~\cite{Bakry2014}.
The second case (\emph{bounded log-concave}, or BLC for short), is when the support $S$ has diameter bounded as $O(\nu^2)$, which is the result of recent
work on stochastic localization~\cite{leekls2024}.
This discussion is formalized in the following lemma.
\begin{restatable}{mylemma}{logconcavetoeverything}\label{lemma:logconcavetoeverything}
Given a problem instance $\instance$, suppose that $\calW$ satisfies \Cref{assmp:sub_gaussian_noise} $(i)$, and $\bar{\calW}$ satisfies \textbf{\emph{either}} of the following (let $S$ denote the support of $\bar{\calW}$,
which we assume is convex):
\begin{enumerate}[leftmargin=45pt, itemindent=0pt, listparindent=0pt, labelsep=.5em]
\item[$\mathrm{(SLC)}$] $\psi_{\bar{\calW}}$ is 
twice-differentiable and $\subg^{-2}$-strongly-convex on $S$.
\item[$\mathrm{(BLC)}$] $\psi_{\bar{\calW}}$ is
convex and $S$ has a bounded diameter $O(\subg^2)$.
\end{enumerate}
Then \Crefrange{assmp:sub_gaussian_noise}{assmp:traj_small_ball} are satisfied.
\end{restatable}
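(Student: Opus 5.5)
The plan is to verify the three assumptions in turn. Each one reduces to a known functional inequality or anti-concentration result for log-concave measures, so the proof mostly consists of invoking those results and checking that their constants match $\subg$.

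\textbf{Step 1: \Cref{assmp:trajccp} $(ii)$.} In the (SLC) case, $\psi_{\bar{\calW}}$ is $\subg^{-2}$-strongly convex, so the Bakry--\'{E}mery criterion~\cite{Bakry2014} gives $\mathrm{LS}(\subg^2)$ directly. The Hessian lower bound $\nabla^2\psi_{\bar{\calW}}\succcurlyeq \subg^{-2}I_d$ is exactly the curvature condition $\mathrm{CD}(\subg^{-2},\infty)$. If $S\neq\R^d$, I would use the standard convex-domain version, in which the density is restricted to a convex set. Alternatively, I would approximate $\bar{\calW}$ by measures with densities proportional to $e^{-\psi_{\bar{\calW}}-n\,\mathrm{dist}(x,S)^2}$ and pass to the limit, since LSI is stable under weak limits with uniform constants. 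In the (BLC) case, I would invoke the stochastic-localization result of~\cite{leekls2024}: a log-concave measure whose support has diameter $D$ satisfies a log-Sobolev inequality with constant $O(D)$. With $D=O(\subg^2)$, this gives $\mathrm{LS}(O(\subg^2))$. Absorbing the absolute constant into $\subg$ yields $\mathrm{LS}(\subg^2)$; as the paper notes, this renaming is harmless.

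\textbf{Step 2: \Cref{assmp:sub_gaussian_noise}.} Part $(i)$ is assumed. For part $(ii)$, I would apply Herbst's argument (\Cref{fact:lsitoccp}). The function $f(w)=\langle v,w\rangle$ with $v\in\bbS^{d-1}$ is $1$-Lipschitz, so $\mathrm{LS}(\subg^2)$ gives $\E e^{\lambda(f-\E f)}\le e^{\lambda^2\subg^2/2}$. Since $\E\bar w=0$, this is exactly directional $\subg$-sub-Gaussianity.

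\textbf{Step 3: \Cref{assmp:traj_small_ball}.} This is the step that needs the most care. Condition on $\calF_{(j-1)k}$ and write the window quantity $Q=\frac1k\sum_{t}\langle v,x_t\rangle^2$. Each $x_t$ in the window equals a deterministic $\calF_{(j-1)k}$-measurable term $A^{t-(j-1)k}x_{(j-1)k}$ plus a linear combination of the fresh noises $w_{(j-1)k},\dots,w_{t-1}$. Hence $Q$ is a degree-$2$ polynomial in the independent log-concave vectors $\bar w_s$, and their product measure is log-concave on $\R^{dk}$.

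The Carbery--Wright inequality~\cite[Theorem 8]{carbery2001distributional} gives
\[
\Pr(Q\le \e\,\E Q)\le C(\e)^{1/2}
\]
with an absolute constant $C$. Here $\E Q$ denotes the expectation over the fresh noise only. Because $Q$ is a sum of squares, its conditional mean satisfies $\E Q\ge v^\T\Gamma_k v$. This holds since the mean shift only adds nonnegative terms, and the covariance of the fresh part is exactly $\frac1k\sum_{t}\Sigma_{t-(j-1)k}=\Gamma_k$. Therefore $\Pr(Q\le\e v^\T\Gamma_kv)\le\Pr(Q\le \e\E Q)\le (c\e)^{1/2}$, giving $\alpha=1/2$ and $c=C^2$.

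The main obstacles are the indexing and the inequality $\E Q\ge v^\T\Gamma_k v$, which holds uniformly over the conditioning. This is exactly the argument of~\cite[Example 4.6]{tu2024manytraj}, which I would cite, noting that whitening by $\Sigma_{\calW}^{1/2}$ preserves log-concavity and does not affect the argument.
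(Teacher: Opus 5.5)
Your proposal is correct and follows the paper's route: Bakry--\'Emery (resp.\ the stochastic-localization bound) gives $\mathrm{LS}(\nu^2)$, Herbst's argument gives directional sub-Gaussianity, and Carbery--Wright on the product log-concave noise gives Traj-SB. Your Step~3 is in fact more complete than the paper's write-up, which only treats the first window started from $x_0=0$; your observation that the conditional mean satisfies $\E Q\geq v^\T\Gamma_k v$ handles every window index $j$ uniformly. One small precision: the $O(D)$ log-Sobolev bound from stochastic localization holds for \emph{isotropic} log-concave measures (by scaling it cannot hold in general), so it applies here only because $\bar{\calW}$ is whitened.
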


We conclude our discussion on the noise assumptions by noting that although the natural intersection of \Crefrange{assmp:sub_gaussian_noise}{assmp:traj_small_ball} are log-concave distributions (as presented in \Cref{lemma:logconcavetoeverything}), we present \Cref{assmp:trajccp} $(i)$ as there readily exist product measures of one-dimensional sub-Gaussian distributions that violate the log-Sobolev inequality; see \cite[Section 2]{adamczak2005logarithmic}. 
However, we leave a general distributional characterization that implies
\Cref{assmp:sub_gaussian_noise},
\Cref{assmp:trajccp} $(i)$ and \Cref{assmp:traj_small_ball} as an open question.

Beside conditions on noise process, another important concept for the analysis of LDS identification is stability of $A$. 
Here we utilize a quantified definition of stability equivalent to the typical spectral radius definition.

\begin{mydef}[Strict stability]\label{assmp:strict_stability_main}
$A$ is $(M, \rho)$-\emph{strictly-stable} if
there exists $M>0$ and $\rho<1$ such that $\opnorm{A^k} \leq M \rho^k$ for all $k \in \N$.
\end{mydef}

For $A$ that is $(M, \rho)$-strictly-stable, we 
define the following quantity:
\begin{align}
    \kappa(A) := 2\left\lceil \frac{1}{\log(1/\rho)} \log\left( \tfrac{\sqrt{2\opnorm{\Sigma_{\calW}}}M}{\sqrt{ (1-\rho^{2})\lambda_{\min}(\Sigma_{\infty})}} \right)   \right\rceil. \label{eq:kappa}
\end{align}
$\kappa(A)$ is intuitively a ``burn-in'' time for stable systems, in a sense that for any $T\geq \kappa(A)$, $\Gamma_{T}$ and $\Sigma_{\infty}$ are sufficiently approximately isometric for analysis purposes. Such a ``burn-in'' property plays an important role in obtaining sharp results for stable systems. We will revisit in detail the above discussions in \Cref{sec:proof_sketch:T2}.

\section{Main Results}
\label{sec:mainresult}

We now state our main results in light of \hyperlink{goal1}{Goal 1} and \hyperlink{goal2}{Goal 2}. For what follows, we denote 
the parameter error $\hat{\Delta}_{m,T} := \hat{A}_{m,T} - A$.
We start with the Frobenius norm case.
\begin{mythm}[Frobenius norm case]
\label{thm:main_multitrajmain}
Suppose the problem instance $\instance$ 
satisfies \Crefrange{assmp:sub_gaussian_noise}{assmp:traj_small_ball}.
Then we have for any $q > 0$,

\begin{enumerate}[label=(\roman*), leftmargin=30pt, itemindent=0pt, listparindent=0pt, labelsep=.5em]
    \item \underline{Many trajectories:} If $m \gtrsim d$, then
\begin{align*}
    \E\norm{\hat{\Delta}_{m,T}}_F^2 \leq (1+q) \frac{\Tr(\Sigma_{\calW}) \Tr(\Gamma_T^{-1})}{mT} + (1+q^{-1}) \frac{ c_1 \nu^{6}d^{3}\opnorm{\Sigma_{\calW}}}{\lambda_{\min}(\Gamma_T)m^{2}T } .
\end{align*}
    \item \underline{Stable dynamics:} If $A$ is $(M,\rho)$-strictly-stable (cf.~\Cref{assmp:strict_stability_main}), 
    $T \geq\kappa(A)$, 
    and $mT\gtrsim \max\left\{\kappa(A), \frac{M}{1-\rho} \sqrt{ \frac{ \opnorm{\Sigma_{\calW}}}{ \lambda_{\min}(\Gamma_\infty) }} \right\} d$, then
\begin{align*}
    \E\norm{\hat{\Delta}_{m,T}}_F^2 \leq (1+q) \frac{\Tr(\Sigma_{\calW}) \Tr(\Gamma_T^{-1})}{mT} + (1+q^{-1}) \frac{ c_2 \nu^{6}d^{3}M\opnorm{\Sigma_{\calW}}^{3/2}}{(1-\rho) \lambda_{\min}(\Sigma_{\infty})^{3/2}(mT)^{2}}.
\end{align*}
    Recall that $\kappa(A)$ is defined \eqref{eq:kappa}.
\end{enumerate}
Here, $c_1,c_2$ are universal constants.
\end{mythm}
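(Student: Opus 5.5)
The plan follows the second-order decomposition advertised in the introduction. Write $W$ for the matrix of noises stacked against $X := X_{m,T}$, and set $N := mT$ and $\Sigma_{m,T} = N^{-1}X^\T X$. Then
\[
\hat\Delta_{m,T} = \tfrac1N W^\T X\,\Sigma_{m,T}^{-1}.
\]
Using the identity $\Sigma_{m,T}^{-1} = \Gamma_T^{-1} + \Sigma_{m,T}^{-1}(\Gamma_T - \Sigma_{m,T})\Gamma_T^{-1}$, we split
\[
\hat\Delta_{m,T} = \underbrace{\tfrac1N W^\T X\Gamma_T^{-1}}_{=:L} + \underbrace{\hat\Delta_{m,T}(\Gamma_T - \Sigma_{m,T})\Gamma_T^{-1}}_{=:H}.
\]
Applying $\norm{a+b}^2 \le (1+q)\norm{a}^2 + (1+q^{-1})\norm{b}^2$ reduces the theorem to two bounds: $\E\norm{L}_F^2 \le \Tr(\Sigma_{\calW})\Tr(\Gamma_T^{-1})/N$, and an $O(1/N^2)$ bound on $\E\norm{H}_F^2$.

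The leading term is exact. $L$ is a matrix martingale $\frac1N\sum_{i,t} w_t^{(i)} x_t^{(i)\T}\Gamma_T^{-1}$, and since $x_t$ is $\calF_t$-measurable while $w_t$ is independent of $\calF_t$ with covariance $\Sigma_{\calW}$, the cross terms vanish. Hence
\[
\E\norm{L}_F^2 = \frac{1}{N^2}\sum_{i,t}\Tr(\Sigma_{\calW})\,\E\bigl[x_t^\T\Gamma_T^{-2}x_t\bigr] = \frac{\Tr(\Sigma_{\calW})}{N^2}\, mT\Tr(\Gamma_T^{-2}\Gamma_T) = \frac{\Tr(\Sigma_{\calW})\Tr(\Gamma_T^{-1})}{N}.
\]

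For the higher-order term, write $\Sigma_{m,T}^{-1}$ in $\hat\Delta_{m,T}$ as $\Gamma_T^{-1/2}\tilde S^{-1}\Gamma_T^{-1/2}$, where $\tilde S := \Gamma_T^{-1/2}\Sigma_{m,T}\Gamma_T^{-1/2}$. Then
\[
\norm{H}_F \le \bignorm{\tfrac1N W^\T X\Gamma_T^{-1/2}}_F\,\opnorm{\tilde S^{-1}}\,\opnorm{I-\tilde S}\,\opnorm{\Gamma_T^{-1/2}}.
\]
I apply H\"older/Cauchy--Schwarz to separate the factors:
\begin{enumerate}[label=(\alph*)]
\item The self-normalized-free martingale factor has $\E\norm{\cdot}_F^4 \lesssim (\nu^2\opnorm{\Sigma_{\calW}} d^2/N)^2$, via a Burkholder/Rosenthal inequality for vector martingales together with sub-Gaussianity from \Cref{assmp:sub_gaussian_noise}.
\item A $p$-th moment approximate isometry bound $\E\opnorm{I-\tilde S}^{p} \lesssim (\nu^4 d/N')^{p/2}$. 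Here $N' = m$ in the many-trajectory case (where independence across trajectories drives concentration, using $m\gtrsim d$). In the stable case, $N'$ is $N$ up to the mixing factor $\frac{M}{1-\rho}\sqrt{\opnorm{\Sigma_{\calW}}/\lambda_{\min}(\Sigma_\infty)}$, with $T\ge\kappa(A)$ making $\Gamma_T\approx\Sigma_\infty$. This follows by integrating the tail of the Hanson--Wright inequality available under \Cref{assmp:trajccp}(i) or (ii), applied to the quadratic form $\ip{v}{x}^2$ in the stacked whitened noise, plus a net argument over $v$.
\item A moment bound $\E\opnorm{\tilde S^{-1}}^{p} \lesssim 1$. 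On the good event $\opnorm{I-\tilde S}\le 1/2$ this is immediate. Off it, I control the lower tail of $\lambda_{\min}$ using the trajectory small-ball \Cref{assmp:traj_small_ball} exactly as in \cite{tu2024manytraj}, which gives finite high moments once $N\gtrsim d$ times the burn-in factors, and combine with the small probability of the bad event.
\end{enumerate}
Multiplying the factors gives $\E\norm{H}_F^2 \lesssim \nu^6 d^3\opnorm{\Sigma_{\calW}}/(\lambda_{\min}(\Gamma_T) N N')$, which matches both stated remainder terms; in the stable case the extra $\sqrt{\opnorm{\Sigma_{\calW}}/\lambda_{\min}}\,M/(1-\rho)$ from the mixing enters through $N'$.

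The main obstacle is step (b)--(c): obtaining moment (not just high-probability) bounds on $\opnorm{I-\tilde S}$ and $\opnorm{\tilde S^{-1}}$ in the \emph{whitened} geometry $\Gamma_T^{-1/2}$, with dimension dependence only $d$ and without $\log(1/\delta)$ burn-ins. I would first try to prove the Hanson--Wright tail for $\sup_v$ of the normalized quadratic form with the operator norm of the block-Toeplitz matrix $\Gamma_T^{-1/2}$-whitened $L$ controlled by the $\kappa(A)$ burn-in, and then patch the tails with small-ball.
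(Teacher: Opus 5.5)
Your proposal is correct. It shares the paper's skeleton:
\begin{itemize}
\item the same split into $\hat Q_{m,T}\Gamma_T^{-1}$ plus a remainder, with the $(1+q),(1+q^{-1})$ inequality;
\item the same exact martingale computation of the leading term;
\item the same Hanson--Wright moment bounds for $\opnorm{I_d-\bar\Sigma_{m,T}}$;
\item the same small-ball inverse-eigenvalue moments, using window $\kappa(A)$ and $\Gamma_{\kappa(A)}\approx\Gamma_T$ in the stable case.
\end{itemize}
The one real difference is how you handle the ``OLS'' factor of the remainder.

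The paper writes the remainder as $\bar Q_{m,T}\bar\Sigma_{m,T}^{-1/2}(I_d-\bar\Sigma_{m,T})\bar\Sigma_{m,T}^{-1/2}\Gamma_T^{-1/2}$. It keeps the \emph{self-normalized} martingale $\bar Q_{m,T}\bar\Sigma_{m,T}^{-1/2}$ in operator norm and pays $d^{2/p}=d$ to pass to Frobenius. It then bounds a single ratio (\Cref{lemma:OLS_error_moment_bound_main}) with a self-normalized tail bound plus small-ball. In that argument $\Tr(\bar\Sigma_{m,T})$ enters only inside a logarithm, handled by Markov, so the factor is cleanly $\nu^2\opnorm{\Sigma_{\calW}}d/(mT)$ and only a second moment is needed. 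This analysis is norm-agnostic, so it is reused verbatim for the operator-norm theorem.

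You instead decouple the raw martingale $\hat Q_{m,T}\Gamma_T^{-1/2}$ from $\opnorm{\tilde S^{-1}}$ by a three-way H\"older. You control the martingale in Frobenius norm with a Hilbert-space Burkholder--Rosenthal inequality. This is more elementary but Frobenius-specific. Because $\E\norm{\hat Q_{m,T}\Gamma_T^{-1/2}}_F^2=\Tr(\Sigma_{\calW})d/(mT)$ exactly, your route naturally gives a remainder with $\Tr(\Sigma_{\calW})d^2$ in place of $d^3\opnorm{\Sigma_{\calW}}$. That is sharper in the non-isotropic regime the paper cares about, and you throw it away when you convert to the stated form.

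Your route has two costs:
\begin{itemize}
\item You need higher inverse moments after H\"older, e.g.\ $\E\opnorm{\tilde S^{-1}}^{8}$. This only changes the constants in $m\gtrsim d$ and $mT\gtrsim\kappa(A)d$.
\item Your claimed $\nu^2$ in step (a) is not automatic. The Rosenthal term $\sum_t\E\norm{D_t}_F^4$ involves $\E\norm{w}^4\,\E\norm{\Gamma_T^{-1/2}x_t}^4$, which scales like $\nu^8$. A direct execution leaves a correction of relative order $\nu^2/\sqrt{m}$ (resp.\ $\nu^2/\sqrt{mT}$ in the stable case). To get exactly the stated $\nu^6$ you need an extra condition such as $m\gtrsim\nu^4$, or a more careful argument.
\end{itemize}

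Finally, the good/bad-event split in (c) is unnecessary. The small-ball tail $\lambda_{\min}\geq c_1\delta^{c_2kd/(mT)}$ already yields moments of order $p$ directly once $mT\gtrsim pkd$, via \Cref{lemma:polytailmoment}.
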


With the tuning parameter $q$, \Cref{thm:main_multitrajmain} allows us to match \hyperlink{goal1}{Goal 1} arbitrarily closely. 
Fix a $q \in (0, 1)$, and in addition to the relevant
hypothesis in \Cref{thm:main_multitrajmain}, assume:
\begin{enumerate}[label=(\roman*), leftmargin=30pt, itemindent=0pt, listparindent=0pt, labelsep=.5em]
    \item \emph{\underline{Many trajectories:}} suppose further that
\begin{align}
m \gtrsim \frac{\nu^{6}d^{3}\opnorm{\Sigma_{\calW}}}{q^2 \Tr\left(\Gamma_{T}^{-1}\right)\lambda_{\min}\left(\Gamma_{T}\right)\Tr\left(\Sigma_{\calW}\right)}, \label{eq:m_gtrsim_frob_norm}
\end{align}
    \item \emph{\underline{Stable dynamics:}} suppose further that
\begin{align}
mT\gtrsim \frac{ q^{-2}(1-\rho)^{-1} \cdot \nu^{6}d^{3}M\opnorm{\Sigma_{\calW}}^{3/2}}{ \Tr\left(\Gamma_{T}^{-1}\right)\lambda_{\min}\left(\Sigma_{\infty}\right)^{3/2}\Tr\left(\Sigma_{\calW}\right)}. \label{eq:mT_gtrsim_frob_norm}
\end{align}
\end{enumerate}
Then we have,
\begin{align}
    \E \norm{\hat{\Delta}_{m,T}}_F^2 &\leq (1+2q) \frac{\Tr(\Sigma_{\calW}) \Tr(\Gamma_T^{-1})}{mT}. \label{eq:frob_norm_one_plus_2q}
\end{align}
Therefore, by letting $q\to 0$, we can drive the bound \eqref{eq:frob_norm_one_plus_2q} arbitrarily close to \hyperlink{goal1}{Goal 1}.
We can also derive sufficient conditions for
\eqref{eq:m_gtrsim_frob_norm} and 
\eqref{eq:mT_gtrsim_frob_norm}
using the inequalities 
$\lambda_{\min}(\Gamma_T) \Tr(\Gamma_T^{-1}) \geq 1$
and $\Tr(\Sigma_{\calW}) \geq \opnorm{\Sigma_{\calW}}$.
Specifically, \eqref{eq:m_gtrsim_frob_norm}
holds whenever $m \gtrsim \nu^6 d^3/q^2$,
and also \eqref{eq:mT_gtrsim_frob_norm} holds whenever
$mT \gtrsim \frac{\nu^6 d^3 M}{q^2 (1-\rho)} \sqrt{\frac{\opnorm{\Sigma_{\calW}}}{\lambda_{\min}(\Sigma_\infty)}}$.
We leave the necessity of the conditions \eqref{eq:m_gtrsim_frob_norm} and \eqref{eq:mT_gtrsim_frob_norm} for the error bound \eqref{eq:frob_norm_one_plus_2q} to hold for future work.

We now turn to the operator norm case. For simplicity of notations, let us define
for a PSD matrix $M\in\mathbb{R}^{d\times d}$, the operator
$\bar{\Tr}(M) := \max\{ \Tr(M), \log(d) \opnorm{M} \}$.
We also define the shorthand
$\varphi_T := \opnorm{\Sigma_{\calW}}/\lambda_{\min}(\Gamma_T)$.

\begin{mythm}[Operator norm case]
\label{opt1bound_main}
Suppose the problem instance $\instance$ 
satisfies \Crefrange{assmp:sub_gaussian_noise}{assmp:traj_small_ball},
and that $d \geq 8$. Then:
\begin{enumerate}[label=(\roman*), leftmargin=30pt, itemindent=0pt, listparindent=0pt, labelsep=.5em]
    \item \underline{Many trajectories:} If $m \gtrsim \max\{ 1, \nu^4\}d$, then
\begin{align*}
    \E\opnorm{\hat{\Delta}_{m,T}}^2 &\lesssim \log^2(d) \frac{\opnorm{\Sigma_{\calW}} \Tr(\Gamma_T^{-1})+ \Tr(\Sigma_{\calW}) \opnorm{\Gamma_T^{-1}}}{mT}\\
    &\qquad\qquad+ \left[\log^{2}(d)\frac{\nu^4 \bar{\Tr}(\Sigma_{\calW}) \bar{\Tr}(\Gamma_T^{-1}) }{m^{2(1-1/\log{d})} T^{2(1/2-1/\log{d})}}+  \frac{ \nu^{6}d^{2} \varphi_T }{m^{2}T }\right]. 
\end{align*}

    \item \underline{Stable dynamics:} If $A$ is $(M,\rho)$-strictly-stable (cf.~\Cref{assmp:strict_stability_main}), $T \geq \kappa(A)$, and $mT\gtrsim \max\left\{\kappa(A), \frac{\max\{ 1, \nu^4\}M}{1-\rho} \sqrt{ \frac{ \opnorm{\Sigma_{\calW}}}{ \lambda_{\min}(\Gamma_\infty) }} \right\} d$, then
\begin{align*}
    \E\opnorm{\hat{\Delta}_{m,T}}^2 &\lesssim\log^2(d)  \frac{\opnorm{\Sigma_{\calW}} \Tr(\Gamma_T^{-1}) + \Tr(\Sigma_{\calW}) \opnorm{\Gamma_T^{-1}}}{mT} \\
    &\qquad\qquad+\left[\log^{2}(d)\frac{\nu^4 \bar{\Tr}(\Sigma_{\calW}) \bar{\Tr}(\Gamma_T^{-1})}{(mT)^{2(1-1/\log{d})}}+ \frac{ \nu^{6}d^{2}M\varphi_\infty^{3/2}}{(1-\rho)(mT)^{2}}\right].
\end{align*}
    Recall that $\kappa(A)$ is defined \eqref{eq:kappa}.
\end{enumerate}
\end{mythm}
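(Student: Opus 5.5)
We would begin from a second-order expansion of the OLS error. Write $E := \sum_{i,t} w_t^{(i)} (x_t^{(i)})^\T/(mT)$, so that $\hat{\Delta}_{m,T} = E\,\Sigma_{m,T}^{-1}$. Using $\Sigma_{m,T}^{-1} = \Gamma_T^{-1} - \Gamma_T^{-1}(\Sigma_{m,T}-\Gamma_T)\Sigma_{m,T}^{-1}$, this gives
\begin{align*}
\hat{\Delta}_{m,T} = \underbrace{E\,\Gamma_T^{-1}}_{=:M_{m,T}} - \underbrace{E\,\Gamma_T^{-1}(\Sigma_{m,T}-\Gamma_T)\Sigma_{m,T}^{-1}}_{=:R_{m,T}} .
\end{align*}
By $\opnorm{a+b}^2 \leq 2\opnorm{a}^2 + 2\opnorm{b}^2$, it suffices to show two things. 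First, $\E\opnorm{M_{m,T}}^2$ should be bounded by $\log^2(d)$ times the CLT rate plus the $\bar{\Tr}$ correction. Second, $\E\opnorm{R_{m,T}}^2$ should be bounded by the final bracketed terms $\nu^6 d^2\varphi_T/(m^2T)$, respectively $\nu^6 d^2 M\varphi_\infty^{3/2}/((1-\rho)(mT)^2)$.

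\textbf{Martingale term.} $M_{m,T} = (mT)^{-1}\sum_{i,t}\Sigma_{\calW}^{1/2}\bar{w}_t^{(i)}(x_t^{(i)})^\T\Gamma_T^{-1}$ is a matrix martingale with respect to $\calF_{t+1}$, with trajectories independent. We would symmetrize to a self-adjoint dilation and apply the noncommutative Burkholder (Junge–Xu / Randrianantoanina) inequality in Schatten-$p$ norm with $p=\log d$. We bound $\opnorm{\cdot}\leq\norm{\cdot}_{S_p}\leq e\opnorm{\cdot}$, and the constant is $O(p)$, which produces $\log^2 d$ after squaring. The two conditional square functions are as follows.

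The column square function is $\sum \E_t[\cdot\,\cdot^\T] = (mT)^{-2}\sum \norm{\Gamma_T^{-1}x_t}^2\Sigma_{\calW}$. Its $p/2$ moment is controlled by $\opnorm{\Sigma_{\calW}}\,\E\Tr(\Gamma_T^{-1}\Sigma_{m,T}\Gamma_T^{-1})/(mT)$, and its mean equals $\opnorm{\Sigma_{\calW}}\Tr(\Gamma_T^{-1})/(mT)$.

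The row square function is $(mT)^{-2}\Tr(\Sigma_{\calW})\sum\Gamma_T^{-1}x_tx_t^\T\Gamma_T^{-1} = \Tr(\Sigma_{\calW})\,\Gamma_T^{-1}\Sigma_{m,T}\Gamma_T^{-1}/(mT)$. Its Schatten-$p/2$ moment is the mean $\Tr(\Sigma_{\calW})\opnorm{\Gamma_T^{-1}}/(mT)$ plus a fluctuation $\norm{\Gamma_T^{-1}(\Sigma_{m,T}-\Gamma_T)\Gamma_T^{-1}}_{S_{p/2}}$, controlled via Hanson–Wright moments from \Cref{assmp:trajccp}.

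The diagonal term $\E\sum\norm{\xi_t}_{S_p}^p$ is bounded via sub-Gaussianity (\Cref{assmp:sub_gaussian_noise}). Each summand is rank one with norm $\lesssim \nu^2\norm{\Sigma_{\calW}^{1/2}\bar w}\norm{\Gamma_T^{-1}x_t}/(mT)$. Summing $mT$ such $p$-th powers and taking the $2/p$ power yields $(mT)^{2/p}/(mT)^2$ times $\nu^4\bar{\Tr}(\Sigma_{\calW})\bar{\Tr}(\Gamma_T^{-1})$; this is exactly the $(mT)^{2(1-1/\log d)}$ correction. In the many-trajectories case, only $m$ gets the $2/p$ loss and $T^{2(1/2-1/\log d)}$ appears, because we group each trajectory into a single martingale increment in $i$ and bound the within-trajectory part by a trace. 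The $\log d\,\opnorm{\cdot}$ inside $\bar{\Tr}$ comes from sub-Gaussian $p$-th moments of norms with $p=\log d$.

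\textbf{Remainder and main obstacle.} We bound $\opnorm{R_{m,T}}\leq\opnorm{E\Gamma_T^{-1/2}}\,\opnorm{\Gamma_T^{-1/2}(\Sigma_{m,T}-\Gamma_T)\Gamma_T^{-1/2}}\,\opnorm{\Gamma_T^{1/2}\Sigma_{m,T}^{-1}}$ and split on the event $\mathcal{E}=\{\opnorm{\Gamma_T^{-1/2}(\Sigma_{m,T}-\Gamma_T)\Gamma_T^{-1/2}}\leq 1/2\}$.

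On $\mathcal{E}$, the last factor is $\lesssim\lambda_{\min}(\Gamma_T)^{-1/2}$. We then apply Cauchy–Schwarz with fourth moments: $\E\opnorm{E\Gamma_T^{-1/2}}^4\lesssim(\nu^2 d\opnorm{\Sigma_{\calW}}/(mT))^2$ by a self-normalized or covering argument, and $\E\opnorm{\Gamma_T^{-1/2}(\Sigma_{m,T}-\Gamma_T)\Gamma_T^{-1/2}}^4\lesssim(\nu^4 d/m)^2$, respectively $(\kappa$-type $\nu^4 d M/((1-\rho)mT)\cdot\sqrt{\varphi_\infty})^2$. The latter is the $p$-th moment approximate isometry obtained from Hanson–Wright plus an $\epsilon$-net. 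For stable systems we would block time in windows of length $\kappa(A)$ so that $\Gamma_k\approx\Sigma_\infty$; this is where the hypothesis $T\geq\kappa(A)$ enters. Multiplying the two factors gives the stated $d^2\varphi/(m^2T)$-type terms.

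Off $\mathcal{E}$, whose probability is exponentially small in $m/d$ (resp. $mT/d$) under the sample-size hypotheses, we use the trajectory small-ball \Cref{assmp:traj_small_ball} as in \cite{tu2024manytraj}. It shows that $\E\opnorm{\Gamma_T^{1/2}\Sigma_{m,T}^{-1}\Gamma_T^{1/2}}^{r}$ is bounded for $m\gtrsim d$, and Hölder then makes this contribution negligible.

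We expect the main obstacle to be the operator-norm control of the row square function and diagonal terms in the Burkholder step. Getting $\Tr(\Sigma_{\calW})\opnorm{\Gamma_T^{-1}}+\opnorm{\Sigma_{\calW}}\Tr(\Gamma_T^{-1})$ rather than a product of traces requires keeping the two sides separate, and the fluctuation of $\Gamma_T^{-1}\Sigma_{m,T}\Gamma_T^{-1}$ in $S_{p/2}$ must be absorbed into the higher-order bracket.
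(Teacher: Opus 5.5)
Your architecture is the paper's: the same split $\hat{\Delta}_{m,T}=\hat{Q}_{m,T}\Gamma_T^{-1}+\hat{Q}_{m,T}(\hat{\Sigma}_{m,T}^{-1}-\Gamma_T^{-1})$, the noncommutative Burkholder inequality at $r=\log d$ for the first term, and approximate isometry times a self-normalized/small-ball bound for the second. The genuine gap is in the third (diagonal) Burkholder term.

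\textbf{The missing step.} You sum $mT$ per-step contributions to get $(mT)^{2/\log d-2}\nu^4\bar{\Tr}(\Sigma_{\calW})\bar{\Tr}(\Gamma_T^{-1})$. This tacitly assumes $(\E\norm{\Gamma_T^{-1}x_t}^r)^{2/r}\lesssim\nu^2\bar{\Tr}(\Gamma_T^{-1})$ uniformly in $t\le T$. The sub-Gaussian moment bound only gives $\nu^2$ times $\Tr(\Gamma_T^{-1}\Sigma_t\Gamma_T^{-1})+r\opnorm{\Gamma_T^{-1}\Sigma_t\Gamma_T^{-1}}$. This is at most $\lambda_{\max}(\Gamma_T^{-1/2}\Sigma_t\Gamma_T^{-1/2})\,(\Tr(\Gamma_T^{-1})+r\opnorm{\Gamma_T^{-1}})$, so you must compare the time-$t$ covariance $\Sigma_t$ with the averaged Gramian $\Gamma_T$.

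In the stable case this is one of the places where $T\geq\kappa(A)$ is needed. By \Cref{lemma:averagegramianrip_main}, $\Sigma_t\preceq\Sigma_\infty\preceq 4\Gamma_T$, so the factor is at most $4$ and your $(mT)^{-2(1-1/\log d)}$ term follows. Without stability the factor can genuinely be of order $T$; for explosive $A$ the last states dominate the average. The paper uses $\Gamma_T\succeq\Sigma_T/T\succeq\Sigma_t/T$. This extra $T$ is exactly what converts $(mT)^{-2(1-1/\log d)}$ into $m^{-2(1-1/\log d)}T^{-2(1/2-1/\log d)}=T\cdot(mT)^{-2(1-1/\log d)}$.

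\textbf{Why the grouping explanation fails.} You explain case (i) by grouping each trajectory into one increment so that only $m$ pays the $2/\log d$ loss. This does not match the target: as just noted, the target term is the per-step $(mT)^{2/\log d}$ loss times one extra factor of $T$, not a loss confined to $m$. Grouping also only relocates the difficulty. With per-trajectory increments the conditional square functions become deterministic, which is fine. But the diagonal term becomes $m^{2/r}(mT)^{-2}\bigl(\E\opnorm{\sum_{t}w_t(\Gamma_T^{-1}x_t)^\T}^{r}\bigr)^{2/r}$ for a single, possibly explosive, trajectory. Bounding this $\log d$-th moment runs into the same growth of $\Sigma_t$ relative to $\Gamma_T$. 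You also cannot use per-step conditioning for the square functions and per-trajectory increments for the diagonal term in one application of the inequality.

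\textbf{The rest of the plan.} It is sound. Your event-splitting treatment of the remainder is a valid alternative. The paper instead writes the remainder as $\bar{Q}_{m,T}\bar{\Sigma}_{m,T}^{-1/2}(I_d-\bar{\Sigma}_{m,T})\bar{\Sigma}_{m,T}^{-1/2}\Gamma_T^{-1/2}$ and uses a single Cauchy--Schwarz. It absorbs $\lambda_{\max}(\hat{\Sigma}_{m,T}^{-1})$ into the self-normalized factor through the normalizer $N\in\{\Gamma_T,\Gamma_{\kappa(A)}\}$, so no split into $\mathcal{E}$ and $\mathcal{E}^c$ is needed.

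The fluctuation in your row square function is also not an obstacle. We have $\opnorm{\Gamma_T^{-1/2}\bar{\Sigma}_{m,T}\Gamma_T^{-1/2}}\le\opnorm{\Gamma_T^{-1}}\opnorm{\bar{\Sigma}_{m,T}}$, and $(\E\opnorm{\bar{\Sigma}_{m,T}}^{r/2})^{2/r}\lesssim 1$ under the sample-size hypotheses. So it costs only a constant in the leading term and does not need to go into the higher-order bracket.
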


Unlike \Cref{thm:main_multitrajmain}, \Cref{opt1bound_main} does not allow us to match \hyperlink{goal2}{Goal 2} exactly, due to the extra dimensionality factor $\log^{2}(d)$ in the leading term. Nonetheless, we can absorb the higher order terms into the leading term as follows. In addition to the relevent hypothesis in \Cref{opt1bound_main}, suppose that the following holds:
\begin{enumerate}[label=(\roman*), leftmargin=30pt, itemindent=0pt, listparindent=0pt, labelsep=.5em]
    \item \emph{\underline{Many trajectories:}} suppose further that
\begin{align}
m^{1-\frac{2}{\log d}} \gtrsim  \left[\frac{\nu^4 \bar{\Tr}(\Sigma_{\calW}) \bar{\Tr}(\Gamma_T^{-1})+ \nu^{6}d^{2}\varphi_{T}/\log^{2}(d)}{\opnorm{\Sigma_{\calW}} \Tr(\Gamma_T^{-1}) + \Tr(\Sigma_{\calW}) \opnorm{\Gamma_T^{-1}}}\right]T^{\frac{2}{\log d}}, \label{eq:m_gtrsim_opt_norm}
\end{align}
    \item \emph{\underline{Stable dynamics:}} suppose further that
\begin{align}
(mT)^{1-\frac{2}{\log d}} \gtrsim  \frac{\nu^4 \bar{\Tr}(\Sigma_{\calW}) \bar{\Tr}(\Gamma_T^{-1})+ \nu^{6}d^{2}M(1-\rho)^{-1}\varphi_{\infty}^{3/2}/\log^{2}(d)}{\opnorm{\Sigma_{\calW}} \Tr(\Gamma_T^{-1}) + \Tr(\Sigma_{\calW}) \opnorm{\Gamma_T^{-1}}}.  \label{eq:mT_gtrsim_opt_norm}
\end{align}
\end{enumerate}
Then we have,
$$
\E \opnorm{\hat{\Delta}_{m,T}}^2\lesssim \log^2(d) \frac{\opnorm{\Sigma_{\calW}} \Tr(\Gamma_T^{-1})+ \Tr(\Sigma_{\calW}) \opnorm{\Gamma_T^{-1}}}{mT}.
$$

Unlike the Frobenius norm case, in the many trajectories setting,
\eqref{eq:m_gtrsim_opt_norm}
requires that $m$ actually grows with $T$,
specifically $m \geq \Omega( T^{2/(\log{d}-2)} )$,
for the higher-order terms in \Cref{opt1bound_main} to be dominated by
the lower-order $1/(mT)$ term. We believe this to be an artifact
of our proof strategy, specifically the noncommutative
Burkholder inequality we utilize (cf.~\Cref{subsec:t1opnormanalysis}). 
Resolving this is of future interest.

\section{Proof Ideas: Key Decompositions and Technical Tools}
\label{sec:proof_ideas}

We first derive a new second-order error decomposition that
is shared between the paths to both
\Cref{thm:main_multitrajmain}
and \Cref{opt1bound_main}.  
We start with the standard decomposition of the OLS error:
\begin{align*}
\hat{\Delta}_{m,T} = W_{m,T}^\T X_{m,T} (X_{m,T}^\T X_{m,T})^{-1}=\underbrace{\left(\frac{1}{mT}W_{m,T}^\T X_{m,T}\right)}_{:=\hat{Q}_{m,T}}\underbrace{\left(\frac{1}{mT}X_{m,T}^\T X_{m,T}\right)^{-1}}_{:=\hat{\Sigma}_{m,T}^{-1}}.
\end{align*}
Note that under \Cref{assmp:traj_small_ball}, $\hat{\Sigma}_{m,T}$ is invertible a.s.\ by \cite[Corollary B.14]{tu2024manytraj}
given our requirements on $m$ (resp.\ $mT$)
in the many trajectories setting (resp.\ stable dynamics setting),
and therefore the above decomposition is well defined. The typical route for analyzing the OLS error proceeds by first decomposing the product above as:
\begin{align}
    \hat{\Delta}_{m,T} = \hat{Q}_{m,T} \hat{\Sigma}^{-1/2}_{m,T} \cdot \hat{\Sigma}^{-1/2}_{m,T}, \label{eq:OLS_error_standard}
\end{align}
where the first term is analyzed using
concentration inequalities for self-normalized martingales,
and the second term is controlled using small-ball techniques
to lower bound the minimum eigenvalue.
We take a departure from this route and instead write:
\begin{align}
    \hat{\Delta}_{m,T} = \hat{Q}_{m,T} \Gamma_T^{-1} + \hat{Q}_{m,T} ( \hat{\Sigma}_{m,T}^{-1} - \Gamma_T^{-1}).
\end{align}
By triangle inequality and the AM-GM inequality, we have for any $q > 0$ the following \emph{basic error inequality}:
\begin{align}
    \E\norm{\hat{\Delta}_{m,T}}^2_{S_p} \leq (1+q) \underbrace{\E\norm{ \hat{Q}_{m,T} \Gamma_T^{-1} }^2_{S_p}}_{:=T_{1}} + (1+q^{-1}) \underbrace{\E\norm{ \hat{Q}_{m,T}( \hat{\Sigma}_{m,T}^{-1} - \Gamma_T^{-1} ) }^2_{S_p}}_{:=T_{2}}, \label{eq:basic_error_inequality}
\end{align}
where for a matrix $M$, $\norm{\cdot}_{S_p}$ denotes
the Schatten-$p$ norm
$\norm{M}_{S_p} := \left( \Tr( (M^\T M)^{p/2} ) \right)^{1/p}$;
note that $p=2$ yields the Frobenius norm,
and $p=\infty$ yields the operator norm.

\textbf{Outline:} 
We will show that $T_{1}$ matches the corresponding rates in \hyperlink{goal1}{Goal 1} or \hyperlink{goal2}{Goal 2}, whereas $T_2$ is of higher order.
The analysis of $T_{1}$ differs depending on the norm: 
it is simple to analyze under Frobenius norm due to inner product structure (cf.~\Cref{sec:proof_sketch:T1_Frobenius}),
but requires a noncommutative Burkholder inequality~\cite{randrianantoanina07} for the operator norm (cf.~\Cref{subsec:t1opnormanalysis}).
On the other hand, the analysis of term $T_2$ 
is independent of the norm, 
and builds on the Hanson-Wright inequality~\cite{vershynin2018high,adamczak15} in addition to existing OLS analysis under the trajectory small-ball condition~\cite{tu2024manytraj} (cf.~\Cref{sec:proof_sketch:T2}). 

\subsection{Analysis of Term \texorpdfstring{$T_1$}{T1} for Frobenius Norm}
\label{sec:proof_sketch:T1_Frobenius}

For the Frobenius norm, we have:
\begin{align*}
    T_{1}&=\E\bignorm{\hat{Q}_{m,T}\Gamma_{T}^{-1}}_{F}^{2}=\Tr\left( \Gamma_T^{-2} \cdot \E\left[ \hat{Q}_{m,T}^\T \hat{Q}_{m,T} \right] \right).
\end{align*}
Furthermore, $\E[\hat{Q}_{m,T}^{\T}\hat{Q}_{m,T}]$ admits a simple closed form:
\begin{align}
    \E\left[\hat{Q}_{m,T}^\T \hat{Q}_{m,T}\right]  &=\frac{1}{m^{2}T^{2}}\E\left[\sum_{(i,t),(j,s)=1}^{m,T}w^{(i)\T}_{t}w^{(j)}_{s}x_{t}^{(i)}x_{s}^{(j)\T}\right]\nonumber\\
    &=\frac{1}{m^{2}T^{2}}\sum_{i=1}^{m}\sum_{t,s=1}^{T}\E\left[w^{(i)\T}_{t}w^{(i)}_{s}x_{t}^{(i)}x_{s}^{(i)\T}\right]+\frac{1}{m^{2}T^{2}}\sum_{i\neq j}^{m}\sum_{t,s=1}^{T}\E\left[w_{t}^{(i)\T}x_{t}^{(i)}\right]\E\left[w_{s}^{(j)\T}x_{s}^{(j)}\right]\nonumber\\
    &\stackrel{(a)}{=}\frac{1}{mT^{2}}\sum_{t,s=1}^{T}\E\left[w^{(1)\T}_{t}w^{(1)}_{s}x_{t}^{(1)}x_{s}^{(1)\T}\right]\nonumber\\
    &\stackrel{(b)}{=}\frac{1}{mT^{2}}\sum_{t=1}^{T}\E\left[w^{(1)\T}_{t}w^{(1)}_{t}x_{t}^{(1)}x_{t}^{(1)\T}\right]\nonumber\\
    &=\frac{\Tr\left(\Sigma_{\calW}\right)}{mT^{2}}\sum_{t=1}^{T}\Sigma_{t}
    =\frac{\Tr\left(\Sigma_{\calW}\right)\Gamma_{T}}{mT}\label{eq:t1frobeniusproof},
\end{align}
where $(a)$ follows from independence across trajectories, and $(b)$ follows from applying iterated expection with respect to smaller time indices.
Hence, 
\begin{align}
    \E\bignorm{\hat{Q}_{m,T}\Gamma_{T}^{-1}}_{F}^{2} = \frac{\Tr(\Sigma_{\calW}) \Tr(\Gamma_T^{-1})}{mT}.
\end{align}

\subsection{Analysis of Term \texorpdfstring{$T_1$}{T1} for Operator Norm}\label{subsec:t1opnormanalysis}
 
Unlike the Frobenius norm analysis in \Cref{sec:proof_sketch:T1_Frobenius},
the operator norm analysis for $T_1$ is more challenging, as there is no inherent inner-product structure.
Instead, we will exploit that $T_{1}$ is the second moment of the Schatten-$\infty$ norm of a matrix-valued martingale $\hat{Q}_{m,T} \Gamma_T^{-1}$. 
Such types of moment bounds in their full generality are known as noncommutative Burkholder inequalities \cite{randrianantoanina07,junge2008}. The following lemma is a direct implication of \cite[Theorem 4.1]{randrianantoanina07} tailored to bounding $T_{1}$ under the operator norm. 

\begin{restatable}{myprop}{matrixbdg}\label{thm:matrix_BDG_main}
Fix a filtration $\{\calG_{i}\}_{i\geq 0}$. Let $\{ D_i \}_{i \geq 1}$ be a matrix-valued martingale difference sequence where $D_i$ takes values in $\mathbb{R}^{d\times d}$ and is $\calG_i$-measurable, 
and let $M_t := \sum_{i=1}^{t} D_i$. Then if $d \geq 8$, for any
$t \in \N_+$ and $r \geq \log d $,
\begin{align*}
    \left(\E\bigopnorm{M_t}^2\right)^{1/2} &\lesssim  r \left[  
    \left(\E\bigopnorm{ {\sum_{i=1}^{t}} \E_{i-1}\left[ D_iD_i^\T  \right]}^{r/2} \right)^{1/r}\right.\\
    &\left.\qquad+\left(\E\bigopnorm{ {\sum_{i=1}^{t}} \E_{i-1}\left[ D_i^\T D_i\right]}^{r/2}\right)^{1/r} + \left( {\sum_{i=1}^{t}} \E\bigschattenrnorm{D_{i}}^{r} \right)^{1/r} \right].
\end{align*}
\end{restatable}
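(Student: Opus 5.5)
The plan is to deduce the bound from the noncommutative Burkholder inequality of \cite[Theorem 4.1]{randrianantoanina07}, which has the optimal constant order $O(r)$. Three reductions are needed: pass from the operator norm to a Schatten-$r$ norm, handle rectangular (non-self-adjoint) martingales, and convert trace-type expressions back to operator norms at the price of a $d^{1/r}$ factor that is $O(1)$ once $r\geq \log d$.

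\textbf{Step 1 (reduction to Schatten-$r$).} Using $\opnorm{M_t}\leq \norm{M_t}_{S_r}$ and Jensen/Lyapunov (valid because $r\geq \log d\geq 2$ when $d\geq 8$), we get
\[
\left(\E\opnorm{M_t}^2\right)^{1/2}\leq \left(\E\norm{M_t}_{S_r}^r\right)^{1/r}.
\]
It therefore suffices to bound the $L_r(L_\infty\overline{\otimes}M_d)$ norm of $M_t$. This norm lives in the noncommutative $L_r$ space of the von Neumann algebra $L_\infty(\Omega)\overline{\otimes} M_d$ with trace $\E\otimes\Tr$, and the filtration $\{L_\infty(\calG_i)\otimes M_d\}$ makes $\{D_i\}$ a noncommutative martingale difference sequence.

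\textbf{Step 2 (apply the Burkholder inequality).} For the self-adjoint case, \cite[Theorem 4.1]{randrianantoanina07} gives, for $2\leq r<\infty$,
\[
\norm{M_t}_{L_r}\lesssim r\left(\bignorm{\Big(\textstyle\sum_i \E_{i-1}[D_iD_i^*]\Big)^{1/2}}_{L_r}+\bignorm{\Big(\textstyle\sum_i \E_{i-1}[D_i^*D_i]\Big)^{1/2}}_{L_r}+\Big(\textstyle\sum_i\norm{D_i}_{L_r}^r\Big)^{1/r}\right).
\]
Real $d\times d$ matrices are not self-adjoint, so I would use the standard Hermitian dilation $\tilde D_i=\begin{pmatrix}0&D_i\\ D_i^\T&0\end{pmatrix}$. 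This is a self-adjoint martingale difference sequence of size $2d$, with $\norm{\tilde M_t}_{S_r}=2^{1/r}\norm{M_t}_{S_r}$ and $\tilde D_i^2=\diag(D_iD_i^\T,D_i^\T D_i)$. The dilation therefore produces both conditional square functions and changes constants by at most a factor of $2$.

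\textbf{Step 3 (convert square-function terms to operator norms).} For a PSD random matrix $S$ of size $d$,
\[
\norm{S^{1/2}}_{L_r}=\left(\E\Tr(S^{r/2})\right)^{1/r}\leq d^{1/r}\left(\E\opnorm{S}^{r/2}\right)^{1/r}\leq e\left(\E\opnorm{S}^{r/2}\right)^{1/r},
\]
since $r\geq\log d$. Applied to $S=\sum_i\E_{i-1}[D_iD_i^\T]$ and to $S=\sum_i\E_{i-1}[D_i^\T D_i]$, this yields the first two terms. The diagonal term is already $\left(\sum_i\E\norm{D_i}_{S_r}^r\right)^{1/r}$, which matches the stated third term.

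\textbf{Main obstacle.} The hard step is ensuring the constant is genuinely linear in $r$. Many versions of the noncommutative Burkholder inequality (e.g.\ Junge--Xu) carry worse constants, so I would invoke Randrianantoanina's result specifically for its $O(r)$ constant. I would also check that his conditional square functions correspond to $\E_{i-1}$ in our filtration indexing, i.e.\ that $D_i$ is $\calG_i$-measurable with $\E_{i-1}D_i=0$.
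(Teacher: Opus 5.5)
Your proposal is correct and follows essentially the same route as the paper. Both apply \cite[Theorem 4.1]{randrianantoanina07} in $L^\infty(\Omega)\bar{\otimes} M_d$ with trace $\E\Tr$, convert Schatten-$r$ norms to operator norms via $d^{1/r}\leq e$ for $r\geq\log d$ (with $\opnorm{S^{1/2}}=\opnorm{S}^{1/2}$), and finish with Jensen since $r\geq\log d\geq 2$. The only difference is your Hermitian dilation, which is valid but unnecessary: that theorem already covers non-self-adjoint martingales, which is why both the row and column conditional square functions appear in it.
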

Some remarks are in order. First, the requirement that $d \geq 8$ and $r \geq \log d$ results from converting the Haagerup $L^{r}$-norm, under which the original result is stated, to operator norm. Eliminating such a requirement is of future interest. 
Second, aside from the Burkholder inequality, one can also use the matrix Freedman inequality \cite{Tropp2011} to establish control on $\E\opnorm{M_t}^2$.
However, due to the boundedness assumption in \cite[Theorem 1.1]{Tropp2011}, this approach will introduce a $\log(mT)$ factor in the final bound due to a truncation argument. 

To apply \Cref{thm:matrix_BDG_main}
for analyzing $T_1$, we denote $M_{mT}:=\hat{Q}_{m,T}\Gamma_{T}^{-1} = \sum_{j=1}^{mT} D_j$,
where 
\begin{equation}\label{eq:martingaleorder}
    D_j := (mT)^{-1} w_{t_j}^{(i_j)} (x_{t_j}^{(i_j)})^\T \Gamma_T^{-1}, \quad j \in [mT],
\end{equation}
and where the ordering is $i_j := \floor{(j-1)/T} + 1$ and $t_j := ((j-1) \mod T)+1$; 
this corresponds to flattening the time index $t$ first. 
Focusing on the first two terms
of the bound in \Cref{thm:matrix_BDG_main} (the main terms that generalize quadratic variation),  
for any $r \geq 1$,
\begin{align*}
    \E\bigopnorm{ {\textstyle\sum_{i=1}^{mT}} \E_{i-1}\left[ D_iD_i^\T  \right]}^{r} &= \frac{\opnorm{\Sigma_{\calW}}^r}{(mT)^r} \cdot \E \abs{\Tr( \Gamma_T^{-2} \hat{\Sigma}_{m,T} )}^{r}, \\
    \E\bigopnorm{ {\textstyle\sum_{i=1}^{mT}} \E_{i-1}\left[ D_i^\T D_i \right]}^{r} &= \frac{\Tr(\Sigma_{\calW})^{r}}{(mT)^{r}} \cdot \E\opnorm{ \Gamma_T^{-1} \hat{\Sigma}_{m,T} \Gamma_T^{-1} }^{r}.
\end{align*}
A key tool for bounding
these two terms is an approximate isometry condition on the empirical covariance
$\hat{\Sigma}_{m,T}$. This approximate isometry also plays a key role in the analysis of 
term $T_2$, which we will turn to shortly in \Cref{sec:proof_sketch:T2}.
Bounding the last term in \Cref{thm:matrix_BDG_main} 
involves controlling moments of the form $\E[\norm{w_t}^r \norm{\Gamma_T^{-1} x_t}^r]$,
which we handle using standard
concentration inequalities for sub-Gaussian random vectors.
We remark that it is precisely
this third term that gives rise
to the $T^{2/\log{d}}$ dependence in \eqref{eq:m_gtrsim_opt_norm}
in the context of \Cref{opt1bound_main}. We conclude this section with the result obtained from the above analysis.

\begin{restatable}{mylemma}{bdgopbound}\label{lemma:t1opbound}
Given a problem instance $\instance$, suppose that $\{w_{t}\}_{t\geq 0}$ satisfies \Crefrange{assmp:sub_gaussian_noise}{assmp:traj_small_ball}, and furthermore $d \geq 8$. We have the following:
\begin{enumerate}[label=(\roman*), leftmargin=30pt, itemindent=0pt, listparindent=0pt, labelsep=.5em]
    \item If $m \gtrsim \max\{ \nu^2, \nu^4\}d$, then
    $$
        \E\bigopnorm{ \hat{Q}_{m,T} \Gamma_T^{-1} }^2 \lesssim \log^2(d)\left( \frac{\opnorm{\Sigma_{\calW}} \Tr(\Gamma_T^{-1}) + \Tr(\Sigma_{\calW}) \opnorm{\Gamma_T^{-1}}}{mT} + \frac{\nu^4 \bar{\Tr}(\Sigma_{\calW}) \bar{\Tr}(\Gamma_T^{-1}) }{m^{2(1-1/\log{d})} T^{2(1/2-1/\log{d})}} \right).
    $$
    \item If $A$ is $(M,\rho)$-strictly stable, $mT\gtrsim \frac{\max\{\nu^{2},\nu^{4}\}\opnorm{\Sigma_{\calW}^{1/2}}Md}{\lambda_{\min}(\Gamma_{T}^{1/2})(1-\rho)}$ and $T \geq \kappa(A)$, then
    $$
        \E\bigopnorm{ \hat{Q}_{m,T} \Gamma_T^{-1} }^2 \lesssim \log^2(d) \left( \frac{\opnorm{\Sigma_{\calW}} \Tr(\Gamma_T^{-1}) + \Tr(\Sigma_{\calW}) \opnorm{\Gamma_T^{-1}}}{mT} + \frac{\nu^4 \bar{\Tr}(\Sigma_{\calW}) \bar{\Tr}(\Gamma_T^{-1})}{(mT)^{2(1-1/\log{d})}} \right).
    $$
\end{enumerate}
\end{restatable}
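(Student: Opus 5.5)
We will apply \Cref{thm:matrix_BDG_main} to the martingale $M_{mT}=\hat{Q}_{m,T}\Gamma_T^{-1}=\sum_{j=1}^{mT}D_j$ with the ordering \eqref{eq:martingaleorder}, choosing $r=\log d$. The enlarged filtration $\calG_j$ is generated by the noise of the first $i_j-1$ trajectories together with $w^{(i_j)}_{0:t_j}$. Since $x^{(i_j)}_{t_j}$ is $\calG_{j-1}$-measurable and $w^{(i_j)}_{t_j}$ is independent of $\calG_{j-1}$ with mean zero, the $D_j$ are martingale differences. We then compute the two predictable quadratic variations. The first is $\sum_j\E_{j-1}[D_jD_j^\T]=(mT)^{-1}\Tr(\Gamma_T^{-2}\hat\Sigma_{m,T})\,\Sigma_{\calW}/(mT)\cdot mT$, so its operator norm is $\frac{\opnorm{\Sigma_{\calW}}}{mT}\Tr(\Gamma_T^{-2}\hat\Sigma_{m,T})$. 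The second is $\sum_j\E_{j-1}[D_j^\T D_j]=\frac{\Tr(\Sigma_{\calW})}{mT}\Gamma_T^{-1}\hat\Sigma_{m,T}\Gamma_T^{-1}$. This gives the displayed identities preceding the lemma.

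The main analytic input is a $p$-th moment approximate isometry: for $p\asymp\log d$, we want $\big(\E\opnorm{\Gamma_T^{-1/2}\hat\Sigma_{m,T}\Gamma_T^{-1/2}}^{p}\big)^{1/p}\lesssim 1$ under the stated sample-size conditions. This is the content of the $T_2$ analysis (\Cref{sec:proof_sketch:T2}), so we assume it is available. To obtain it, we would write $\Gamma_T^{-1/2}\hat\Sigma_{m,T}\Gamma_T^{-1/2}-I$ as a quadratic form in the stacked whitened noise. We would apply Hanson--Wright, in the form of \cite{RudelsonVershynin2013} under \Cref{assmp:trajccp}(i) or of \cite{adamczak15} under (ii), together with an $\varepsilon$-net over $\bbS^{d-1}$. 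This yields a tail of the form $\exp(-c\min\{s^2,s\}\,mT/\|\cdot\|)$ beyond the $d$ scale. The conditions $m\gtrsim\max\{\nu^2,\nu^4\}d$ in case (i), and $mT\gtrsim \nu^4 Md\sqrt{\varphi}/(1-\rho)$ together with $T\ge\kappa(A)$ in case (ii), are exactly what make the operator norm of the relevant Toeplitz-type matrix small. In case (ii) that norm is bounded through $\sum_k\opnorm{A^k}\le M/(1-\rho)$, and $T\ge\kappa(A)$ gives $\Gamma_T\asymp\Sigma_\infty$. Integrating the tail then yields moments of order $p$ with constant $1+O(1)$.

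Given the isometry, $\Tr(\Gamma_T^{-2}\hat\Sigma)\le\Tr(\Gamma_T^{-1})\opnorm{\Gamma_T^{-1/2}\hat\Sigma\Gamma_T^{-1/2}}$ and $\opnorm{\Gamma_T^{-1}\hat\Sigma\Gamma_T^{-1}}\le\opnorm{\Gamma_T^{-1}}\opnorm{\Gamma_T^{-1/2}\hat\Sigma\Gamma_T^{-1/2}}$. Each of the first two Burkholder terms is therefore bounded by the square root of the leading CLT rate. Squaring and multiplying by $r^2=\log^2 d$ produces the first term of the lemma.

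For the third term, we bound $\opnorm{D_j}\le (mT)^{-1}\norm{w}\norm{\Gamma_T^{-1}x}$ and use $\norm{D_j}_{S_r}\le d^{1/r}\opnorm{D_j}\lesssim\opnorm{D_j}$, since $D_j$ is rank one and in fact $\norm{D_j}_{S_r}=\opnorm{D_j}$. By independence of $w_t$ and $x_t$,
\[
\E\norm{D_j}^r_{S_r}\le (mT)^{-r}\E\norm{w}^r\,\E\norm{\Gamma_T^{-1}x_t}^r .
\]
Sub-Gaussian norm moments give $(\E\norm{\Sigma_{\calW}^{1/2}\bar w}^r)^{1/r}\lesssim\sqrt{\Tr\Sigma_{\calW}}+\nu\sqrt{r\opnorm{\Sigma_{\calW}}}\lesssim\nu\sqrt{\bar{\Tr}(\Sigma_{\calW})}$. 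Writing $x_t$ as a linear image of the whitened noise with covariance $\Sigma_t\preceq T\Gamma_T$, we get $(\E\norm{\Gamma_T^{-1}x_t}^r)^{1/r}\lesssim \nu\sqrt{\bar{\Tr}(\Gamma_T^{-1}\Sigma_t\Gamma_T^{-1})}$. Bounding the sum over $t$ uses $\sum_t\Sigma_t=T\Gamma_T$, with the $r/2$-power handled crudely. We expect the crude handling to cost at most the factor $T^{1/r}$ in case (i), while in case (ii) $\Sigma_t\preceq\Sigma_\infty\lesssim\Gamma_T$ holds uniformly. Summing $mT$ terms and taking the $1/r$ power gives roughly $(mT)^{1/r-1}\nu^2\sqrt{\bar{\Tr}(\Sigma_{\calW})\bar{\Tr}(\Gamma_T^{-1})}$ times a $T$-power. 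Squaring with $r=\log d$ then produces the stated higher-order terms: $(mT)^{-2(1-1/\log d)}$ in the stable case, and in the many-trajectory case $m^{-2(1-1/\log d)}T^{-2(1/2-1/\log d)}$. The latter exponent reflects the cruder uniform bound $\Sigma_t\preceq T\Gamma_T$.

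The main obstacle is the $p$-th moment isometry with $p\asymp\log d$, in particular uniform control of the tail so that moment integration costs only constants. The bookkeeping of $T$-powers in the third term is secondary but delicate.
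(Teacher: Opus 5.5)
Your proposal is correct and follows the paper's proof essentially step for step. The shared steps are: Burkholder with $r=\log d$; the two predictable quadratic variations reduced to $(\E\opnorm{\bar{\Sigma}_{m,T}}^{r/2})^{2/r}\lesssim 1$ via the approximate-isometry lemma under the stated sample-size conditions; and the third term handled by independence of $w_t$ and $x_t$, sub-Gaussian norm moments, and $\Sigma_t\preceq T\Gamma_T$ in case (i) versus $\Gamma_T\succeq\frac{1}{4}\Sigma_\infty$ (from $T\geq\kappa(A)$) in case (ii).

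Two small fixes are needed. First, under your filtration $x^{(i)}_1=w^{(i)}_0$ is not $\calG_{j-1}$-measurable at the start of each trajectory; the paper repairs this by placing all $w^{(i)}_0$ in $\calG_0$. Second, in case (i) the uniform bound $\Sigma_t\preceq T\Gamma_T$ costs a factor $T^{1/2}$, not $T^{1/r}$, in the unsquared third term, and that is exactly what produces the stated $m^{-2(1-1/\log d)}T^{-2(1/2-1/\log d)}$.
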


\subsection{Analysis of Term \texorpdfstring{$T_2$}{T2}}
\label{sec:proof_sketch:T2}

We now turn to our analysis of term $T_2$.
We denote the whitened version of $\hat{Q}_{m,T}$ and $\hat{\Sigma}_{m,T}$ as follows:
$$
\bar{\Sigma}_{m,T} := \Gamma_T^{-1/2} \hat{\Sigma}_{m,T} \Gamma_T^{-1/2}, \quad \bar{Q}_{m,T} := \hat{Q}_{m,T} \Gamma_T^{-1/2}.
$$
Via elementary operations, 
we write $\hat{Q}_{m,T}( \hat{\Sigma}_{m,T}^{-1} - \Gamma_T^{-1} )$ as:
\begin{align*}
\hat{Q}_{m,T}\left( \hat{\Sigma}_{m,T}^{-1} - \Gamma_T^{-1} \right)
    = \bar{Q}_{m,T} \bar{\Sigma}^{-1/2}_{m,T} \left( I_d - \bar{\Sigma}_{m,T} \right) \bar{\Sigma}^{-1/2}_{m,T} \Gamma_T^{-1/2}.
\end{align*}
We now introduce an arbitrary positive definite normalization matrix $N\in\mathbb{R}^{d\times d}$,
whose purpose will be explained soon in \Cref{subsubsec:t2ols}.
We can upper bound $\bigschattennorm{ \hat{Q}_{m,T}( \hat{\Sigma}_{m,T}^{-1} - \Gamma_T^{-1} ) }^2$ as follows:
\begin{align*}
    \bigschattennorm{  \hat{Q}_{m,T}( \hat{\Sigma}_{m,T}^{-1} - \Gamma_T^{-1} ) }^2
    &\stackrel{(a)}{\leq} \bigschattennorm{  \bar{Q}_{m,T} \bar{\Sigma}^{-1/2}_{m,T} }^2 \bigopnorm{  (I_d - \bar{\Sigma}_{m,T})  \bar{\Sigma}^{-1/2}_{m,T} \Gamma_T^{-1/2} }^2\\
    &\leq \bigschattennorm{  \bar{Q}_{m,T} \bar{\Sigma}^{-1/2}_{m,T} }^2 \bigopnorm{  I_d - \bar{\Sigma}_{m,T} }^2 \bigopnorm{ \bar{\Sigma}^{-1/2}_{m,T} \Gamma_T^{-1/2} }^2 \\
    &=\bigschattennorm{  \bar{Q}_{m,T} \bar{\Sigma}^{-1/2}_{m,T} }^2 \bigopnorm{  I_d - \bar{\Sigma}_{m,T} }^2\lambda_{\max}\left(\hat{\Sigma}_{m,T}^{-1}\right)\\
    &\leq \frac{\bigschattennorm{  \bar{Q}_{m,T} \bar{\Sigma}^{-1/2}_{m,T} }^2}{ \lambda_{\min}(N) \lambda_{\min}\left(N^{-1/2}\hat{\Sigma}_{m,T}N^{-1/2}\right) }\bigopnorm{  I_d - \bar{\Sigma}_{m,T} }^2,
\end{align*}
where $(a)$ applies the 
inequality $\norm{MN}_{S_p} \leq \norm{M}_{S_p} \opnorm{M}$.
Now using the Cauchy-Schwarz inequality
and the inequality $\norm{M}_{S_p} \leq \min\{d_1, d_2\}^{1/p} \opnorm{M}$ for 
$M \in \R^{d_1 \times d_2}$, we have:
\begin{align}
    T_2 = \E\norm{ \hat{Q}_{m,T}( \hat{\Sigma}_{m,T}^{-1} - \Gamma_T^{-1} ) }^2_{S_p} \leq \frac{d^{2/p}}{\lambda_{\min}(N)} T_2^{\text{AIP}} \cdot T_2^{\text{OLS}},
\end{align}
where
\begin{align*}
    T_2^{\text{AIP}} := \sqrt{ \E\bigopnorm{ I_d - \bar{\Sigma}_{m,T} }^4 }, \quad
    T_2^{\text{OLS}} := \sqrt{\E\left(\frac{ \opnorm{  \bar{Q}_{m,T} \bar{\Sigma}^{-1/2}_{m,T} }^2}{  \lambda_{\min}(N^{-1/2}\hat{\Sigma}_{m,T}N^{-1/2}) }\right)^{2}}.
\end{align*}

As suggested by the notation, $T_{2}^{\text{AIP}}$ measures the approximate isometry property (AIP) of the whitened empirical covariance $\bar{\Sigma}_{m,T}$, 
whereas $T_{2}^{\text{OLS}}$ measures the typical bound on OLS error (cf.~\Cref{eq:OLS_error_standard}), 
specifically the norm of a self-normalized martingale 
divided by the minimum eigenvalue of an empirical covariance matrix.

\subsubsection{Upper bounding \texorpdfstring{$T_{2}^{\textnormal{AIP}}$}{T2aip}} 

Taking inspiration from the work of \cite{jedra2020lti}, we utilize the
Hanson-Wright inequality~\cite{RudelsonVershynin2013,adamczak15} 
to control the $r$-th moments of the approximate isometry error.
For what follows, we let $\phi(\tau) := \max\{ \tau, \tau^2 \}$.

\begin{restatable}{mylemma}{approximateisometry}\label{lemma:approximate_isometry_main}
Let $\instance$ denote a problem instance satisfying \Cref{assmp:sub_gaussian_noise} and \Cref{assmp:trajccp}.
For any $r \geq 1$,
\begin{equation}\label{eq:multitrajapproxisogeneralmoment_main}
    \left( \E \bigopnorm{ I_d - \bar{\Sigma}_{m,T} }^r \right)^{1/r} \lesssim \nu^{2} \phi\left(\sqrt{\frac{d+r}{m}}\right).
\end{equation}
If in addition $A$ is $(M,\rho)$-strictly-stable, then for any $r \geq 1$,
\begin{equation}\label{eq:multitrajapproxisostablemoment_main}
\left( \E \bigopnorm{ I_d - \bar{\Sigma}_{m,T} }^r \right)^{1/r} \lesssim \nu^{2} \phi\left(\sqrt{ \frac{\opnorm{\Sigma_{\calW}^{1/2}}M\left(d+r\right)}{\lambda_{\min}(\Gamma_{T}^{1/2})(1-\rho)mT} }\right).
\end{equation}
\end{restatable}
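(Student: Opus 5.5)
The plan is to reduce the operator-norm deviation to a supremum of quadratic forms in the whitened noise, bound each quadratic form with a Hanson--Wright tail, and then handle the supremum with a net argument.

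\textbf{Quadratic-form representation.} Stack the whitened noise of all trajectories into one vector $\bar{w} \in \R^{dmT}$. This vector has independent blocks; under \Cref{assmp:trajccp}(i) it has independent coordinates, and under (ii) it satisfies LS$(\nu^2)$ by tensorization of log-Sobolev. Since $x_t^{(i)} = \sum_{k<t} A^{t-1-k}\Sigma_{\calW}^{1/2}\bar w_k^{(i)}$, for each fixed $v \in \bbS^{d-1}$ we can write
\[
v^\T \bar{\Sigma}_{m,T} v = \bar w^\T (I_m \otimes L_v^\T L_v)\,\bar w/(mT),
\]
where $L_v \in \R^{T\times dT}$ is the block lower-triangular map with rows $v^\T\Gamma_T^{-1/2}A^{t-1-k}\Sigma_{\calW}^{1/2}$. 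By construction $\E\, v^\T\bar\Sigma_{m,T} v = 1$. The Hanson--Wright inequality~\cite{RudelsonVershynin2013} (resp.~\cite{adamczak15} via the convex concentration property from \Cref{fact:lsitoccp}) then gives
\[
\Pr\big(|v^\T(\bar\Sigma_{m,T}-I)v|>s\big)\le 2\exp\Big(-c\min\Big\{\tfrac{s^2}{\nu^4\|R_v\|_F^2},\tfrac{s}{\nu^2\opnorm{R_v}}\Big\}\Big),
\qquad R_v = \tfrac{1}{mT} I_m\otimes L_v^\T L_v .
\]

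\textbf{Norms of $R_v$.} The Frobenius norm satisfies $\|R_v\|_F^2 \le \opnorm{R_v}\Tr(R_v)$ and $\Tr(R_v)=1$, so it suffices to bound $\opnorm{R_v} = \opnorm{L_v}^2/(mT)$.
\begin{itemize}
\item In general, $\opnorm{L_v L_v^\T} \le \Tr(L_vL_v^\T) = T\, v^\T\Gamma_T^{-1/2}\Gamma_T\Gamma_T^{-1/2}v = T$. Hence $\opnorm{R_v}\le 1/m$ and $\|R_v\|_F^2\le 1/m$.
\item In the stable case, $L_v$ is a block-Toeplitz operator whose $\ell_2$ norm is at most its symbol bound,
\[
\sum_{k\ge0}\opnorm{v^\T\Gamma_T^{-1/2}A^k\Sigma_{\calW}^{1/2}} \le \frac{\opnorm{\Sigma_{\calW}^{1/2}}M}{\lambda_{\min}(\Gamma_T^{1/2})(1-\rho)} .
\]
Using this only once together with $\Tr = 1$ gives $\opnorm{R_v}\lesssim \frac{\opnorm{\Sigma_{\calW}^{1/2}}M}{\lambda_{\min}(\Gamma_T^{1/2})(1-\rho)mT}$. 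It would be cleaner to bound $\opnorm{L_v}^2$ by the product of the row-sum and column-sum norms (Schur test), getting $\opnorm{L_v}^2 \le (\text{row }\ell_2 \le 1)\cdot(\text{symbol }\ell_1)$, which yields the linear rather than squared dependence stated in \eqref{eq:multitrajapproxisostablemoment_main}.
\end{itemize}
Call the resulting bound $\beta$, with $\beta = 1/m$ or $\beta$ equal to the stable expression above.

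\textbf{Net and moments.} Take a $1/4$-net $\mathcal{N}$ of $\bbS^{d-1}$ with $|\mathcal{N}|\le 9^d$, and use $\opnorm{I-\bar\Sigma}\le 2\max_{v\in\mathcal N}|v^\T(\bar\Sigma-I)v|$. A union bound gives
\[
\Pr\big(\opnorm{I-\bar\Sigma}>s\big)\le 2\cdot 9^d \exp\big(-c\min\{s^2/(\nu^4\beta),\, s/(\nu^2\beta)\}\big).
\]
Integrating $\E X^r = \int r s^{r-1}\Pr(X>s)\,ds$ and splitting at the level where the exponent equals $d$ (i.e.\ $s_0 \asymp \nu^2\phi(\sqrt{\beta d})$) yields
\[
(\E X^r)^{1/r} \lesssim \nu^2\big(\sqrt{\beta(d+r)}+\beta(d+r)\big) = \nu^2\phi\big(\sqrt{\beta(d+r)}\big),
\]
using the standard moment bound for a mixed sub-Gaussian/sub-exponential tail, $\|\cdot\|_{L^r}\lesssim \sqrt r\,\sigma + r\,b$.

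\textbf{Main obstacle.} The delicate step is the stable-case operator-norm bound on $L_v$ uniformly over $v$, together with applying the convex-concentration Hanson--Wright inequality to the stacked non-product vector. For the latter I would rely on the fact that LS$(\nu^2)$ tensorizes over the independent blocks, so \cite{adamczak15} applies with constant $\nu$.
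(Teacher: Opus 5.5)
Your outline follows the paper's proof step for step. It uses the representation $v^\T\bar{\Sigma}_{m,T}v=\bar{\bm{w}}^\T Q(v)\bar{\bm{w}}$ with mean $1$, and Hanson--Wright via independent coordinates or via tensorized $\mathrm{LS}(\nu^2)$ plus \Cref{fact:lsitoccp}. It uses the identity $\Tr(Q(v))=1$ to get $\norm{Q(v)}_F^2\le\opnorm{Q(v)}$, the bound $\opnorm{Q(v)}\le 1/m$, a $1/4$-net of size $9^d$, and tail integration. The general bound \eqref{eq:multitrajapproxisogeneralmoment_main} goes through exactly as you describe. Your symbol bound $\opnorm{L_v}\le\gamma:=\opnorm{\Sigma_{\calW}^{1/2}}M/(\lambda_{\min}(\Gamma_T^{1/2})(1-\rho))$ is the paper's Young's-inequality bound.

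The gap is the stable-case step that converts this into $\opnorm{L_v}^2\lesssim\gamma$; neither justification you give works. First, combining $\opnorm{L_v}\le\gamma$ with $\norm{L_v}_F^2=T$ only yields $\opnorm{L_v}^2\le\gamma\min\{\gamma,\sqrt{T}\}$. Second, the Schur test pairs the largest block-row $\ell_1$ sum with the largest block-column $\ell_1$ sum. Each is at most $\gamma$, so it returns $\gamma^2$. There is no valid inequality of the form $\opnorm{L}^2\lesssim(\max\text{ row }\ell_2)\cdot(\text{symbol }\ell_1)$. In any case the rows do not have norm at most $1$: row $t$ has squared norm $v^\T\Gamma_T^{-1/2}\Sigma_t\Gamma_T^{-1/2}v$, which equals $1$ only on average over $t$.

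A concrete counterexample takes $d=1$, $A=a\in(0,1)$, $\Sigma_{\calW}=1$, $M=1$, $\rho=a$, and $T\gg 1/(1-a)$. Then $\Gamma_T\asymp(1-a^2)^{-1}$, so $\gamma\asymp(1-a)^{-1/2}$. Testing $L_v$ on $u=T^{-1/2}\mathbf{1}$ gives $\opnorm{L_v}^2\gtrsim\Gamma_T^{-1}(1-a)^{-2}\asymp(1-a)^{-1}\asymp\gamma^2$.

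This is more than a gap in your write-up: the linear form of \eqref{eq:multitrajapproxisostablemoment_main} is false. For this Gaussian AR(1) we have $\nu=1$ and $\E|1-\bar{\Sigma}_{1,T}|^2=\mathrm{Var}(\bar{\Sigma}_{1,T})\asymp 1/(T(1-a))$. The claimed bound with $r=2$ and $T\ge 3\gamma$ would instead force this to be $\lesssim\gamma/T\asymp(1-a)^{-1/2}/T$, which is strictly smaller as $a\to 1$. What your argument, and the paper's, actually proves is
\begin{equation*}
\left(\E\bigopnorm{I_d-\bar{\Sigma}_{m,T}}^r\right)^{1/r}\lesssim\nu^2\phi\left(\sqrt{\frac{\opnorm{\Sigma_{\calW}}M^2(d+r)}{\lambda_{\min}(\Gamma_T)(1-\rho)^2 mT}}\right),
\end{equation*}
that is, $\gamma^2$ in place of $\gamma$. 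The paper's own derivation contains the same slip. It establishes $\opnorm{\bm{\widetilde{L}}_T(v)}\le\gamma$ and $\opnorm{Q(v)}=\opnorm{\bm{\widetilde{L}}_T(v)}^2/(mT)$, yet its final display inserts $\gamma$ rather than $\gamma^2$. You should drop the Schur-test claim and state the stable case with $\gamma^2$.
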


\begin{rmk}
\eqref{eq:multitrajapproxisogeneralmoment_main} cannot generally be improved to $o_T(1)$ whenever $\rho(A)\geq 1$. This can be seen from considering a random walk with a scaled identity noise covariance, i.e., $A=I_{d}$ and $\Sigma_{\calW}=\sigma^{2}I_{d}$. Therefore, $\Sigma_{t}=t\sigma^{2}I_{d}$, and $\Gamma_{T}=\frac{T+1}{2}\sigma^{2}I_{d}$. We can further compute
\begin{align*}
\bar{\Sigma}_{m,T}
&=\frac{1}{m}\sum_{i=1}^{m}\frac{T}{T+1}\left(\frac{2}{\sigma^{2}T^{2}}\sum_{t=1}^{T}x_{t}^{(i)} (x_{t}^{(i)})^\T \right).
\end{align*} 
Therefore, fixing $m=1$ and taking $T \to \infty$:
\begin{align*}
\bigopnorm{I_{d}-\bar{\Sigma}_{1,T}}\geq \left|1 - \frac{T}{T+1}\left(\frac{2}{\sigma^{2}T^{2}}\sum_{t=1}^{T}\left(x_{t}^{(i)}\right)_{1}^{2}\right)\right|\distconv 2\left|\frac{1}{2} - \int_{0}^{1}W(s)^{2} \rmd s \right|,
\end{align*}
where $\left(x_{t}^{(i)}\right)_{1}$ denotes the first coordinate of $x_{t}^{(i)}$, and $W(\cdot)$ denotes the standard Wiener process.
The second step holds by that $\left(x_{t}^{(i)}\right)_{1}$ is a 1D random walk, and therefore 
one can apply the 1D Donsker's theorem \cite[Theorem 8.2]{Billingsley1999} combined with the continuous mapping theorem \cite[Theorem 2.3]{Vaart1998}.
Now, by the portmanteau lemma \cite[Lemma 2.2]{Vaart1998},
\begin{align*}
    \liminf_{T \to \infty} \E\opnorm{ I_d - \bar{\Sigma}_{1,T} } \geq 2 \E \bigabs{ \frac{1}{2} - \int_{0}^{1}W(s)^{2} \rmd s} > 0.
\end{align*}
Hence, this shows that $(\E\opnorm{ I_d - \bar{\Sigma}_{m,T} }^r)^{1/r}$ cannot be $o_T(1)$ in general
when $\rho(A) \geq 1$.

\end{rmk}

\subsubsection{Upper bounding \texorpdfstring{$T_{2}^{\textnormal{OLS}}$}{T2ols}}\label{subsubsec:t2ols}

Our analysis of $T_2^{\text{OLS}}$ builds on the arguments given in
\cite[Lemma 5.1]{tu2024manytraj}.
In order to utilize this result, we need to choose the correct normalization factor $N$ 
depending on if we are in the (i) many trajectories setting or (ii) the strictly stable setting.
We remark it is the analysis of $T_2^{\text{OLS}}$ which crucially relies on
the trajectory small-ball assumption (\Cref{assmp:traj_small_ball}).

We first start with the many trajectories ($m \gtrsim d$) setting.
Here, we pick $N = \Gamma_T$, in which case $T_2^{\text{OLS}}$ reduces to
$$
T_{2}^{\text{OLS}} = T^{\text{OLS}}_{\text{many}} := \sqrt{\E\left(\frac{ \opnorm{  \bar{Q}_{m,T} \bar{\Sigma}^{-1/2}_{m,T} }^2}{  \lambda_{\min}(\bar{\Sigma}_{m,T}) }\right)^{2}}.
$$
This is precisely the quantity studied in \cite[Lemma 5.1]{tu2024manytraj}.

Next, we turn to the strictly stable ($\rho(A) < 1$) setting. Here,
after the burn-in time $\kappa = \kappa(A)$, 
where $\kappa(A)$ is defined in \eqref{eq:kappa},
both 
$\Gamma_{\kappa}$ and $\Gamma_{T}$ are approximately isometric with respect to each other, independent of the relative scale between $\kappa$ and $T$.
This approximate isometry is formalized by the following
result.
\begin{restatable}{myprop}{averagegramianrip}\label{lemma:averagegramianrip_main}
Suppose that $A$ is $(M, \rho)$-strictly-stable,
and that $T \geq \kappa(A)$. Then,
$$
    \Gamma_T \succcurlyeq \frac{1}{4} \Sigma_\infty, \quad\frac{1}{4} \leq \lambda_{\min}( \Gamma_{T}^{-1}\Gamma_{\kappa(A)} ), \quad \lambda_{\max}( \Gamma_{T}^{-1}\Gamma_{\kappa(A)} ) \leq 4.
$$
\end{restatable}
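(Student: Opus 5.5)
We work throughout with the tail of the Gramian series. Since $\Sigma_t = \sum_{k=0}^{t-1} A^k \Sigma_{\calW} (A^k)^\T$, we can write
\[
\Sigma_\infty - \Sigma_t = A^t \Sigma_\infty (A^t)^\T = \sum_{k\geq t} A^k\Sigma_{\calW}(A^k)^\T .
\]
The $(M,\rho)$-strict stability bounds this tail in operator norm:
\[
\opnorm{\Sigma_\infty-\Sigma_t} \leq \opnorm{\Sigma_{\calW}} M^2 \sum_{k\geq t}\rho^{2k} = \frac{\opnorm{\Sigma_{\calW}} M^2\rho^{2t}}{1-\rho^2}.
\]
The definition \eqref{eq:kappa} of $\kappa := \kappa(A)$ is designed so that for $t \geq \kappa/2$ we have $\rho^{2t} \leq \rho^{\kappa} \leq \frac{(1-\rho^2)\lambda_{\min}(\Sigma_\infty)}{2\opnorm{\Sigma_{\calW}}M^2}$. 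This uses $\kappa/2 = \lceil \log(\cdot)/\log(1/\rho)\rceil$, so that $\rho^{\kappa/2}$ is at most the reciprocal of the logarithm's argument. Consequently, for every $t \geq \kappa/2$,
\[
\opnorm{\Sigma_\infty - \Sigma_t} \leq \tfrac12\lambda_{\min}(\Sigma_\infty), \qquad\text{hence}\qquad \Sigma_t \succcurlyeq \tfrac12 \Sigma_\infty .
\]
(If the logarithm is negative, $\kappa$ may be nonpositive. In that case the bound holds for all $t\geq 1$, since the tail bound already gives it at $t=0$ trivially; I would treat this edge case separately.)

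The key step is bounding the average $\Gamma_T = \frac1T\sum_{t=1}^T \Sigma_t$ from below for $T \geq \kappa$. Each $\Sigma_t \succcurlyeq 0$, and the terms with $t \geq \lceil \kappa/2\rceil$ each satisfy $\Sigma_t \succcurlyeq \frac12\Sigma_\infty$. There are at least $T - \kappa/2 \geq T/2$ such terms. Therefore
\[
\Gamma_T \succcurlyeq \frac{T/2}{T}\cdot \frac12 \Sigma_\infty = \frac14 \Sigma_\infty .
\]
The same argument applied with $T = \kappa$ gives $\Gamma_\kappa \succcurlyeq \frac14\Sigma_\infty$. For the upper side, $\Sigma_t \preccurlyeq \Sigma_\infty$ for every $t$, so $\Gamma_T \preccurlyeq \Sigma_\infty$ and $\Gamma_\kappa \preccurlyeq \Sigma_\infty$.

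It remains to convert these two-sided Loewner bounds into the eigenvalue statements. The matrix $\Gamma_T^{-1}\Gamma_\kappa$ is similar to $\Gamma_T^{-1/2}\Gamma_\kappa\Gamma_T^{-1/2}$, which is symmetric positive definite. From $\Gamma_\kappa \succcurlyeq \frac14\Sigma_\infty \succcurlyeq \frac14\Gamma_T$ we get $\lambda_{\min}(\Gamma_T^{-1}\Gamma_\kappa) \geq \frac14$. From $\Gamma_\kappa \preccurlyeq \Sigma_\infty \preccurlyeq 4\Gamma_T$ we get $\lambda_{\max}(\Gamma_T^{-1}\Gamma_\kappa) \leq 4$.

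The main obstacle is only bookkeeping. One must check that the ceiling and the factor $2$ in \eqref{eq:kappa} give exactly $\rho^{2t}$ small enough for $t\geq\kappa/2$. One must also count the indices $t \in [\kappa/2, T]$ correctly, including the case where $\kappa$ is odd or nonpositive, so that at least half of the terms in the average are well-conditioned.
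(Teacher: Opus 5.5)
Your proof is correct and follows essentially the same route as the paper: the tail bound $\opnorm{\Sigma_\infty-\Sigma_t}\le \opnorm{\Sigma_{\calW}}M^2\rho^{2t}/(1-\rho^2)$ gives $\Sigma_t\succcurlyeq\frac12\Sigma_\infty$ for $t\ge\kappa(A)/2$; at least half the terms in the average $\Gamma_T$ are of this kind; and both $\Gamma_T$ and $\Gamma_{\kappa(A)}$ are then sandwiched between $\frac14\Sigma_\infty$ and $\Sigma_\infty$. The paper does the counting via monotonicity $\Sigma_i\succcurlyeq\Sigma_{\lfloor T/2\rfloor}$ instead, which amounts to the same thing. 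The edge cases you flag do not arise: $\kappa(A)$ is even by definition, and since $(1-\rho^2)\lambda_{\min}(\Sigma_\infty)\le(1-\rho^2)\opnorm{\Sigma_\infty}\le\opnorm{\Sigma_{\calW}}M^2$, the argument of the logarithm in \eqref{eq:kappa} is at least $\sqrt{2}$, so $\kappa(A)\ge 2$.
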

With \Cref{lemma:averagegramianrip_main} in place,
we use the normalizer $N = \Gamma_{\kappa(A)}$,
for which $T_2^{\text{OLS}}$ reduces to
$$
T_{2}^{\text{OLS}} = T^{\text{OLS}}_{\text{stable}} := \sqrt{\E\left(\frac{ \opnorm{  \bar{Q}_{m,T} \bar{\Sigma}^{-1/2}_{m,T} }^2}{  \lambda_{\min}(\tilde{\Sigma}_{m,T,\kappa}) }\right)^{2}}, \quad \tilde{\Sigma}_{m,T,\kappa}:=\Gamma_{\kappa}^{-1/2}\hat{\Sigma}_{m,T}\Gamma_{\kappa}^{-1/2}, \quad \kappa := \kappa(A).
$$
Similar to $T^{\text{OLS}}_{\text{many}}$, the quantity 
$T^{\text{OLS}}_{\text{stable}}$ can also be controlled via 
similar arguments as in \cite[Lemma 5.1]{tu2024manytraj}.
The following result collects the resulting bounds
for both cases.

\begin{mylemma}\label{lemma:OLS_error_moment_bound_main}
Let $\instance$ denote a problem instance with $\calW$ satisfying
\Cref{assmp:sub_gaussian_noise} and
\Cref{assmp:traj_small_ball}. Let $r\geq 1$ be arbitrary. We have:
\begin{enumerate}[label=(\roman*), leftmargin=30pt, itemindent=0pt, listparindent=0pt, labelsep=.5em]
\item If $m \gtrsim r d$, then
\begin{align*}
    \left(\E \left( \frac{ \opnorm{\bar{Q}_{m,T} \bar{\Sigma}^{-1/2}_{m,T}}^2 }{ \lambda_{\min}(\bar{\Sigma}_{m,T}) } \right)^r \right)^{1/r} \lesssim \frac{r\nu^2 \opnorm{\Sigma_{\calW}} d}{mT}.
\end{align*}
\item If $A$ is $(M,\rho)$-strictly stable, and we further assume that both
$T \gtrsim \kappa$ and $mT\gtrsim r\kappa d$, then
$$
\left(\mathbb{E}\left(\frac{\opnorm{\bar{Q}_{m,T} \bar{\Sigma}^{-1/2}_{m,T}}^2}{\lambda_{\min}( \tilde{\Sigma}_{m,T,\kappa})}\right)^{r}\right)^{1/r}\lesssim \frac{r\nu^2 \opnorm{\Sigma_{\calW}} d}{mT}.
$$
\end{enumerate}
\end{mylemma}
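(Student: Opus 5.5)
We prove both parts with the same template, following the strategy of \cite[Lemma 5.1]{tu2024manytraj}: split the ratio into a self-normalized martingale numerator and a minimum-eigenvalue denominator, bound moments of each, and combine them by Cauchy--Schwarz. Write $Z := \opnorm{\bar{Q}_{m,T}\bar{\Sigma}^{-1/2}_{m,T}}^2$. Then
\[
\Bigl(\E\bigl(Z/\lambda_{\min}(\cdot)\bigr)^r\Bigr)^{1/r} \leq (\E Z^{2r})^{1/(2r)}\,\bigl(\E \lambda_{\min}(\cdot)^{-2r}\bigr)^{1/(2r)},
\]
where $\lambda_{\min}(\cdot)$ denotes $\lambda_{\min}(\bar{\Sigma}_{m,T})$ in case (i) and $\lambda_{\min}(\tilde{\Sigma}_{m,T,\kappa})$ in case (ii). It therefore suffices to show two things:
\begin{itemize}
\item[(a)] $(\E Z^{2r})^{1/(2r)} \lesssim r\nu^2\opnorm{\Sigma_{\calW}} d/(mT)$;
\item[(b)] the negative moment of order $2r$ of the minimum eigenvalue is $O(1)$, provided $m \gtrsim rd$ in case (i), or $mT \gtrsim r\kappa d$ and $T \gtrsim \kappa$ in case (ii).
\end{itemize}

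For (a), note that $\bar{Q}_{m,T}\bar{\Sigma}^{-1/2}_{m,T} = (mT)^{-1/2}\, W^\T X (X^\T X)^{-1/2}$ in unwhitened coordinates. The whitening cancels, so $Z = (mT)^{-1}\opnorm{W^\T X(X^\T X)^{-1/2}}^2$ is a pure self-normalized martingale. Write $W^\T = \Sigma_{\calW}^{1/2}\bar{W}^\T$ to pull out a factor $\opnorm{\Sigma_{\calW}}$. By \Cref{assmp:sub_gaussian_noise} the whitened noise is directionally $\nu$-sub-Gaussian. We then apply the regularized self-normalized bound of \cite{abbasi2011online}, with a regularizer $V$ and an $\varepsilon$-net over unit vectors on both sides:
\[
\opnorm{\bar{W}^\T X (X^\T X + V)^{-1/2}}^2 \lesssim \nu^2\Bigl(d + \log\det\bigl(I + V^{-1/2}X^\T X V^{-1/2}\bigr) + \log(1/\delta)\Bigr)
\]
with probability at least $1-\delta$. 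We choose $V$ proportional to $mT\,\Gamma_T$ (respectively $mT\,\Gamma_\kappa$). On the event where $\hat{\Sigma}_{m,T}$ dominates a constant multiple of the normalizer, $X^\T X + V \preceq C X^\T X$, so dropping the regularizer costs only a constant. The $\log\det$ term is controlled by $d\log(1 + \opnorm{\bar{\Sigma}_{m,T}})$; its moments are $O(d)$ by \Cref{lemma:approximate_isometry_main} with Markov's inequality, or even crudely by $\E\Tr\bar{\Sigma}_{m,T} = d$. Integrating the tail over $\delta$ gives $2r$-th moments of order $\nu^2(d+r) \lesssim r\nu^2 d$. Off the good event we bound $Z$ instead by the unregularized quantity and use Cauchy--Schwarz with the small probability of the bad event.

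For (b), we use the trajectory small-ball condition (\Cref{assmp:traj_small_ball}). In case (i), take window size $k = T$ and normalizer $\Gamma_T$. For fixed $v$, the $m$ independent trajectories each have probability at most $(c\varepsilon)^\alpha$ of a small quadratic form, so a Chernoff bound on the sum gives $\Pr\{v^\T\bar{\Sigma}_{m,T}v \leq \varepsilon\} \leq e^{-cm}$ for $\varepsilon$ a small constant. We make this uniform in $v$ by the standard covering argument that combines the lower tail with an upper bound on $\opnorm{\bar{\Sigma}_{m,T}}$; this is the content of \cite[Corollary B.14]{tu2024manytraj}. The result is a tail of the form $\Pr\{\lambda_{\min} \leq t\} \leq (C t)^{\alpha m/ 2 - O(d)}$ for small $t$. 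Integrating, $\E\lambda_{\min}^{-2r}$ is finite and $O(1)$ as soon as the exponent exceeds $4r$, which is exactly the requirement $m \gtrsim rd$. In case (ii), partition each trajectory into $\lfloor T/\kappa\rfloor$ windows of length $\kappa$. The small-ball condition then applies conditionally on the past of each window, giving $m\lfloor T/\kappa\rfloor \gtrsim mT/\kappa$ effectively independent Bernoulli trials through a martingale Chernoff bound. \Cref{lemma:averagegramianrip_main} makes $\Gamma_\kappa$ and $\Gamma_T$ comparable, which is needed both to normalize the upper-tail part of the covering argument and because $\tilde{\Sigma}_{m,T,\kappa}$ is normalized by $\Gamma_\kappa$. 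The same integration then needs $mT/\kappa \gtrsim rd$.

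The main obstacle is (b): obtaining a finite negative moment of order $2r$, not merely a high-probability bound, with the sample requirement linear in $r$. Making this work requires carefully tracking the exponent in the small-ball-plus-covering tail. In case (ii) it also requires handling the conditional dependence across windows, together with the leftover $T - \kappa\lfloor T/\kappa\rfloor$ steps; these can simply be dropped, since $X^\T X$ only increases when we add rows.
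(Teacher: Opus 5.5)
\textbf{Gap in step (a): the bad event.} Your $Z$ already \emph{is} the unregularized quantity, so ``off the good event we bound $Z$ by the unregularized quantity'' gives nothing. Cauchy--Schwarz there needs $\E Z^{4r}$, which is the same kind of object you are trying to control.

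The only a priori bound is crude. Let $P_X$ be the orthogonal projection onto the column span of $X_{m,T}$. Then $Z=(mT)^{-1}\opnorm{P_X W_{m,T}}^2\le (mT)^{-1}\opnorm{W_{m,T}}^2$, which is typically of order $\nu^2\opnorm{\Sigma_{\calW}}$, a factor $mT/d$ above the target.

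Take a constant-threshold good event $\{\lambda_{\min}(\bar{\Sigma}_{m,T})\ge c\}$. The small-ball tail only gives $\Pr(\text{bad})\le (c/c_1)^{m/(c_2 d)}$. This depends on $m/d$ but not on $T$, and in general it cannot decay in $T$ (e.g.\ $A=I_d$). So in case (i), with $m\asymp rd$ fixed and $T\to\infty$, the bad-event contribution $\approx \nu^2\opnorm{\Sigma_{\calW}}\Pr(\text{bad})^{1/(4r)}$ is not $O(r\nu^2\opnorm{\Sigma_{\calW}}d/(mT))$. In case (ii) it forces an extra logarithmic condition on $mT$ that is not in the statement.

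The missing idea is to let the threshold depend on $\delta$. By \cite[Corollary B.13]{tu2024manytraj}, $\lambda_{\min}(\bar{\Sigma}_{m,T})\ge c_1\delta^{c_2 d/m}$ with probability $1-\delta$ for every $\delta$. On that event $X_{m,T}^\T X_{m,T}\succeq V_\delta:=c_1\delta^{c_2 d/m}\,mT\,\Gamma_T$. Your regularized bound with $V=V_\delta$ (equivalently the unregularized bound \cite[Proposition B.10]{tu2024manytraj}), combined with $\Tr(\bar{\Sigma}_{m,T})\le d/\delta$ from Markov, then gives $Z\lesssim \frac{\nu^2\opnorm{\Sigma_{\calW}}d}{mT}\log(1/\delta)$ with probability $1-3\delta$. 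So the numerator bound itself uses the lower tail of $\lambda_{\min}$ at every scale: (a) and (b) do not decouple.

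\textbf{How the paper finishes, and smaller points.} With this in hand, the paper bounds the ratio directly on the same event. For $m\ge 4c_2 rd$ one has $\lambda_{\min}(\bar{\Sigma}_{m,T})\ge c_1\delta^{1/(4r)}$ there. Using $\log(1/\delta)\lesssim r\delta^{-1/(4r)}$, the ratio is $\lesssim \frac{r\nu^2\opnorm{\Sigma_{\calW}}d}{mT}\delta^{-1/(2r)}$ with probability $1-3\delta$, and \Cref{lemma:polytailmoment} turns this polynomial tail into the $r$-th moment bound. Case (ii) is the same with window $\kappa$, using $\lambda_{\min}(\bar{\Sigma}_{m,T})\ge \lambda_{\min}(\tilde{\Sigma}_{m,T,\kappa})/4$ from \Cref{lemma:averagegramianrip_main}. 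Your Cauchy--Schwarz split also works once (a) is repaired this way (at the cost of doubling the moment order), but it is then redundant.

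Two smaller points:
\begin{itemize}
\item In (b), the tail you can actually quote is $\Pr\{\lambda_{\min}(\bar{\Sigma}_{m,T})<t\}\le (t/c_1)^{m/(c_2 d)}$. It is this exponent of order $m/d$ that makes $m\gtrsim rd$ the right condition. The form $(Ct)^{\alpha m/2-O(d)}$ you wrote would only need $m\gtrsim r+d$, so your two sentences there do not match.
\item \Cref{lemma:approximate_isometry_main} requires \Cref{assmp:trajccp}, which is not a hypothesis here. For the $\log\det$ term, use only the Markov bound from $\E\Tr(\bar{\Sigma}_{m,T})=d$.
\end{itemize}
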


\section{Conclusion and Future Work}
\label{sec:conclusion}

We derived non-asymptotic parameter error bounds for 
linear system identification with general non-isotropic sub-Gaussian noise that 
nearly matches the rate predicted by asymptotic normality,
for error measured in both the Frobenius norm and operator norm. 

These results open up several further research directions. 
One immediate direction is in studying to what extent that the
``burn-in'' requirements prescribed by \eqref{eq:m_gtrsim_frob_norm} and \eqref{eq:mT_gtrsim_frob_norm} (resp. \eqref{eq:m_gtrsim_opt_norm} and \eqref{eq:mT_gtrsim_opt_norm}) to get within a $(1+\e)$-factor (resp. $O(1)$) of the
asymptotic error for Frobenius norm (resp.\ for operator norm) are sharp.
Another technical extension would be eliminating the extra $\log^{2}(d)$ in the operator norm bound, so that it matches the CLT-predicted rate exactly. This would require a sharper analysis of the operator norm of a matrix-valued martingale.
More broadly, our results remain valid when the error metric is the general Schatten-$p$ norm. However, the central limit behaviour of OLS error under general Schatten-$p$ norms is unexplored, and it is of interest to establish both asymptotic and non-asymptotic bounds 
in this setting.
Finally, deriving optimal non-asymptotic results in the $m = o(d)$ setting when
$\rho(A) \geq 1$ and $A$ is regular (so that OLS is consistent) remains a challenging
open question.

\section*{Acknowledgments}

\emph{GenAI usage:} 
Anthropic's Claude and OpenAI's ChatGPT were used to brainstorm high level proof strategies, find relevant literature references, and assist on editing portions of the manuscript.

\bibliography{reference}
\appendix
\crefalias{section}{appendix}
\crefalias{subsection}{subappendix}

\section{Supporting Proofs for Asymptotic Analysis}\label{appendix:asymp}
In this section we supply proofs to our claims in \Cref{sec:asymptotic_normality_OLS}. Some proofs will crucially rely on \Cref{lemma:approximate_isometry_main}, whose proof we further delay to \Cref{sec:frobeniusfull}. Before we start, we remind ourselves of two important quantities:
$$
\hat{\Sigma}_{m,T}=\frac{1}{mT}X_{m,T}^\T X_{m,T},\quad \bar{\Sigma}_{m,T} = \Gamma_T^{-1/2} \hat{\Sigma}_{m,T} \Gamma_T^{-1/2}.
$$
We further introduce the corresponding quantities within the $i$-th trajectory:
\begin{equation}\label{eq:rowcovardef}
\hat{\Sigma}_{T}^{(i)}:=\frac{1}{T}X_{T}^{(i)\T}X_{T}^{(i)},\quad \bar{\Sigma}_{T}^{(i)} := \Gamma_T^{-1/2} \hat{\Sigma}_{T}^{(i)} \Gamma_T^{-1/2}.
\end{equation}
This allows us to write $\hat{\Sigma}_{m,T} = \frac{1}{m} \sum_{i=1}^{m} \hat{\Sigma}_{T}^{(i)}$ and $\bar{\Sigma}_{m,T} = \frac{1}{m} \sum_{i=1}^{m} \bar{\Sigma}_{T}^{(i)}$.

\subsection{Asymptotic normality of \texorpdfstring{$\hat{A}_{m,T}-A$}{OLS error under Frobenius norm}}
In the following, we supply the proofs for the asymptotic normality of $\hat{A}_{m,T}-A$ under two cases: $(i)$ holding $m$ fixed while sending $T\to \infty$, i.e. \eqref{eq:frobeniuscltt}, and $(ii)$ joint limit when $(m,T)\to(\infty,\infty)$ along certain paths, i.e. \eqref{eq:frobeniuscltjoint}. For both cases, we will assume that $A$ is $(M,\rho)$-strictly-stable.
We note that the case where $T$ is fixed and $m \to \infty$ follows immediately from classical $M$-estimation.

\mypara{$(i).$ Fixed $m$, $T \to \infty$}
We first rewrite the OLS error as a weighted average of OLS error along each trajectory:
\begin{align*}
\hat{A}_{m,T}-A&=\left(\sum_{i=1}^{m}W^{(i)\T}_{T}X^{(i)}_{T}\right)\left(\sum_{i=1}^{m}X^{(i)\T}_{T}X^{(i)}_{T}\right)^{-1}\\
&=\frac{1}{m}\sum_{i=1}^{m}\left(\hat{A}_{T}^{(i)}-A\right)\left(X^{(i)\T}_{T}X^{(i)}_{T}\right)\left(\frac{1}{m}\sum_{i=1}^{m}X^{(i)\T}_{T}X^{(i)}_{T}\right)^{-1}=\frac{1}{m}\sum_{i=1}^{m}\left(\hat{A}_{T}^{(i)}-A\right)\hat{\Sigma}_{T}^{(i)}\hat{\Sigma}_{m,T}^{-1},
\end{align*}
where $W^{(i)}_{T}$ and $\hat{A}^{(i)}_{T}$ denote the $i$-th row's noise process and the OLS estimator obtained from using only the $i$-th trajectory, respectively. Therefore we have
\begin{align*}
\sqrt{T}\cdot\vec\left(\hat{A}_{m,T}-A\right)=\frac{1}{m}\sum_{i=1}^{m}\left( \left(\hat{\Sigma}_{T}^{(i)}\hat{\Sigma}_{m,T}^{-1}\right)\otimes I_{d}\right)\left(\sqrt{T}\cdot\vec\left(\hat{A}_{T}^{(i)}-A\right)\right).
\end{align*}
The result \cite[Theorem 1]{anderson1992asymptotic} yields the following asymptotic distribution for each $\hat{A}_T^{(i)} - A$ error term:
\begin{align*}
    \sqrt{T} \cdot \vec(\hat{A}_{T}^{(i)} - A ) \distconv \calN\left(0, \Sigma^{-1}_\infty \otimes \Sigma_{\calW} \right),
\end{align*}
Under stability of $A$, as $T\to\infty$, for each $i$ we have both $\hat{\Sigma}_{T}^{(i)}\to\Sigma_{\infty}$ and $\hat{\Sigma}_{m,T}\to\Sigma_{\infty}$ a.s., and therefore $\hat{\Sigma}_{T}^{(i)}\hat{\Sigma}_{m,T}^{-1}\to I_{d}$ a.s.. Using Slutsky's theorem \cite[Theorem 2.8]{Vaart1998}, we have as $T\to\infty$
the result \eqref{eq:frobeniuscltt}:
$$
\sqrt{T} \cdot \vec(\hat{A}_{m,T} - A ) \distconv \sfN\left(0, \left(\Sigma^{-1}_{\infty}/m\right) \otimes \Sigma_{\calW} \right).
$$

\mypara{$(ii).$ Jointly sending $(m, T) \to (\infty,\infty)$}
Suppose we have a function $k : \N_+ \mapsto \N_+$ which maps the number of trajectories $m$
to the length $T$ of each trajectory. We also suppose that $\phi(m) \to \infty$ as $m \to \infty$. We start by writing:
\begin{align*}
\sqrt{m\cdot \phi(m)}\vec\left(\hat{A}_{m,\phi(m)}-A\right) &=\left(\hat{\Sigma}_{m,\phi(m)}^{-1}\otimes I_{d}\right)\left(\frac{1}{\sqrt{m}}\sum_{i=1}^{m}\frac{1}{\sqrt{\phi(m)}}\sum_{t=1}^{\phi(m)} \vec \left( w_t^{(i)} (x_t^{(i)})^\T \right)\right).
\end{align*}
We denote $X_{m,i} := \frac{1}{\sqrt{\phi(m)}} \sum_{t=1}^{\phi(m)} \vec \left( w_t^{(i)} (x_t^{(i)})^\T \right)$. Similar to the previous case, we proceed in two steps. First, by viewing $\{ X_{m,i} \mid m \in \N_+, \,\, i \in [m] \}$ as a triangular array, we show via the Lindeberg-Feller CLT \cite[Proposition 2.27]{Vaart1998} that
$$
\frac{1}{\sqrt{m}} \sum_{i=1}^{m} X_{m,i} \distconv \sfN(0, \Sigma_\infty \otimes \Sigma_{\calW}).
$$
Second, we show under stability, as $m\to\infty$, $\hat{\Sigma}_{m,T}^{-1}\to \Sigma_{\infty}^{-1}$ a.s. Then again by using Slutsky's theorem, we can conclude the result \eqref{eq:frobeniuscltjoint}:
\begin{align*}
\sqrt{m\cdot \phi(m)}\vec\left(\hat{A}_{m,\phi(m)}-A\right) &=\left(\hat{\Sigma}_{m,\phi(m)}^{-1}\otimes I_{d}\right)\left(\frac{1}{\sqrt{m}} \sum_{i=1}^{m} X_{m,i}\right)\distconv \sfN(0, \Sigma_\infty^{-1} \otimes \Sigma_{\calW}).
\end{align*}

We now carry out the details for the first step. We know for a fixed $m$, $X_{m,i}$ are i.i.d.\ across $i$, and for each $i$,
\begin{align*}
    V_m := \E\left[ X_{m,i} X_{m,i}^\T \right] &= \frac{1}{\phi(m)} \sum_{t=1}^{\phi(m)} \E\left[  x_t^{(i)} (x_t^{(i)})^\T \otimes w_t^{(i)} (w_t^{(i)})^\T \right] = \Gamma_{\phi(m)} \otimes \Sigma_{\calW}.
\end{align*}
We also note that $(i)$ $\E[ X_{m,i} ] = 0$, and $(ii)$ since $\phi(m) \to \infty$ as $m \to \infty$, we have that under stability of $A$, $V_m \to \Sigma_\infty \otimes \Sigma_{\calW}$.
Since $X_{m,i}$ is i.i.d.\ across $i$, to be able to apply the Lindeberg-Feller CLT, it remains to verify the Lyapunov condition: there exists $\delta >0$,
\begin{align*}
    \sup_{m \in \N_+}\E\left[ \bignorm{X_{m,1}}^{2+\delta} \right] <\infty.
\end{align*}

We will verify this for $\delta=2$. Simplifying the notation by writing $w_{t}:=w_{t}^{(1)}$ and similarly $x_{t}:=x_{t}^{(1)}$, we start by writing the norm of $X_{m,1}$ as the Frobenius norm of a matrix-valued martingale:
\begin{align*}
    \E \left[\bignorm{X_{m,1}}^4 \right]= \frac{1}{\phi(m)^2} \E \bignorm{ \sum_{t=1}^{\phi(m)} w_t x_t^\T }_F^4.
\end{align*}
Recall that under the inner product $\langle A,B\rangle=\Tr(A^{\T}B)$, the space of $\R^{d\times d}$ matrices form a Hilbert space with the norm being Frobenius norm. Therefore here, we can apply a Hilbert space martingale Burkholder-Rosenthal inequality \cite[Theorem 1.1]{Oskowski2012} to obtain for any $T \geq 1$,
\begin{align*}
    \E \bignorm{ \sum_{t=1}^{T} w_t x_t^\T }_F^4 \lesssim \E\left[\left( \sum_{t=1}^{T} \E\left[ \bignorm{w_t x_t^\T }_F^2 \Bigm| \calF_{t} \right] \right)^2\right] + \sum_{t=1}^{T} \E \left[\bignorm{w_t x_t^\T }_F^4\right],
\end{align*}
where we recall $\calF_{t}=\sigma(w_{0:t-1})$. For the first term, we first inspect the inner term:
\begin{align*}
    \sum_{t=1}^{T} \E\left[\bignorm{w_t x_t^\T}_F^2 \Bigm| \calF_{t} \right] &= \sum_{t=1}^{T} \E\left[\Tr\left(w_{t}w_{t}^{\T}\right) \Bigm| \calF_{t} \right]\norm{x_{t}}^{2} = \Tr(\Sigma_{\calW}) \sum_{t=1}^{T} \norm{x_t}^2\\
    &= T \Tr(\Sigma_{\calW}) \Tr( \hat{\Sigma}_{1,T} ) \leq T \Tr(\Sigma_{\calW}) \Tr( \Gamma_T ) \opnorm{ \bar{\Sigma}_{1,T} }.
\end{align*}
Hence by \Cref{lemma:approximate_isometry_main}, we have for $T$ large,
\begin{align*}
    \E\left[\left( \sum_{t=1}^{T} \E\left[\bignorm{w_t x_t^\T}_F^2 \Bigm| \calF_{t} \right] \right)^2\right] &\leq T^2 \Tr(\Sigma_{\calW})^2 \Tr(\Sigma_\infty)^2 \E\opnorm{\bar{\Sigma}_{1,T}}^2 \\
    &\lesssim T^2 \Tr(\Sigma_{\calW})^2 \Tr(\Sigma_\infty)^2 \left( 1 +  \frac{\nu^{2}\opnorm{\Sigma_{\calW}^{1/2}}Md}{\lambda_{\min}(\Gamma_{T}^{1/2})(1-\rho)T}   \right).
\end{align*}
For the second term, we similarly have for $T$ large,
\begin{align*}
    \sum_{t=1}^{T} \E\left[ \bignorm{w_t x_t^\T}_F^4\right] &\leq \E\norm{w_1}^4 \E \left[\left( \sum_{t=1}^{T} \norm{x_t}^2 \right)^2\right]=T^{2} \E\norm{w_1}^4 \E\left[\Tr(\hat{\Sigma}_{1,t})^{2}\right]\\
    &\lesssim T^2 \E\norm{w_1}^4 \Tr(\Sigma_\infty)^2 \left( 1 +  \frac{\nu^{2}\opnorm{\Sigma_{\calW}^{1/2}}Md}{\lambda_{\min}(\Gamma_{T}^{1/2})(1-\rho)T}   \right). 
\end{align*}
Hence, we see that for $m$ large,
\begin{align*}
    \E \left[\bignorm{X_{m,1}}^4\right] = O(1 + \phi(m)^{-1}),
\end{align*}
which implies $\sup_{m}\E \norm{X_{m,1}}^4<\infty$. This concludes the first step.

For the second step, again by \Cref{lemma:approximate_isometry_main}, we have that
\begin{align*}
    \sum_{m=1}^{\infty} \E \opnorm{ I_d - \bar{\Sigma}_{m,\phi(m)} }^4 \leq \sum_{m=1}^{\infty} O\left( \frac{1}{m^2 \phi^2(m)} + \frac{1}{m^4 \phi^4(m)} \right) \leq \sum_{m=1}^{\infty} O\left(\frac{1}{m^2} + \frac{1}{m^4} \right) = O(1).
\end{align*}
Hence by the Borel-Cantelli lemma, we have that $\opnorm{ I_d - \bar{\Sigma}_{m,\phi(m)} } \stackrel{\textrm{a.s.}}{\to} 0$. Since under stability, $\Gamma_{T}\to\Sigma_{\infty}$, we have $\hat{\Sigma}_{m,\phi(m)}\to \Sigma_{\infty}$ a.s. This finishes the proof.

\subsection{Asymptotic analysis of \texorpdfstring{$\bigopnorm{\hat{A}_{m,T}-A}$}{OLS error under operator norm}}
In the following, we restate and prove \Cref{prop:gaussian_matrix_opnorm_general}.
\gaussianmatrixopnorm*
\begin{proof}
Letting $G_1, G_2 \in \R^{n \times p}$ and
$g_i = \vec(G_i) \in \R^{np}$,
\begin{align*}
    \abs{  \opnorm{AG_1 B} - \opnorm{A G_2 B}} &\leq \opnorm{ A(G_1-G_2) B} \\
    &\leq \opnorm{A}\opnorm{B} \opnorm{G_1-G_2} \leq \opnorm{A}\opnorm{B} \norm{g_1-g_2}.
\end{align*}
Hence, the map $g \mapsto \opnorm{ A \mathrm{mat}(g) B }$ is
$\opnorm{A}\opnorm{B}$-Lipschitz,
and therefore by Gaussian Lipschitz concentration, we have that
$\opnorm{A G B}$ is $\opnorm{A}\opnorm{B}$-sub-Gaussian.
Consequently, 
\begin{align*}
    \E \opnorm{AGB}^2 \lesssim (\E\opnorm{AGB})^2 + \opnorm{A}^2\opnorm{B}^2.
\end{align*}
To upper bound $\E\opnorm{AGB}$, we write:
\begin{align*}
    \opnorm{A G B} = \sup_{u \in \bbS^{m-1}, v \in \bbS^{q-1}} (A^\T u)^\T G (B v).
\end{align*}
By the Gaussian Chevet inequality \cite[Remark 8.6.3]{vershynin2018high}, we have:
\begin{align*}
    \E\opnorm{AGB} \leq w(A^\T \bbS^{m-1}) \mathrm{rad}( B \bbS^{q-1}) + \mathrm{rad}( A^\T \bbS^{m-1} ) w(B \bbS^{q-1}),
\end{align*}
where $w(T) = \E_{g \sim \sfN(0, I)} \sup_{t \in T} \ip{t}{g}$
and $\mathrm{rad}(T) := \sup_{t \in T} \norm{t}$.
Hence,
\begin{align*}
    \E \opnorm{AGB} \leq \norm{A}_F \opnorm{B} + \opnorm{A} \norm{B}_F.
\end{align*}
As $\opnorm{A} \leq \norm{A}_F$, we have the upper bound
\begin{align*}
    \E\opnorm{AGB}^2 \lesssim \norm{A}_F^2 \opnorm{B}^2 + \opnorm{A}^2 \norm{B}_F^2.
\end{align*}
Now for the lower bound.
Letting $v = v_1$ where $v_1$ is a right unit-norm singular vector of $B$ associated with $\sigma_1(B)$ yields the lower bound of
$\E\opnorm{AGB}^2 \geq \norm{A}_F^2 \opnorm{B}^2$.
On the other hand, letting $u = u_1$, where $u_1$ is a left
unit-norm singular vector of $A$ associated with $\sigma_1(A)$
yields the lower bound of
$\E\opnorm{AGB}^2 \geq \opnorm{A}^2 \norm{B}^2_F$.
Hence,
\begin{align*}
    \E \opnorm{AGB}^2 \geq \max\{  \norm{A}_F^2 \opnorm{B}^2, \opnorm{A}^2 \norm{B}^2_F \} \geq \frac{1}{2} \left( \norm{A}_F^2 \opnorm{B}^2 + \opnorm{A}^2 \norm{B}^2_F \right),
\end{align*}
which establishes the lower bound.
\end{proof}

\section{Supporting Proofs for Analysis of Term \texorpdfstring{$T_{2}$}{T2}}\label{sec:frobeniusfull}
As mentioned in \Cref{subsec:t1opnormanalysis}, the analysis of term $T_{1}$ under the operator norm heavily relies on tools that are developed for analysis of term $T_{2}$. Therefore, we reverse the order of presentation and detail the analysis of term $T_{2}$ first. In \Cref{appendix:aip}, we will supply the proof to \Cref{lemma:approximate_isometry_main}, and in \Cref{appendix:ols}, we will supply proof to \Cref{lemma:OLS_error_moment_bound_main}. Finally, in \Cref{appendix:logconcavetoall}, we supply proof to \Cref{lemma:logconcavetoeverything}, as the required tools naturally build on the previous two proofs. \Cref{lemma:logconcavetoeverything} also provides the set of verifiable sufficient conditions for the previous two results to hold.

\subsection{Approximate isometry of whitened covariance matrix}\label{appendix:aip}
Our goal in this subsection is to establish
the upper bound for $T_{2}^{\text{AIP}}$. In the following, we first prove a general approximate isometry result by assuming when concatenating the whitened noise process together, any quadratic form of the resulting random vector satisfies a two-sided concentration property. We then show \Cref{assmp:trajccp} $(i)$ and $(ii)$ indeed lead to such a two-sided concentration due to the Hanson-Wright inequality, and therefore \Cref{lemma:approximate_isometry_main} follows as a corollary. 

We first introduce some new notation. 
For a problem instance $\instance$, we denote the concatenated noise process as:
\begin{align}
\bm{w}^{(i)}:=\left(\left(w_0^{(i)}\right)^{\T}, \dots, \left(w_{T-1}^{(i)}\right)^{\T}\right)^{\T}\in\R^{Td},\quad \bm{w}:=\left(\left(\bm{w}^{(1)}\right)^{\T},\ldots, \left(\bm{w}^{(m)}\right)^{\T}\right)^{\T}\in \R^{mTd}.\label{eq:concatenatedef}
\end{align}
We also define the whitened concatenated noise process $\bar{\bm{w}}:= (I_{mT}\otimes \Sigma_{\calW}^{-1/2}) \bm{w}$, the whitened state process $z_t^{(i)} := \Gamma_T^{-1/2} x_t^{(i)}$, and the whitened concatenated state process
$$
\bm{z}^{(i)}:=\left(\left(z_1^{(i)}\right)^{\T}, \dots, \left(z_{T}^{(i)}\right)^{\T}\right)^{\T}.
$$
By \eqref{eq:rowcovardef}, $\bar{\Sigma}_{T}^{(i)} = \frac{1}{T}\sum_{t=1}^{T} z_t^{(i)} (z_t^{(i)})^\T$. Finally, we define the following block toeplitz matrix
$$
\bm{L}_{T}:=\begin{bmatrix}\Gamma_T^{-1/2}\Sigma_{\calW}^{1/2} & 0 & \ldots & 0\\\Gamma_T^{-1/2}A\Sigma_{\calW}^{1/2} & \Gamma_T^{-1/2}\Sigma_{\calW}^{1/2} & \ldots & 0 \\ \vdots & & \ddots & \\ \Gamma_T^{-1/2}A^{T-1}\Sigma_{\calW}^{1/2} & \Gamma_T^{-1/2}A^{T-2}\Sigma_{\calW}^{1/2} & \ldots & \Gamma_T^{-1/2}\Sigma_{\calW}^{1/2}  \end{bmatrix},
$$
and its symmetrized version parameterized by a test vector $v\in \bbS^{d-1}$:
$$
Q(v):=\frac{1}{mT}\left(I_{m}\otimes \bm{L}_{T}\right)^{\T}\left(I_{mT}\otimes v^{\T}\right)^{\T}\left(I_{mT}\otimes v^{\T}\right)\left(I_{m}\otimes \bm{L}_{T}\right).
$$
This allows us to write $\bm{z}^{(i)}=\bm{L}_{T}\bar{\bm{w}}^{(i)}$.

We are now ready to state a general approximate isometry result. 

\begin{assmp}\label{assmp:hansonwright}
We assume $\bar{\bm{w}}$ defined by instance $\instance$ is zero-mean and satisfies the following two-sided concentration property: for any $M\in \R^{mTd\times mTd}$, there exists a continuous $f_{M}:\R_{\geq 0} \mapsto \R_{\geq 0}$ satisfying
(i) $f_{M}(0)=0$ and $(ii)$ for $t\in(0,\infty),\;t \mapsto f_{M}(t)/t>0$ and is nondecreasing, such that for $t > 0$,
$$
\mathbb{P}\left( \left|\bar{\bm{w}}^{\T} M \bar{\bm{w}} - \E\left[\bar{\bm{w}}^{\T} M \bar{\bm{w}}\right]\right| > t \right) \leq \exp\left( -f_{M}(t) +b\right).
$$
Here, $b$ is a non-negative universal constant.
\end{assmp}
We note that it is straightforward to check that $f_M$ satisfying
conditions $(i)$ and $(ii)$ in \Cref{assmp:hansonwright} ensure that the inverse $f_M^{-1}$ exists on $\R_{\geq 0}$.

\begin{mylemma}\label{lemma:approximate_isometry}
Let $\instance$ denote a problem instance satisfying \Cref{assmp:hansonwright}.
For any $r \geq 1$,
\begin{equation*}
    \left( \E \left[\bigopnorm{ I_d - \bar{\Sigma}_{m,T} }^r \right]\right)^{1/r} \leq 4^{1/r} \sup_{v\in \bbS^{d-1}}f_{Q(v)}^{-1}(r+b+d\log 9)
\end{equation*}
\end{mylemma}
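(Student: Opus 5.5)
The plan is a standard $\e$-net argument combined with integrating the tail bound of \Cref{assmp:hansonwright}.

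\textbf{Step 1: quadratic forms.} For fixed $v\in\bbS^{d-1}$,
\begin{align*}
v^\T \bar{\Sigma}_{m,T} v = \frac{1}{mT}\sum_{i,t}\ip{v}{z_t^{(i)}}^2 = \bar{\bm{w}}^\T Q(v)\bar{\bm{w}}.
\end{align*}
This holds because $(I_{mT}\otimes v^\T)(I_m\otimes \bm{L}_T)\bar{\bm{w}}$ is the vector of all $\ip{v}{z_t^{(i)}}$. Moreover, $\E[\bar{\Sigma}_{m,T}] = \Gamma_T^{-1/2}\Gamma_T\Gamma_T^{-1/2} = I_d$, so $\E[\bar{\bm{w}}^\T Q(v)\bar{\bm{w}}] = 1$. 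Hence
\begin{align*}
v^\T(I_d-\bar{\Sigma}_{m,T})v = -\bigl(\bar{\bm{w}}^\T Q(v)\bar{\bm{w}} - \E[\bar{\bm{w}}^\T Q(v)\bar{\bm{w}}]\bigr),
\end{align*}
and \Cref{assmp:hansonwright} gives $\Pr(|v^\T(I_d-\bar{\Sigma}_{m,T})v|>t)\leq \exp(-f_{Q(v)}(t)+b)$.

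\textbf{Step 2: net.} Take a $1/4$-net $\calN$ of $\bbS^{d-1}$ with $|\calN|\leq 9^d$. For a symmetric matrix $S$, the standard bound is $\opnorm{S}\leq (1-2\e)^{-1}\max_{v\in\calN}|v^\T S v|$, which equals $2\max_{v\in\calN}|v^\T S v|$ at $\e=1/4$. Set $g(s):=\sup_{v}f_{Q(v)}^{-1}(s)$. Since each $f_{Q(v)}$ is increasing, so is each inverse, and $g$ is nondecreasing. A union bound then gives
\begin{align*}
\Pr\bigl(\opnorm{I_d-\bar{\Sigma}_{m,T}}>2g(s)\bigr) \leq 9^d e^{b}e^{-s} = e^{-(s-b-d\log 9)}.
\end{align*}
This uses $f_{Q(v)}(g(s))\geq s$ for every $v$.

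\textbf{Step 3: moments.} Write $X=\opnorm{I_d-\bar{\Sigma}_{m,T}}$ and substitute $s=u+b+d\log 9$ to get $\Pr(X>2g(u+b+d\log9))\leq e^{-u}$ for all $u\geq 0$.

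The key property is that $f_M(t)/t$ is nondecreasing. This makes $f_M$ superlinear, so the inverse is sublinear: for $\lambda\geq 1$, $f_M^{-1}(\lambda s)\leq \lambda f_M^{-1}(s)$. The same holds for $g$ after taking the supremum. Set $s_0=r+b+d\log 9$ and $u=(\lambda-1)s_0$ with $\lambda\geq1$. Then $u+b+d\log 9\leq\lambda s_0$, so
\begin{align*}
\Pr\bigl(X>2\lambda g(s_0)\bigr)\leq e^{-(\lambda-1)s_0}\leq e^{-(\lambda-1)r}.
\end{align*}
Integrating,
\begin{align*}
\E X^r = \int_0^\infty r x^{r-1}\Pr(X>x)\,dx \leq (2g(s_0))^r\Bigl(1+\int_1^\infty r\lambda^{r-1}e^{-(\lambda-1)r}\,d\lambda\Bigr).
\end{align*}

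\textbf{Main obstacle: constants.} The stated bound $4^{1/r}g(s_0)$ has no factor $2$. So either the net constant must be absorbed differently, or a finer net (e.g.\ $\e$ smaller) is needed with a larger $d\log$ term. The latter would break the $d\log 9$ in the statement.

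I would first try to avoid the factor $2$ by using a $1/4$-net and the polarization-free bound $\opnorm{S}\leq \max_{v\in\calN}|v^\T Sv| + 2\e\opnorm{S}$. That only gives $2$, so I would instead redistribute it via sublinearity: $2g(s_0)\geq g(2s_0)$. Alternatively, I would accept that the intended argument bounds $\E X^r$ by $4\,(\cdot)^r$ with the net factor absorbed into the normalization of $Q(v)$. Separately, I would check that $\int_1^\infty r\lambda^{r-1}e^{-(\lambda-1)r}\,d\lambda$ is bounded by a constant so that the total prefactor is $4$. This is plausible since $\lambda^{r-1}e^{-(\lambda-1)r}\leq e^{-(\lambda-1)}$, which gives $1+r$. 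That would require comparing to $2^r$ slack, and I expect this tuning to be the fiddly part.
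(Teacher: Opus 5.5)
Steps 1--2 are exactly the paper's argument: the quadratic-form identity with mean $1$, followed by \Cref{lemma:matrixtail} with a $1/4$-net of size $9^d$. Step 3 reproduces the content of \Cref{lemma:tailtomoment}. What you actually prove is $(\E X^r)^{1/r}\leq 2(1+r)^{1/r}g(s_0)$, with $g=\sup_{v}f_{Q(v)}^{-1}$ and $s_0=r+b+d\log 9$. Neither proposed repair reaches $4^{1/r}g(s_0)$. Sublinearity gives $g(2s_0)\leq 2g(s_0)$, which goes the wrong way for removing the factor $2$ in front of $g(s_0)$. Nothing can be ``absorbed into $Q(v)$'', since $Q(v)$ is fixed by the instance.

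The factor of $2$ you flag is real, and it is not a missing idea on your part. Tracking constants in the paper's own proof, \Cref{lemma:matrixtail} gives $\Pr(X>2g(s))\leq e^{-(s-b-d\log 9)}$. Feeding this into \Cref{lemma:tailtomoment}, with $2g$ as $f^{-1}$ and $b+d\log 9$ as the offset, gives $\E X^r\leq 2\,(2g(s_0))^r$, i.e.\ $(\E X^r)^{1/r}\leq 2^{1+1/r}g(s_0)$. This equals $4^{1/r}g(s_0)$ only at $r=1$; the stated constant appears to put the net's factor $2$ under the $r$-th root. The lemma is only used inside $\lesssim$ bounds in \Cref{lemma:approximate_isometry_main}, so the correct repair is to state it with $2^{1+1/r}\leq 4$.

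You can reach exactly that constant by tightening Step 3. Take $u=\lambda s_0-b-d\log 9$ instead of $u=(\lambda-1)s_0$. Then for $\lambda\geq 1$, using $g(\lambda s_0)\leq\lambda g(s_0)$,
\begin{align*}
\Pr\bigl(X>2\lambda g(s_0)\bigr)\leq e^{-\lambda s_0+b+d\log 9}\leq e^{-\lambda r},
\qquad
\int_1^\infty r\lambda^{r-1}e^{-\lambda r}\,\rmd\lambda\leq r^{1-r}\Gamma(r)\leq 1,
\end{align*}
so the prefactor is $1+1=2$. Your choice of $u$ wastes a factor $e^{r}$ in the tail, which is where the extra $(1+r)^{1/r}$ comes from.
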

\begin{proof}
The proof proceeds in two steps. In the first step, we fix a test vector $v$ and rewrite $v^{\T}(I_{d}-\bar{\Sigma}_{m,T})v$ into a quadratic form of the whitened concatenated noise $\bm{w}$. In the second step, since \Cref{assmp:hansonwright} allows us to obtain a tail bound on $v^{\T}(I_{d}-\bar{\Sigma}_{m,T})v$, we can obtain tail bound of $\bigopnorm{I_{d}-\bar{\Sigma}_{m,T}}$ via union bound (cf. \Cref{lemma:matrixtail}), and therefore moment bound of $\bigopnorm{I_{d}-\bar{\Sigma}_{m,T}}$ via integrating the tail (cf. \Cref{lemma:tailtomoment}).

With our previous notation, for any $v\in \bbS^{d-1}$, the random vector $\left(\left\langle v , z_1^{(i)}\right\rangle, \dots, \left\langle v,z_{T}^{(i)}\right\rangle\right)^{\T}$ satisfies:
$$
\left(\left\langle v , z_1^{(i)}\right\rangle, \dots, \left\langle v,z_{T}^{(i)}\right\rangle\right)^{\T}=\left(I_{T}\otimes v^{\T}\right)\bm{L}_{T}\bar{\bm{w}}^{(i)}.
$$
This allows us to rewrite $v^{\T}\bar{\Sigma}_{m,T}v$ as follows:
\begin{align*}
v^{\T}\bar{\Sigma}_{m,T}v&=\frac{1}{mT}\sum_{i=1}^{m}\left(\sum_{t=1}^{T}\left\langle v, z_{t}^{(i)}\right\rangle^{2}\right)=\frac{1}{m}\sum_{i=1}^{m}\left\Vert \sqrt{\frac{1}{T}}\left(I_{T}\otimes v^{\T}\right)\bm{L}_{T}\bar{\bm{w}}^{(i)}\right\Vert^{2}\\
&=\sum_{i=1}^{m}\left(\bar{\bm{w}}^{(i)}\right)^{\T}\frac{1}{mT}\left(\left(I_{T}\otimes v^{\T}\right)\bm{L}_{T}\right)^{\T}\left(\left(I_{T}\otimes v^{\T}\right)\bm{L}_{T}\right)\bar{\bm{w}}^{(i)}\\
&=\bar{\bm{w}}^{\T}Q(v)\bar{\bm{w}}.
\end{align*}
Notice by definition of $\Gamma_{T}$, we have
$$
\mathbb{E}\left[v^{\T}\bar{\Sigma}_{m,T}v\right]=v^{\T}\mathbb{E}\left[\bar{\Sigma}_{T}^{(1)}\right]v=v^{\T}\Gamma_{T}^{-1/2}\left(\frac{1}{T}\sum_{t=1}^{T}\mathbb{E}\left[x_{t}^{(1)}\left(x_{t}^{(1)}\right)^{\T}\right]\right)\Gamma_{T}^{-1/2}v=1,
$$
which implies 
\begin{equation}
v^{\T}\left(\bar{\Sigma}_{m,T}-I_{d}\right)v=v^{\T}\bar{\Sigma}_{m,T}v-1=\bar{\bm{w}}^{\T}Q(v)\bar{\bm{w}}-\mathbb{E}\left[\bar{\bm{w}}^{\T}Q(v)\bar{\bm{w}}\right]. \label{eq:quad_form_to_Q_matrix}
\end{equation}

By \Cref{assmp:hansonwright}, for every $v\in \bbS^{d-1}$, there exists some $f_{Q(v)}:\R_{\geq 0} \to \R_{\geq 0}$
satisfying $(i)$ $f_{Q(v)}(0)=0$ and $(ii)$ for $t\in(0,\infty),\;t \mapsto f_{Q(v)}(t)/t>0$ and is nondecreasing, such that with probability at least $1-\delta$,
$$
\left|v^{\T}\left(\bar{\Sigma}_{m,T}-I_{d}\right)v\right|\leq f_{Q(v)}^{-1}\left(\log(1/\delta)+b\right).
$$
Applying \Cref{lemma:matrixtail} gives with probability at least $1-\delta$,
$$
\bigopnorm{\bar{\Sigma}_{m,T}-I_{d}}\leq 2\sup_{v\in \bbS^{d-1}}f_{Q(v)}^{-1}\left(\log(1/\delta)+b+d\log 9\right).
$$
Applying \Cref{lemma:tailtomoment} gives for any $r\geq 1$,
$$
\left( \E \left[\bigopnorm{ I_d - \bar{\Sigma}_{m,T} }^r \right]\right)^{1/r} \leq 4^{1/r} \sup_{v\in \bbS^{d-1}}f_{Q(v)}^{-1}(r+b+d\log 9),
$$
which finishes the proof.
\end{proof}

We now instantiate \Cref{lemma:approximate_isometry} into \Cref{lemma:approximate_isometry_main}. Before doing that, we first introduce two Hanson-Wright inequalities, which hold under different conditions, as instantiation of \Cref{assmp:hansonwright}

\begin{mythm}[{\cite[Theorem 1.1]{RudelsonVershynin2013}}]\label{thm:vershyninhansonwright}
Let $X = (X_1, \ldots, X_n) \in \mathbb{R}^n$ be a zero-mean random vector with independent coordinates $X_i$ such that $\norm{X_{i}}_{\psi_2} \leq K$. Then for any $M\in\R^{n \times n}$ and $t > 0$,
$$
\mathbb{P}\left( |X^{\T} M X - \mathbb{E}\, X^\top M X| > t \right) \leq \exp\left( -\chw \min\left( \frac{t^2}{K^4 \norm{M}_{F}^{2}},\; \frac{t}{K^2 \opnorm{M}} \right) +\log 2\right)
$$
for some universal constant $\chw$.
\end{mythm}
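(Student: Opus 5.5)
The plan is the standard Rudelson--Vershynin argument: split the quadratic form into a diagonal and an off-diagonal part, bound the moment generating function of each for $|\lambda|\lesssim 1/(K^{2}\opnorm{M})$, and finish with a Chernoff bound. Write
\begin{align*}
X^{\T}MX-\E X^{\T}MX=\sum_{i}M_{ii}\bigl(X_i^2-\E X_i^2\bigr)+\sum_{i\neq j}M_{ij}X_iX_j=:S_{\mathrm{diag}}+S_{\mathrm{off}}.
\end{align*}
It suffices to bound $\Pr(S_{\mathrm{diag}}>t/2)$ and $\Pr(S_{\mathrm{off}}>t/2)$ separately; the lower tail follows by replacing $M$ with $-M$. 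The union of these bounds produces the factor $2=e^{\log 2}$ in the statement.

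\emph{Diagonal part.} Each variable $X_i^2-\E X_i^2$ is centered and sub-exponential, with $\norm{X_i^2-\E X_i^2}_{\psi_1}\lesssim K^2$. I will apply Bernstein's inequality for independent sub-exponential variables with weights $M_{ii}$. This gives the tail
\[
\exp\Bigl(-c\min\Bigl(\tfrac{t^2}{K^4\sum_i M_{ii}^2},\tfrac{t}{K^2\max_i|M_{ii}|}\Bigr)\Bigr).
\]
Since $\sum_i M_{ii}^2\le\norm{M}_F^2$ and $\max_i|M_{ii}|\le\opnorm{M}$, this is already of the required form.

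\emph{Off-diagonal part.} This is the main obstacle, and I will handle it at the level of the MGF $\E\exp(\lambda S_{\mathrm{off}})$.
\begin{enumerate}
\item Decouple. Using Bourgain's/de la Pe\~na's decoupling inequality (via random selectors $\delta_i\in\{0,1\}$ and Jensen), I will bound $\E\exp(\lambda S_{\mathrm{off}})\le\E\exp(4\lambda X^{\T}MX')$, where $X'$ is an independent copy of $X$.
\item Condition on $X'$. The sub-Gaussian MGF bound then gives $\E_X\exp(4\lambda X^{\T}MX')\le\exp(C\lambda^2K^2\norm{MX'}^2)$.
\item Compare with Gaussians. To control $\E\exp(C\lambda^2K^2\norm{MX'}^2)$, I will introduce an independent $g\sim\sfN(0,I_n)$ and write $\exp(s\norm{MX'}^2)=\E_g\exp(\sqrt{2s}\,\ip{g}{MX'})$. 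I then swap the order of expectation and use the sub-Gaussianity of $X'$ once more to replace $X'$ by a Gaussian $g'$ at the cost of a factor $K^2$.
\item Compute the Gaussian chaos. The quantity $\E\exp(c\lambda^2K^4\norm{Mg'}^2)$ is explicit by rotation invariance: using the SVD with singular values $s_i$ of $M$, it equals $\prod_i(1-2c\lambda^2K^4s_i^2)^{-1/2}$. This is at most $\exp(C\lambda^2K^4\norm{M}_F^2)$ provided $|\lambda|\le c/(K^2\opnorm{M})$.
\end{enumerate}

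\emph{Conclusion.} By Markov's inequality, for $|\lambda|\le c/(K^2\opnorm{M})$,
\[
\Pr(S_{\mathrm{off}}>t)\le\exp\bigl(-\lambda t+C\lambda^2K^4\norm{M}_F^2\bigr).
\]
I will choose $\lambda=\min\bigl(t/(2CK^4\norm{M}_F^2),\,c/(K^2\opnorm{M})\bigr)$, which yields $\exp\bigl(-c'\min\bigl(\tfrac{t^2}{K^4\norm{M}_F^2},\tfrac{t}{K^2\opnorm{M}}\bigr)\bigr)$. Combining this with the diagonal bound, and adjusting constants to absorb $t/2$, gives the claimed inequality with a universal constant $\chw$. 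The delicate points are the decoupling step and keeping the condition on $\lambda$ uniform across steps 2--4; everything else is routine.
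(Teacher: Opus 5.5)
Your proposal is correct and is essentially the Rudelson--Vershynin proof: Bernstein for the diagonal part, decoupling, sub-Gaussian conditioning, Gaussian comparison, and diagonalization of the Gaussian chaos. The paper does not prove this statement itself; it relies on that argument by citing their Theorem 1.1. One bookkeeping detail needs tidying: two tails times two parts give a prefactor $4$ rather than $2$. You recover $2=e^{\log 2}$ by halving the constant, because the bound $2e^{-(c/2)x}$ is vacuous when $x\le 2\log 2/c$, and for larger $x$ it dominates $4e^{-cx}$.
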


\begin{mythm}[{\cite[Theorem 2.3]{adamczak15}}]\label{thm:adamczakhansonwright}
Let $X = (X_1, \ldots, X_n) \in \mathbb{R}^n$ be a zero-mean random vector with the convex concentration property (CCP) with constant $K$:  for every $1$-Lipschitz convex function $h : \R^n \mapsto \R$, we have $\E|h(X)| < \infty$ and for every $t > 0$,
\[
\Pr\left(|h(X) - \E h(X)| > t\right) \leq 2\exp(-t^2/K^2).
\]
Then for any $M\in\R^{n\times n}$ and $t > 0$,
$$
\mathbb{P}\left( |X^{\T} M X - \mathbb{E}\, X^{\T} M X| > t \right) \leq \exp\left( -\chw \min\left( \frac{t^2}{2K^4 \norm{M}_{F}^{2}},\; \frac{t}{K^2 \opnorm{M}} \right) +\log 2\right)
$$
for some universal constant $\chw$.
\end{mythm}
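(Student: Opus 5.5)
The plan is to derive the general bound from a concentration statement for PSD quadratic forms, which in turn comes from applying the convex concentration property (CCP) to Euclidean norms. Two reductions come first. Since $X^\T M X = X^\T \tfrac{1}{2}(M+M^\T) X$, and symmetrization does not increase $\norm{M}_F$ or $\opnorm{M}$, I will assume $M$ is symmetric. Next, every linear map $x\mapsto\ip{u}{x}$ with $u\in\bbS^{n-1}$ is convex and $1$-Lipschitz. So the CCP makes $\ip{u}{X}$ sub-Gaussian with constant $\lesssim K$, which gives $\mathrm{Cov}(X)\preceq CK^2 I_n$. I then split $M=M_+-M_-$ into its positive and negative spectral parts and treat each separately. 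Since $\norm{M_\pm}_F\le\norm{M}_F$ and $\opnorm{M_\pm}\le\opnorm{M}$, a union bound over the two parts costs only constants.

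\textbf{PSD step.} Fix a PSD $P$ and set $Y:=\norm{P^{1/2}X}$. The map $x\mapsto\norm{P^{1/2}x}$ is convex and $\opnorm{P}^{1/2}$-Lipschitz, so the CCP gives $\Pr(|Y-\E Y|>s)\le 2\exp(-s^2/(K^2\opnorm{P}))$. From this, $\mathrm{Var}(Y)\lesssim K^2\opnorm{P}$. Set $\mu:=\E Y$; then $\mu^2\le \E Y^2=\Tr(P\,\mathrm{Cov}X)\lesssim K^2\Tr(P)$, and $|\E Y^2-\mu^2|\lesssim K^2\opnorm{P}$. Writing $Y^2-\mu^2=(Y-\mu)(Y-\mu+2\mu)$, the event $|Y^2-\mu^2|>t$ forces either $|Y-\mu|>t/(4\mu)$ or $|Y-\mu|>\sqrt{t/2}$. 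This yields
\[
\Pr\bigl(|X^\T P X-\E X^\T P X|>t\bigr)\lesssim \exp\Bigl(-c\min\Bigl(\tfrac{t^2}{K^4\opnorm{P}\Tr(P)},\tfrac{t}{K^2\opnorm{P}}\Bigr)\Bigr),
\]
after absorbing the $O(K^2\opnorm{P})$ shift between $\mu^2$ and $\E Y^2$ into the linear regime.

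\textbf{Main obstacle: dyadic spectral decomposition.} The PSD bound has $\opnorm{P}\Tr(P)$ in place of $\norm{P}_F^2$, which is too weak when the spectrum of $P$ is spread out. To fix this, I decompose $M_+=\sum_{k\ge0}P_k$, where $P_k$ is the spectral projection of $M_+$ onto eigenvalues in $(2^{-k-1},2^{-k}]\opnorm{M}$, multiplied by $M_+$. On each block the eigenvalues are comparable, so $\opnorm{P_k}\Tr(P_k)\le 2\norm{P_k}_F^2$. I then allocate thresholds $t_k$ with $\sum_k t_k\le t$ and apply the PSD step to each block. A natural first choice is $t_k\propto \norm{P_k}_F\, t/\norm{M}_F+2^{-k/2}\,t$ (suitably normalized), so that the quadratic exponents $t_k^2/(K^4\norm{P_k}_F^2)$ are each $\gtrsim t^2/(K^4\norm{M}_F^2)$. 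The linear exponents $t_k/(K^2 2^{-k}\opnorm{M})$ grow geometrically in $k$, so the union-bound sum $\sum_k e^{-(\cdot)}$ is a geometric series dominated by its first term.

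Blocks with very small eigenvalues, for $k\gtrsim\log n$, can be lumped together and handled directly by the PSD step, since their total contribution is negligible. The hardest point is making the constants in this allocation work simultaneously in both the sub-Gaussian and sub-exponential regimes; the rest of the argument is routine. Once that is done, I recombine the $M_+$ and $M_-$ parts and rescale the constants. The failure probability then takes the stated form $\exp(-\chw\min(t^2/(2K^4\norm{M}_F^2),\,t/(K^2\opnorm{M}))+\log 2)$, where the leftover prefactor is absorbed into $\log 2$ by adjusting $\chw$ (the bound is trivial whenever the exponent is small).
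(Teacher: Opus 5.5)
Your reductions (symmetrizing $M$, getting $\mathrm{Cov}(X)\preceq CK^2 I_n$ from linear test functions, splitting $M=M_+-M_-$) and your PSD step are correct. The dyadic step, however, cannot be made to work, and the obstruction is not one of constants. With the per-block tails you obtain, making every block fail with probability at most $e^{-s}$ forces $t_k\gtrsim K^2\norm{P_k}_F\sqrt{s}$. But $\sum_k\norm{P_k}_F$ can exceed $\norm{M}_F$ by a factor $\sqrt{N}$, where $N$ is the number of populated scales, so any union bound (or triangle inequality on moments) over spectral blocks fails.

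Concretely, let $M$ be PSD with $4^k$ eigenvalues equal to $2^{-k}$ for $k=0,\dots,N-1$. Then $n\asymp 4^N$, $\opnorm{M}=1$, $\norm{P_k}_F=1$ and $\norm{M}_F^2=N$. For $1\le s\le N$ and $t=K^2\sqrt{sN}$ the theorem promises failure probability $e^{-cs}$. Block $k$, however, needs $t_k\gtrsim K^2\max\{\sqrt{s},2^{-k}s\}\ge K^2\sqrt{s}$, so $\sum_k t_k\gtrsim K^2N\sqrt{s}=\sqrt{N}\,t$. The best this route can deliver is $\exp(-c\,t^2/(NK^4\norm{M}_F^2))$, i.e.\ a loss of order $\log n$ in the sub-Gaussian exponent. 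That is precisely the dimension dependence this theorem is known for removing. Your per-block bounds are sharp (think of Gaussian $X$); what is lost is the cancellation between blocks, and the CCP gives you no independence across spectral blocks with which to recover it. (The paper itself does not prove the statement; it imports it from Adamczak's note.)

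The missing idea is to apply the CCP to a Lipschitz convex surrogate of $f(x)=x^\T Px$ itself, rather than to $\norm{P^{1/2}x}$. The latter has Lipschitz constant $\opnorm{P}^{1/2}$ and mean of order $K\sqrt{\Tr(P)}$, and that product is exactly where $\opnorm{P}\Tr(P)$ comes from. The gradient of $f$, by contrast, is $2Px$. The map $x\mapsto\norm{Px}$ is convex and $\opnorm{P}$-Lipschitz, and $\norm{PX}$ has mean at most $\sqrt{\Tr(P\,\mathrm{Cov}(X)P)}\lesssim K\norm{P}_F$. So with $R_s:=CK\norm{P}_F+K\opnorm{P}\sqrt{s}$, set
\[
g_s(x):=\sup_{y:\,\norm{Py}\le R_s}\bigl\{f(y)+2\ip{Py}{x-y}\bigr\}.
\]
As a supremum of tangent planes of the convex function $f$, $g_s$ has the following properties:
\begin{itemize}
\item it is convex and $2R_s$-Lipschitz;
\item it satisfies $g_s\le f$;
\item it coincides with $f$ on $\{\norm{Px}\le R_s\}$, an event of probability at least $1-2e^{-s}$.
\end{itemize}
Applying the CCP to $g_s/(2R_s)$ gives $|f(X)-\E g_s(X)|\lesssim K^2\norm{P}_F\sqrt{s}+K^2\opnorm{P}s$ with probability at least $1-4e^{-s}$. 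For $s$ above an absolute constant this event has probability exceeding $1/2$, so every median of $f(X)$ lies within the same distance of $\E g_s(X)$. This yields the Hanson--Wright tail around the median. Integrating it shifts the center to $\E f(X)$ at a cost of $O(K^2\norm{P}_F+K^2\opnorm{P})$, which is absorbed into $\chw$ (the bound is trivial for smaller $t$). Combined with your $M_\pm$ split, this proves the theorem with no spectral decomposition at all.
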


We now restate and prove \Cref{lemma:approximate_isometry_main}. We recall the notation that $\phi(\tau) := \max\{ \tau, \tau^2 \}$.

\approximateisometry*

\begin{proof}
We start by verifying the whitened noise process prescribed by \Cref{assmp:sub_gaussian_noise} and \Cref{assmp:trajccp} satisfies \Cref{assmp:hansonwright}. Under \Cref{assmp:trajccp} $(i)$ (resp.\ $(ii)$) we shall apply \Cref{thm:vershyninhansonwright} (resp.\ \Cref{thm:adamczakhansonwright}). We start with the combination of \Cref{assmp:sub_gaussian_noise} and \Cref{assmp:trajccp} $(i)$, under which $\bar{\bm{w}}$ has independent coordinates that are $\subg$-sub-Gaussian.

By \eqref{eq:quad_form_to_Q_matrix} and \Cref{thm:vershyninhansonwright}, we have with some universal constant $\chw$, for any $t>0$,
\begin{equation*}
\mathbb{P}\left(\left|v^{\T}\left(\bar{\Sigma}_{m,T}-I_{d}\right)v\right|> t\right)\leq \exp \left(-\chw\min\left(\frac{t^{2}}{\left\Vert Q(v)\right\Vert_{F}^{2}\subg^{4}},\frac{t}{\bigopnorm{Q(v)}\subg^{2}}\right)+\log 2\right).
\end{equation*}
Letting
\begin{align*}
&f_{Q(v)}(t)=\chw\min\left(\frac{t^{2}}{\left\Vert Q(v)\right\Vert_{F}^{2}\subg^{4}},\frac{t}{\bigopnorm{Q(v)}\subg^{2}}\right),\\
&f^{-1}_{Q(v)}(t)=\subg^{2}\max\left(\sqrt{\frac{\norm{Q(v)}_{F}^{2}t}{\chw}},\frac{\opnorm{Q(v)}t}{\chw}\right),
\end{align*}
it is obvious that $f_{Q(v)}(0)=0$ and 
$$
f_{Q(v)}(t)/t=\chw\min\left(\frac{t}{\left\Vert Q(v)\right\Vert_{F}^{2}\subg^{4}},\frac{1}{\bigopnorm{Q(v)}\subg^{2}}\right)
$$
is always positive and non-decreasing. Therefore, we have by \Cref{lemma:approximate_isometry} that
\begin{align}
&\left( \E \left[\bigopnorm{ I_d - \bar{\Sigma}_{m,T} }^r \right]\right)^{1/r} \nonumber \\
&\leq 4^{1/r}\subg^{2}\sup_{v\in \bbS^{d-1}}\max\left(\sqrt{\frac{\norm{Q(v)}_{F}^{2}(r+\log 2+d\log 9)}{\chw}},\frac{\opnorm{Q(v)}(r+\log 2+d\log 9)}{\chw}\right). \label{eq:approximateriphansonwright}
\end{align}

The next step is to first upper bound $\left\Vert Q(v)\right\Vert_{F}^{2}$ and $\bigopnorm{Q(v)}$. We divide into two cases in terms of stability. For ease of notation, we denote
$$
\bm{\widetilde{L}}_{T}(v):=\left(I_{T}\otimes v^{\T}\right)\bm{L}_{T}=\begin{bmatrix}v^{\T}\Gamma_T^{-1/2}\Sigma_{\calW}^{1/2} & 0 & \ldots & 0\\v^{\T}\Gamma_T^{-1/2}A\Sigma_{\calW}^{1/2} & v^{\T}\Gamma_T^{-1/2}\Sigma_{\calW}^{1/2} & \ldots & 0 \\ \vdots & & \ddots & \\ v^{\T}\Gamma_T^{-1/2}A^{T-1}\Sigma_{\calW}^{1/2} & v^{\T}\Gamma_T^{-1/2}A^{T-2}\Sigma_{\calW}^{1/2} & \ldots & v^{\T}\Gamma_T^{-1/2}\Sigma_{\calW}^{1/2}  \end{bmatrix}.
$$

\mysubpara{Case 1: no restriction on $\rho(A)$}
By properties of Kronecker product, we have
\begin{equation}\label{eq:kroneckernormbound}
\left\Vert Q(v)\right\Vert_{F}^{2}=\frac{1}{mT^{2}}\left\Vert \bm{\widetilde{L}}_{T}(v)^{\T}\bm{\widetilde{L}}_{T}(v)\right\Vert_{F}^{2}.
\end{equation}
We now exploit the fact that $\bm{\widetilde{L}}_{T}(v)^{\T}\bm{\widetilde{L}}_{T}(v)$ is positive semi-definite, and therefore
\begin{align}
    \bignorm{\bm{\widetilde{L}}_{T}(v)^{\T}\bm{\widetilde{L}}_{T}(v) }_F &\leq \Tr\left(  \bm{\widetilde{L}}_{T}(v)^{\T}\bm{\widetilde{L}}_{T}(v) \right) 
    = \Tr\left(  \bm{\widetilde{L}}_{T}(v)\bm{\widetilde{L}}_{T}(v)^{\T} \right) \nonumber\\
    &= \sum_{t=0}^{T-1}\sum_{s=0}^{t}v^{\T}\Gamma_{T}^{-1/2}A^{s}\Sigma_{\calW}\left(A^{s}\right)^{\T}\Gamma_{T}^{-1/2}v
    \nonumber\\&= v^{\T}\Gamma_{T}^{-1/2}\left(\sum_{t=1}^{T}\sum_{s=0}^{t-1}A^{s}\Sigma_{\calW}\left(A^{s}\right)^{\T}\right)\Gamma_{T}^{-1/2}v = T,\label{eq:traceofLTv}\\
    \left\Vert Q(v)\right\Vert_{F}^{2}&\leq \frac{1}{m},\nonumber\\
    \bigopnorm{Q(v)}&=\bigopnorm{\frac{1}{mT}\bm{\widetilde{L}}_{T}(v)^{\T}\bm{\widetilde{L}}_{T}(v)}\leq \frac{1}{mT}\bignorm{\bm{\widetilde{L}}_{T}(v)^{\T}\bm{\widetilde{L}}_{T}(v) }_F=\frac{1}{m}\nonumber.
\end{align}
Since both upper bounds are independent of $v$, plugging back into \eqref{eq:approximateriphansonwright} we conclude
\begin{align*}
\left( \E \left[\bigopnorm{ I_d - \bar{\Sigma}_{m,T} }^r \right]\right)^{1/r} &\leq 4^{1/r}\subg^{2}\phi\left(\sqrt{\frac{r+\log 2+d\log 9}{\chw m}}\right).
\end{align*}
\mysubpara{Case 2: $A$ is $(M,\rho)$-strictly-stable} We continue from \eqref{eq:kroneckernormbound}, and combine it with Fact \ref{fact:frobenius_norm_submultiplicative}:
\begin{align*}
\left\Vert Q(v)\right\Vert_{F}^{2}&=\frac{1}{mT^{2}}\left\Vert \bm{\widetilde{L}}_{T}(v)^{\T}\bm{\widetilde{L}}_{T}(v)\right\Vert_{F}^{2}\leq \frac{1}{mT^{2}}\bigopnorm{\bm{\widetilde{L}}_{T}(v)}^{2}\left\Vert \bm{\widetilde{L}}_{T}(v)\right\Vert_{F}^{2}\\
&=\frac{1}{mT^{2}}\bigopnorm{\bm{\widetilde{L}}_{T}(v)}^{2}\Tr\left(\bm{\widetilde{L}}_{T}(v)\bm{\widetilde{L}}_{T}(v)^{\T}\right)\\
&=\frac{1}{mT^{2}}\bigopnorm{\bm{\widetilde{L}}_{T}(v)}^{2}\left(\sum_{t=0}^{T-1}\sum_{s=0}^{t}v^{\T}\Gamma_{T}^{-1/2}A^{s}\Sigma_{\calW}\left(A^{s}\right)^{\T}\Gamma_{T}^{-1/2}v\right)\\
&=\frac{1}{mT^{2}}\bigopnorm{\bm{\widetilde{L}}_{T}(v)}^{2}v^{\T}\Gamma_{T}^{-1/2}\left(\sum_{t=1}^{T}\sum_{s=0}^{t-1}A^{s}\Sigma_{\calW}\left(A^{s}\right)^{\T}\right)\Gamma_{T}^{-1/2}v\\
&=\frac{1}{mT}\bigopnorm{\bm{\widetilde{L}}_{T}(v)}^{2}.\\
\bigopnorm{Q(v)}&=\bigopnorm{\frac{1}{mT}\bm{\widetilde{L}}_{T}(v)^{\T}\bm{\widetilde{L}}_{T}(v)}=\frac{1}{mT}\bigopnorm{\bm{\widetilde{L}}_{T}(v)}^{2}.
\end{align*}

Next we upper bound $\bigopnorm{\bm{\widetilde{L}}_{T}(v)}$. Let $\bm{u}=\left(u_{1}^{\T}, \dots, u_{T}^{\T}\right)^{\T}$ be an arbitrary vector with matching dimension. We further abbreviate the notation of $\bm{\widetilde{L}}_{T}(v)$ as 
\begin{equation}\label{eq:blocktoeplitznotation}
\bm{\widetilde{L}}_{T}(v)=:\begin{bmatrix}a_{1}^{\T} & 0 & \ldots & 0\\a_{2}^{\T} & a_{1}^{\T} & \ldots & 0 \\ \vdots & & \ddots & \\ a_{T}^{\T} & a_{T-1}^{\T} & \ldots & a_{1}^{\T} \end{bmatrix}.
\end{equation}
Further let $\mu$ denote the counting measure on $\mathbb{Z}$, and set
\begin{align*}
f_{a}(s):=\begin{cases} 0 & \quad s\leq 0\;\vee\;s>T \\ \norm{a_{t}} & \quad 1\leq s\leq T
\end{cases},\quad f_{u}(s):=\begin{cases} 0 & \quad s\leq 0\;\vee\;s>T \\ \norm{u_{t}} & \quad 1\leq s\leq T
\end{cases}.
\end{align*}
We note
\begin{align*}
\left\Vert\bm{\widetilde{L}}_{T}(v)\bm{u}\right\Vert^{2}&=\sum_{t=1}^{T}\left(\bm{\widetilde{L}}_{T}(v)\bm{u}\right)_{t}^{2}=\sum_{t=1}^{T}\left(\sum_{s=1}^{t}\left\langle a_{s}, u_{t+1-s}\right\rangle\right)^{2}\\
&\stackrel{(a)}{\leq} \sum_{t=1}^{T}\left(\sum_{s=1}^{t}\norm{a_{s}}\norm{u_{t+1-s}}\right)^{2}=\int_{\mathbb{Z}}\left|\left(f_{a} *f_{u}\right)(s)\right|^{2}\mu(ds)=\norm{f_{a} *f_{u}}_{\mathcal{L}^{2}(\mu)}^{2}\\
&\stackrel{(b)}{\leq }\norm{f_{a}}_{\mathcal{L}^{1}(\mu)}^{2}\norm{f_{u}}_{\mathcal{L}^{2}(\mu)}^{2}=\left(\sum_{s=0}^{T-1}\left\Vert v^{\T}\Gamma_T^{-1/2}A^{s}\Sigma_{\calW}^{1/2}\right\Vert\right)^{2}\norm{\bm{u}}^{2},
\end{align*}
where $(a)$ follows from Cauchy-Schwarz inequality and $(b)$ follows from Young's convolution inequality \cite[Section 8.2]{Folland2013} with $p=1,\;q=2,\;r=2$. It is therefore clear that
$$
\bigopnorm{\bm{\widetilde{L}}_{T}(v)}\leq \sum_{s=0}^{T-1}\left\Vert v^{\T}\Gamma_T^{-1/2}A^{s}\Sigma_{\calW}^{1/2}\right\Vert.
$$
We now write
\begin{align*}
\bigopnorm{\bm{\widetilde{L}}_{T}(v)}&\leq \sum_{s=0}^{T-1}\norm{v}\opnorm{\Gamma_T^{-1/2}}\opnorm{A^{s}}\opnorm{\Sigma_{\calW}^{1/2}}= \sum_{s=0}^{T-1}\frac{\opnorm{\Sigma_{\calW}^{1/2}}}{\lambda_{\min}\left(\Gamma_{T}^{1/2}\right)}\opnorm{A^{s}}\\
&=\frac{\opnorm{\Sigma_{\calW}^{1/2}}}{\lambda_{\min}\left(\Gamma_{T}^{1/2}\right)}M\sum_{s=0}^{T-1}\rho^{s}\leq \frac{\opnorm{\Sigma_{\calW}^{1/2}}M}{\lambda_{\min}\left(\Gamma_{T}^{1/2}\right)(1-\rho)}.
\end{align*}
Again the upper bound is independent of $v$, therefore plugging back into \eqref{eq:approximateriphansonwright} we conclude
\begin{align*}
\left( \E \left[\bigopnorm{ I_d - \bar{\Sigma}_{m,T} }^r \right]\right)^{1/r} &\leq 4^{1/r}\subg^{2}\phi\left(\sqrt{\frac{\opnorm{\Sigma_{\calW}}^{1/2}M(r+\log 2+d\log 9)}{\chw\lambda_{\min}\left(\Gamma_{T}\right)^{1/2}(1-\rho)mT}}\right),
\end{align*}
which concludes the proof under \Cref{assmp:sub_gaussian_noise} and \Cref{assmp:trajccp} $(i)$.

We now consider the case when \Cref{assmp:sub_gaussian_noise} and \Cref{assmp:trajccp} $(ii)$ hold. Under \Cref{assmp:sub_gaussian_noise}, $\bar{\bm{w}}$ is zero-mean. Under \Cref{assmp:trajccp} $(ii)$, we have for $i=1,\ldots,m$, $\bar{\bm{w}}^{(i)}$ satisfies $\mathrm{LS}(\subg^{2})$, and therefore by \Cref{prop:prod_lsi}, $\bar{\bm{w}}$ satisfies $\mathrm{LS}(\subg^{2})$. So by \Cref{fact:lsitoccp}, $\bar{\bm{w}}$ satisfies CCP with constant $K=\sqrt{2}\subg$. Therefore we can invoke \Cref{thm:adamczakhansonwright} which yields:
\begin{align*}
&f_{Q(v)}(t)=\chw\min\left(\frac{t^{2}}{8\left\Vert Q(v)\right\Vert_{F}^{2}\subg^{4}},\frac{t}{2\bigopnorm{Q(v)}\subg^{2}}\right),\\
&f^{-1}_{Q(v)}(t)=\subg^{2}\max\left(\sqrt{\frac{8\norm{Q(v)}_{F}^{2}t}{\chw}},\frac{2\opnorm{Q(v)}t}{\chw}\right)\leq 2\sqrt{2}\subg^{2}\max\left(\sqrt{\frac{\norm{Q(v)}_{F}^{2}t}{\chw}},\frac{\opnorm{Q(v)}t}{\chw}\right).
\end{align*}
The rest of the proof proceeds the same by replacing $\subg^{2}$ with $2\sqrt{2}\subg^{2}$.
\end{proof}

\subsection{OLS moment bounds}\label{appendix:ols}

Our goal in this subsection is to establish an upper bound for $T_{2}^{\text{OLS}}$. We show \Cref{lemma:OLS_error_moment_bound_main} in two steps. We first prove part $(i)$ as \Cref{lemma:OLS_error_moment_bound}, which directly builds on results from \cite{tu2024manytraj}. We then show that part $(ii)$ can be reduced to part $(i)$ via \Cref{lemma:averagegramianrip_main}. Before we begin, we remind ourself the following notation:
$$
\kappa := \kappa(A)=\left\lceil \frac{2}{\log(1/\rho)} \log\left( \tfrac{\sqrt{2}\opnorm{\Sigma_{\calW}}M}{\sqrt{ (1-\rho^{2})\lambda_{\min}(\Sigma_{\infty})}} \right)   \right\rceil,\quad \tilde{\Sigma}_{m,T,\kappa}:=\Gamma_{\kappa}^{-1/2}\hat{\Sigma}_{m,T}\Gamma_{\kappa}^{-1/2}.
$$

We now state and prove \Cref{lemma:OLS_error_moment_bound_main} $(i)$.
\begin{mylemma}\label{lemma:OLS_error_moment_bound}
Let $\instance$ denote a problem instance with $\calW$ satisfying
\Cref{assmp:sub_gaussian_noise} and
\Cref{assmp:traj_small_ball}. If $m \gtrsim rd$, then for any $r\geq 1$,
\begin{align*}
    \left(\E\left[ \left( \frac{ \opnorm{\bar{Q}_{m,T} \bar{\Sigma}^{-1/2}_{m,T}}^2 }{ \lambda_{\min}(\bar{\Sigma}_{m,T}) } \right)^r \right]\right)^{1/r} \lesssim \frac{r\nu^2 \opnorm{\Sigma_{\calW}} d}{mT}.
\end{align*}
\end{mylemma}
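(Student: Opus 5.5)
The plan is to split the ratio with Cauchy--Schwarz into a self-normalized martingale factor and an inverse-minimum-eigenvalue factor, and to bound each in $L^{2r}$ using the machinery of \cite{tu2024manytraj}. Concretely,
\begin{align*}
\left(\E\left[\left(\frac{\opnorm{\bar{Q}_{m,T}\bar{\Sigma}^{-1/2}_{m,T}}^2}{\lambda_{\min}(\bar{\Sigma}_{m,T})}\right)^r\right]\right)^{1/r}
\leq \left(\E\opnorm{\bar{Q}_{m,T}\bar{\Sigma}^{-1/2}_{m,T}}^{4r}\right)^{1/(2r)}\left(\E\lambda_{\min}(\bar{\Sigma}_{m,T})^{-2r}\right)^{1/(2r)}.
\end{align*}

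The second factor will come from the trajectory small-ball condition (\Cref{assmp:traj_small_ball}). Because the trajectories are independent, the condition applied with window $k=T$ yields a small-ball bound for each $\bar{\Sigma}^{(i)}_T$ in every direction. The standard covering and tensorization argument for independent small-ball summands then gives $\Pr(\lambda_{\min}(\bar{\Sigma}_{m,T})\leq \e)\leq (C\e)^{c\alpha m}$ once $m\gtrsim d$; this is \cite[Corollary B.14]{tu2024manytraj}. Integrating this tail shows $\E\lambda_{\min}(\bar{\Sigma}_{m,T})^{-2r}\lesssim C^{2r}$ provided $c\alpha m>2r$, which holds under $m\gtrsim rd$. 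This is exactly where the hypothesis $m\gtrsim rd$ enters.

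The first factor needs a moment bound for a self-normalized martingale. Note that $\bar{Q}_{m,T}\bar{\Sigma}^{-1/2}_{m,T}=\frac{1}{\sqrt{mT}}\,W^\T X(X^\T X)^{-1/2}$, so $\opnorm{\bar{Q}_{m,T}\bar{\Sigma}^{-1/2}_{m,T}}^2=\frac{1}{mT}\opnorm{W^\T X(X^\T X)^{-1/2}}^2$. This normalization is invariant to rescaling of $X$, but the self-normalized bound of \cite{abbasi2011online} needs a regularizer $V\succ 0$. To supply it, I would work on the event $\{\hat\Sigma_{m,T}\succeq \tfrac12\lambda\Gamma_T\}$ for an appropriate $\lambda$, with $V$ taken proportional to $mT\Gamma_T$. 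On this event, $(X^\T X)^{-1}\preceq 2(X^\T X+V)^{-1}$ up to constants. The vector self-normalized inequality, combined with a $1/2$-net over unit vectors on the left side (factoring out $\Sigma_{\calW}^{1/2}$ and using directional $\nu$-sub-Gaussianity), then gives with probability at least $1-\delta$
\[
\opnorm{W^\T X(X^\T X+V)^{-1/2}}^2\lesssim \nu^2\opnorm{\Sigma_{\calW}}\left(d+\log\det(I+V^{-1}X^\T X)+\log(1/\delta)\right).
\]
The log-det term is at most $d\log(1+\opnorm{\bar{\Sigma}_{m,T}}/\lambda)$. Its moments are $O(d)$ by \Cref{lemma:approximate_isometry_main} under \Cref{assmp:trajccp}, or by a cruder Markov bound on $\Tr\bar{\Sigma}_{m,T}$ using $\E\Tr\bar{\Sigma}_{m,T}=d$ if only \Cref{assmp:sub_gaussian_noise} is available. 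Converting the tail to $L^{2r}$ moments yields $\left(\E[\cdot]^{2r}\right)^{1/(2r)}\lesssim \nu^2\opnorm{\Sigma_{\calW}}(d+r)/(mT)$, and $d+r\lesssim rd$.

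The main obstacle is the complement event, where $\lambda_{\min}(\bar{\Sigma}_{m,T})$ is small and $X^\T X$ cannot be compared with $V$. I would handle it as \cite[Lemma 5.1]{tu2024manytraj} does: decompose dyadically over the level of $\lambda_{\min}(\bar{\Sigma}_{m,T})\in[2^{-k-1},2^{-k}]$ and use regularizer $V_k\propto 2^{-k}mT\Gamma_T$ on each shell. The self-normalized bound on shell $k$ costs an extra $dk$ from the log-det term. The small-ball tail $(C2^{-k})^{c\alpha m}$ makes the sum $\sum_k k\,2^{k}(C2^{-k})^{c\alpha m/2}$ converge once $m\gtrsim rd$. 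Rather than rederive this, I would cite the argument of \cite[Lemma 5.1]{tu2024manytraj} essentially verbatim. The only modification is replacing their isotropic-noise constant by $\nu^2\opnorm{\Sigma_{\calW}}$, which is obtained by writing $W=\bar{W}\Sigma_{\calW}^{1/2}$ and pulling $\opnorm{\Sigma_{\calW}^{1/2}}$ out of the operator norm.
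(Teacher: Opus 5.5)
Your argument is correct in outline, but it is organized differently from the paper's. You split the ratio by Cauchy--Schwarz into an $L^{2r}$ bound on $\opnorm{\bar{Q}_{m,T}\bar{\Sigma}_{m,T}^{-1/2}}^2$ and an inverse-moment bound on $\lambda_{\min}(\bar{\Sigma}_{m,T})$. That forces you to control the unregularized self-normalized term \emph{unconditionally}, which is why you need the dyadic peeling over shells of $\lambda_{\min}$.

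The paper never separates the two factors. It works on a single event of probability at least $1-3\delta$ on which three things hold:
\begin{itemize}
\item $\Tr(\bar{\Sigma}_{m,T})\leq d/\delta$, by Markov;
\item $\lambda_{\min}(\bar{\Sigma}_{m,T})\geq c_1\delta^{c_2 d/m}\geq c_1\delta^{1/(4r)}$, from \cite[Corollary B.13]{tu2024manytraj} with window $k=T$ and $m\geq 4c_2 rd$;
\item the unregularized self-normalized bound \cite[Proposition B.10]{tu2024manytraj} holds at threshold $c_1\delta^{1/(4r)}$.
\end{itemize}
The third item plays the role of your step $(X^\T X)^{-1}\preceq 2(X^\T X+V)^{-1}$ on $\{X^\T X\succeq V\}$, except that the regularization level is tied to $\delta$. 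As a result, the small-$\lambda_{\min}$ region is simply part of the failure probability. After absorbing $\log(1/\delta)$ into $\delta^{-1/(4r)}$, the ratio is $\lesssim r\nu^2\opnorm{\Sigma_{\calW}}d\,\delta^{-1/(2r)}/(mT)$. \Cref{lemma:polytailmoment} (applicable since $r\cdot\tfrac{1}{2r}<1$) then converts this polynomial tail into the $r$-th moment bound.

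Your modular version gives separate, reusable moment bounds for both factors, at the price of the peeling step. The paper's coupling of threshold and confidence level makes that step unnecessary and the proof shorter.

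There are two small corrections.

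\begin{enumerate}
\item The lower-tail bound the paper actually imports has the form $\lambda_{\min}(\bar{\Sigma}_{m,T})\geq c_1\delta^{c_2 d/m}$. Its tail exponent is of order $m/d$, not $(C\e)^{c\alpha m}$. Under your stated tail, $\E\lambda_{\min}(\bar{\Sigma}_{m,T})^{-2r}<\infty$ would only need $m\gtrsim\max\{d,r\}$. With the $m/d$ exponent it needs precisely $m\gtrsim rd$, and that, not $c\alpha m>2r$, is where the hypothesis enters. Your integration and peeling still go through with this weaker tail under $m\gtrsim rd$.
\item The lemma assumes only \Cref{assmp:sub_gaussian_noise} and \Cref{assmp:traj_small_ball}, so \Cref{lemma:approximate_isometry_main} is unavailable and you must take the Markov route. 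On that route, bounding the log-det by $d\log(1+\opnorm{\bar{\Sigma}_{m,T}}/\lambda)$ and then using $\opnorm{\cdot}\leq\Tr(\cdot)$ costs an extra $\log d$. Use $\log\det(I+M)\leq d\log(1+\Tr(M)/d)$ instead, which is the form the paper's bound takes.
\end{enumerate}
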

\begin{proof}
The proof consists of a simplified version of that of \cite[Lemma 5.1]{tu2024manytraj}. By \Cref{assmp:traj_small_ball}, for any $T\geq 1$, the state process $\{x_{t}\}_{t\geq0}$ satisfies the Traj-SB condition with any excitation window length $k\in\{1,\ldots, T\}$. Here we choose $k=T$. Therefore we can apply \cite[Corollary B.13]{tu2024manytraj} to obtain with probability at least $1-2\delta$,
$$
\Tr(\bar{\Sigma}_{m,T})\leq d/\delta,\quad \lambda_{\min}(\bar{\Sigma}_{m,T})\geq c_{1} \delta^{c_{2}d/m},
$$
where both $c_{1}$ and $c_{2}$ are universal constants. With the assumption that $m\gtrsim rd$ (specifically $m\geq 4c_{2}rd$), we have with probability at least $1-2\delta$, 
$$
\Tr(\bar{\Sigma}_{m,T})\leq d/\delta,\quad \lambda_{\min}(\bar{\Sigma}_{m,T})\geq c_{1} \delta^{c_{2}d/m}\geq c_{1}\delta^{1/4r}.
$$
Now viewing $\hat{Q}_{m,T}\hat{\Sigma}_{m,T}^{-1/2}$ as a self-normalized martingale with ordering prescribed by \eqref{eq:martingaleorder}, we can apply an unregularized self-normalized martingale tail bound \cite[Proposition B.10]{tu2024manytraj} to obtain with probability at least $1-\delta$,
\begin{align}
\bm{1}\left\{\bar{\Sigma}_{m,T}\succeq c_{1}\delta^{1/4r}I_{d}\right\}\bigopnorm{\bar{Q}_{m,T}\bar{\Sigma}_{m,T}^{-1/2}}^{2}&=\frac{1}{mT}\bm{1}\left\{\bar{\Sigma}_{m,T}\succeq c_{1}\delta^{1/4r}I_{d}\right\}\bigopnorm{\hat{Q}_{m,T}\hat{\Sigma}_{m,T}^{-1/2}}^{2}\nonumber\\
&\lesssim \frac{\nu^{2}\opnorm{\Sigma_{\calW}}}{mT}\left(d+d\log\left(1+\frac{\delta^{-1/4r}}{c_{1}d}\Tr(\bar{\Sigma}_{m,T})\right)+\log(1/\delta)\right).\label{eq:eventsnm}
\end{align}
We now pick an arbitrary $\delta_{0}\in (0,1)$. By taking the intersection of the two events, for any $\delta\in (0,\delta_{0}]$, we have with probability at least $1-3\delta$,
\begin{align*}
\frac{ \opnorm{\bar{Q}_{m,T} \bar{\Sigma}^{-1/2}_{m,T}}^2 }{ \lambda_{\min}(\bar{\Sigma}_{m,T}) }&\lesssim \frac{\nu^{2}\opnorm{\Sigma_{\calW}}}{mT}\delta^{-1/4r}\left(d+d\log\left(1+c_{1}^{-1}(1/\delta)^{1+1/4r}\right)+\log(1/\delta)\right)\\
&\leq \frac{\nu^{2}\opnorm{\Sigma_{\calW}}}{mT}\delta^{-1/4r}\left(d+d\log\left(1+c_{1}^{-1}(1/\delta)^{2}\right)+\log(1/\delta)\right)\\
&\stackrel{(a)}{\leq} \frac{\nu^{2}\opnorm{\Sigma_{\calW}}}{mT}\delta^{-1/4r}\left(d+2C_{\delta_{0}}d\log\left(c_{1}^{-1/2}/\delta\right)+\log(1/\delta)\right)\\
&\stackrel{(b)}{\leq} \frac{\nu^{2}\opnorm{\Sigma_{\calW}}}{mT}\delta^{-1/4r}(d+2\underbrace{C_{\delta_{0}}\left(1+|\log (c_{1})|/2\log(1/\delta_{0})\right)}_{:=C'_{\delta_{0}}}d\log\left(1/\delta\right)+\log(1/\delta))\\
&\lesssim C'_{\delta_{0}}\frac{\nu^{2}\opnorm{\Sigma_{\calW}}}{mT}d\log(1/\delta)\delta^{-1/4r} \\
&\stackrel{(c)}{\lesssim} C'_{\delta_{0}}r \frac{\nu^{2}\opnorm{\Sigma_{\calW}}d}{mT}\delta^{-1/2r}.
\end{align*}
Above, 
$(a)$ follows from for $x\geq x_{0}>0$, there exists $C(x_{0})$ depending only on $x_0$ such that $\log(1+x)\leq C(x_{0})\log(x)$, and here $C_{\delta_{0}}:=C(c_{1}^{-1}/\delta_{0}^{2})$. 
Step $(b)$ follows from for $x\geq x_{0}
\geq 1$, for any $c>0$, $\log(cx)\leq (1+|\log (c)|/\log(x_{0}))\log(x)$. Finally for $(c)$ we apply the inequality 
$\log(1/\delta) \leq \frac{1}{ep} \delta^{-p}$ for any $p > 0$ and $\delta \in (0, 1)$, where we choose $p=1/4r$.
Since $r/2r=1/2<1$, we now apply \Cref{lemma:polytailmoment} to conclude
$$
\left(\E\left[ \left( \frac{ \opnorm{\bar{Q}_{m,T} \bar{\Sigma}^{-1/2}_{m,T}}^2 }{ \lambda_{\min}(\bar{\Sigma}_{m,T}) } \right)^r \right]\right)^{1/r} \lesssim \left(C'_{\delta_{0}} r\delta_{0}^{-1/2r}\right)\frac{\nu^{2}\opnorm{\Sigma_{\calW}}d}{mT}.
$$
Taking $\delta_0 = 1/2$, then we have $C'_{1/2} r 2^{1/(2r)} \leq C'_{1/2} \sqrt{2} r\lesssim r$ since $C'_{1/2}$ is a universal constant. This finishes the proof.
\end{proof}

In the following result, we justify a key step in reduction of \Cref{lemma:OLS_error_moment_bound_main} $(ii)$ to $(i)$.
\averagegramianrip*
\begin{proof}
We first observe that:
\begin{align*}
    I_d - \Sigma_\infty^{-1/2} \Sigma_k \Sigma_\infty^{-1/2} &= \Sigma_\infty^{-1/2} (\Sigma_\infty - \Sigma_k) \Sigma_\infty^{-1/2} 
    = \Sigma_\infty^{-1/2} \left( \sum_{j=k}^{\infty} A^j \Sigma_{\calW}(A^j)^\T  \right) \Sigma_\infty^{-1/2}.
\end{align*}
Hence by the triangle inequality,
\begin{align*}
    \opnorm{  I_d - \Sigma_\infty^{-1/2} \Sigma_k \Sigma_\infty^{-1/2} } \leq \frac{\opnorm{\Sigma_{\calW}} M^2}{\lambda_{\min}(\Sigma_\infty)} \cdot \frac{\rho^{2k}}{1-\rho^2}.
\end{align*}
This implies that for any $k$ satisfying:
$$
k \geq k_0(\e) := \left\lceil \frac{\log\left(\left. \sqrt{\opnorm{\Sigma_{\calW}}}M\right/\sqrt{\e (1-\rho^{2})\lambda_{\min}\left(\Sigma_{\infty}\right)}\right)}{\log(1/\rho)}\right\rceil, \quad \e \in (0, 1),
$$ 
we have:
$$
(1-\e)\Sigma_{\infty}\preceq \Sigma_{k}\preceq (1+\e)\Sigma_{\infty}.
$$

Now for $k\geq 2k_{0}(\e)$, we observe that
$$
    \frac{1-\e}{2}\Sigma_{\infty}\preceq\frac{1}{2}\Sigma_{k_{0}(\epsilon)}\preceq\frac{1}{2} \Sigma_{\floor{k/2}}\preceq\frac{k - \floor{k/2} + 1}{k} \Sigma_{\floor{k/2}}\preceq\frac{1}{k} \sum_{i=1}^{k} \Sigma_{i}=\Gamma_k \preceq \Sigma_k \preceq \Sigma_\infty.
$$
Or concisely,
$$
\frac{1-\e}{2}\Sigma_{\infty}\preceq \Gamma_{k}\preceq \Sigma_{\infty}.
$$
Therefore letting $\kappa(\e):=2k_{0}(\e)$, we have for all $T\geq \kappa(\e)$, the following restricted isometry holds:
$$
\frac{1-\e}{2}\Gamma_{\kappa(\e)}\preceq \Gamma_{T}\preceq \frac{2}{1-\e}\Gamma_{\kappa(\e)}.
$$
This is equivalent to
$$
\lambda_{\min}\left(\Gamma_{T}^{-1}\Gamma_{\kappa(\e)}\right)\geq \frac{1-\e}{2},\quad \lambda_{\max}\left(\Gamma_{T}^{-1}\Gamma_{\kappa(\e)}\right)\leq \frac{2}{1-\e}.
$$
In particular, we now choose $\e=\frac{1}{2}$, and denote $\kappa =\kappa(1/2)$ then we have
$$
\lambda_{\min}\left(\Gamma_{T}^{-1}\Gamma_{\kappa}\right)\geq \frac{1}{4},\quad \lambda_{\max}\left(\Gamma_{T}^{-1}\Gamma_{\kappa}\right)\leq 4.
$$
The proof is now complete by observing that $\kappa(1/2) = \kappa(A)$ per its definition \eqref{eq:kappa}.
\end{proof}

We now state and prove \Cref{lemma:OLS_error_moment_bound_main} $(ii)$.
\begin{mylemma}\label{cor:sbmineigmoment}
Let $\instance$ denote a problem instance with $\calW$ satisfying
\Cref{assmp:sub_gaussian_noise} and
\Cref{assmp:traj_small_ball}. If $A$ is $(M,\rho)$-strictly stable, and we further assume that both
$T \gtrsim \kappa(A)$ and $mT\gtrsim r\kappa(A) d$, then for any $r\geq 1$,
$$
\left(\mathbb{E}\left[\left(\frac{\opnorm{\bar{Q}_{m,T} \bar{\Sigma}^{-1/2}_{m,T}}^2}{\lambda_{\min}(\tilde{\Sigma}_{m,T,\kappa} )}\right)^{r}\right]\right)^{1/r} \lesssim \frac{r \nu^2 \opnorm{\Sigma_{\calW}} d}{mT}.
$$
\end{mylemma}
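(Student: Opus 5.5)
We reduce this to the argument of \Cref{lemma:OLS_error_moment_bound}, with the window length $k$ in the trajectory small-ball condition chosen as $\kappa := \kappa(A)$ rather than $T$. The self-normalized martingale part is untouched: the high-probability bound \eqref{eq:eventsnm}, from \cite[Proposition B.10]{tu2024manytraj}, holds on any event of the form $\{\bar{\Sigma}_{m,T}\succeq \lambda I_d\}$. So the only new work is to control $\lambda_{\min}(\tilde{\Sigma}_{m,T,\kappa})$ from below and $\Tr(\bar{\Sigma}_{m,T})$ from above with the same polynomial tails as before.

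\textbf{Step 1 (lower tail of $\tilde{\Sigma}_{m,T,\kappa}$).} By \Cref{assmp:traj_small_ball} with window $k=\kappa$, each trajectory splits into $\floor{T/\kappa}$ blocks. Each block satisfies a conditional small-ball bound relative to $\Gamma_\kappa$, giving $m\floor{T/\kappa}\gtrsim mT/\kappa$ blocks in total. Dropping the leftover time indices only decreases $\hat{\Sigma}_{m,T}$ in Loewner order, which is harmless for a lower bound since $T\gtrsim\kappa$. Applying \cite[Corollary B.13]{tu2024manytraj} to these blocks, with $\Gamma_\kappa$ as the normalizer, should give with probability at least $1-\delta$
\[
\lambda_{\min}(\tilde{\Sigma}_{m,T,\kappa})\geq c_1\delta^{c_2\kappa d/(mT)}\geq c_1\delta^{1/4r},
\]
where the last step uses $mT\gtrsim r\kappa d$.

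\textbf{Step 2 (transfer to the $\Gamma_T$ whitening).} By \Cref{lemma:averagegramianrip_main}, $\Gamma_\kappa\succeq\frac14\Gamma_T$. Hence $\bar{\Sigma}_{m,T}\succeq \frac14\tilde{\Sigma}_{m,T,\kappa}$, and on the event of Step 1 we get $\bar{\Sigma}_{m,T}\succeq \frac{c_1}{4}\delta^{1/4r}I_d$. This is the indicator needed to invoke \eqref{eq:eventsnm}. The upper tail is easier: $\E\Tr(\bar{\Sigma}_{m,T})=d$, so Markov's inequality gives $\Tr(\bar{\Sigma}_{m,T})\leq d/\delta$ with probability at least $1-\delta$.

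\textbf{Step 3 (conclude).} Intersecting the three events, the chain of inequalities from the proof of \Cref{lemma:OLS_error_moment_bound} carries over verbatim, with $c_1$ replaced by $c_1/4$. It yields
\[
\frac{\opnorm{\bar{Q}_{m,T}\bar{\Sigma}_{m,T}^{-1/2}}^2}{\lambda_{\min}(\tilde{\Sigma}_{m,T,\kappa})}\lesssim r\,\frac{\nu^2\opnorm{\Sigma_{\calW}}d}{mT}\,\delta^{-1/2r}
\]
with probability at least $1-3\delta$, for all $\delta\le\delta_0=1/2$. Then \Cref{lemma:polytailmoment} converts this tail bound into the claimed $r$-th moment bound.

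The main obstacle is Step 1. I need to check that \cite[Corollary B.13]{tu2024manytraj} really applies with window length $k=\kappa$ and normalizer $\Gamma_\kappa$ when the data are $m$ independent trajectories, each cut into $\floor{T/\kappa}$ conditionally small-ball blocks. The key point is that the effective number of independent-like samples in its exponent becomes $mT/\kappa$ rather than $m$. If that corollary is stated only for $k=T$, I would instead redo its proof: a product of conditional small-ball bounds via iterated conditioning on $\calF_{(j-1)\kappa}$, followed by an $\e$-net argument over $v$ on the sphere whitened by $\Gamma_\kappa$.
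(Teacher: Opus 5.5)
Your proposal is correct and is essentially the paper's proof. The paper likewise invokes \cite[Corollary B.13]{tu2024manytraj} directly with window $k=\kappa$ (so your fallback of redoing that argument is not needed), bounds $\Tr(\bar{\Sigma}_{m,T})\leq d/\delta$ by Markov, transfers the lower bound via \Cref{lemma:averagegramianrip_main}, applies the self-normalized bound on the event $\{\bar{\Sigma}_{m,T}\succeq \frac{c_1}{4}\delta^{1/4r}I_d\}$, and finishes with \Cref{lemma:polytailmoment}.

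One small wording fix: $\bar{\Sigma}_{m,T}\succeq\frac14\tilde{\Sigma}_{m,T,\kappa}$ is not a valid Loewner inequality in general, since the two are congruences of $\hat{\Sigma}_{m,T}$ by different matrices. What you actually use is $\hat{\Sigma}_{m,T}\succeq\lambda\Gamma_\kappa\Rightarrow\hat{\Sigma}_{m,T}\succeq\frac{\lambda}{4}\Gamma_T$, i.e.\ $\lambda_{\min}(\bar{\Sigma}_{m,T})\geq\frac14\lambda_{\min}(\tilde{\Sigma}_{m,T,\kappa})$. That does hold and is exactly the paper's congruence step.
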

\begin{proof}
Denote $\kappa = \kappa(A)$.
Since we assumed $T\gtrsim \kappa$, under \Cref{assmp:traj_small_ball}, we know the state process $\{x_{t}\}_{t\geq 0}$ satisfies the Traj-SB condition with excitation window size $k=\kappa$. So by \cite[Corollary B.13]{tu2024manytraj}, we have with probability at least $1-\delta$,
\begin{equation}\label{eq:smallballmineigbound}
\lambda_{\min}( \tilde{\Sigma}_{m,T,\kappa} )\geq c_{1}\delta^{c_{2}\kappa d/mT},
\end{equation}
where $c_{1}$ and $c_{2}$ are universal constants.

Since the numerator remains $\opnorm{\bar{Q}_{m,T} \bar{\Sigma}^{-1/2}_{m,T}}^2$, which is normalized by $\Gamma_{T}$ instead of $\Gamma_{\kappa}$, we need to make connection between these quantities. First, by Markov's inequality and $\bar{\Sigma}_{m,T}$ being PSD,
$$
\Pr\left(\Tr(\bar{\Sigma}_{m,T})>t\right)\leq \frac{\Tr(\E\bar{\Sigma}_{m,T})}{t}=\frac{d}{t}.
$$
Therefore with probability at least $1-\delta$, $\Tr(\bar{\Sigma}_{m,T})\leq d/\delta$. Now by \Cref{lemma:averagegramianrip_main},
\begin{align*}
    \lambda_{\min}( \bar{\Sigma}_{m,T} ) &= \lambda_{\min}( \Gamma_T^{-1/2} \hat{\Sigma}_{m,T} \Gamma_T^{-1/2} ) \\
    &= \lambda_{\min}( \Gamma_T^{-1/2} \Gamma_\kappa^{1/2} \tilde{\Sigma}_{m,T,\kappa} \Gamma_\kappa^{1/2} \Gamma_T^{-1/2} ) \\
    &\stackrel{(a)}{\geq} \lambda_{\min}(\Gamma_{T}^{-1/2}\Gamma_{\kappa}^{1/2}\Gamma_{\kappa}^{1/2}\Gamma_{T}^{-1/2})\lambda_{\min}(\tilde{\Sigma}_{m,T,\kappa})\\
    &\geq c_1 \delta^{c_2 \kappa d/mT} \lambda_{\min}( \Gamma_\kappa \Gamma_T^{-1} ) \geq \frac{ c_1 \delta^{c_2 \kappa d/mT} }{4},
\end{align*}
where at $(a)$ we apply a congruence inequality. With the assumption that $mT\gtrsim r\kappa d$ (specifically $mT\geq 4c_{2}r\kappa d$), we have with probability at least $1-2\delta$,
$$
\Tr(\bar{\Sigma}_{m,T})\leq d/\delta,\quad \lambda_{\min}(\bar{\Sigma}_{m,T})\geq \frac{c_{1}}{4}\delta^{1/4r},\quad \lambda_{\min}(\tilde{\Sigma}_{m,T,\kappa})\geq c_{1}\delta^{1/4r}.
$$
Again by \cite[Proposition B.10]{tu2024manytraj}, with probability at least $1-\delta$,
\begin{align*}
\bm{1}\left\{\bar{\Sigma}_{m,T}\succeq \frac{c_{1}}{4}\delta^{1/4r}I_{d}\right\}\bigopnorm{\bar{Q}_{m,T}\bar{\Sigma}_{m,T}^{-1/2}}^{2}&\lesssim \frac{\nu^{2}\opnorm{\Sigma_{\calW}}}{mT}\left(d+d\log\left(1+\frac{4\delta^{-1/4r}}{c_{1}d}\Tr(\bar{\Sigma}_{m,T})\right)+\log(1/\delta)\right).
\end{align*}
Picking an arbitrary $\delta_{0}\in (0,1)$, for any $\delta\in (0,\delta_{0}]$, we have with probability at least $1-3\delta$,
\begin{align*}
\frac{\opnorm{\bar{Q}_{m,T} \bar{\Sigma}^{-1/2}_{m,T}}^2}{\lambda_{\min}(\tilde{\Sigma}_{m,T,\kappa} )}&\lesssim \frac{\nu^{2}\opnorm{\Sigma_{\calW}}}{mT}\delta^{-1/4r}\left(d+d\log\left(1+4c_{1}^{-1}(1/\delta)^{1+1/4r}\right)+\log(1/\delta)\right)\\&\lesssim C'_{\delta_0} r \frac{\nu^{2}\opnorm{\Sigma_{\calW}}d}{mT}\delta^{-1/2r},
\end{align*}
where the last inequality follows the corresponding arguments in \Cref{lemma:OLS_error_moment_bound}. Applying \Cref{lemma:polytailmoment} concludes the proof.
\end{proof}

\subsection{Sufficient conditions to assumptions}\label{appendix:logconcavetoall} In this section we prove \Cref{lemma:logconcavetoeverything}.

\logconcavetoeverything*
\begin{proof}
We first verify that 
\Cref{assmp:trajccp} $(ii)$ is satisfied. Note that this directly implies \Cref{assmp:sub_gaussian_noise} being satisfied due to \Cref{fact:lsitoccp}. 
Under $(\mathrm{SLC})$, \Cref{assmp:trajccp} $(ii)$ is satisfied by \Cref{fact:logconcavetolsi}. Under $(\mathrm{BLC})$, \Cref{assmp:trajccp} $(ii)$ is satisfied by \cite[Theorem 16]{leekls2024}.

We now verify \Cref{assmp:traj_small_ball}. This is the same proof as \cite[Example 4.6]{tu2024manytraj} other than that we emphasize $\calW$ being log-concave (instead of being Gaussian) is sufficient for the result. We present the proof for completeness. 
We draw a single trajectory from $(A,\calW,1,T)$ and drop the supercript. 
Similar to \eqref{eq:concatenatedef}, we let for any $1\leq k\leq T$,
$$
\bm{w}_{k}:=\left(w_0^{\T}, \dots, w_{k-1}^{\T}\right)^{\T}\in\R^{kd}, \quad \bar{\bm{w}}_{k}=(I_{k}\otimes \Sigma_{\calW}^{-1/2})\bm{w}_{k}.
$$
We also define the block Toeplitz matrix
\begin{equation}
\Phi_{k}:=\begin{bmatrix}\Sigma_{\calW}^{1/2} & 0 & \ldots & 0\\A\Sigma_{\calW}^{1/2} & \Sigma_{\calW}^{1/2} & \ldots & 0 \\ \vdots & & \ddots & \\ A^{k-1}\Sigma_{\calW}^{1/2} & A^{k-2}\Sigma_{\calW}^{1/2} & \ldots & \Sigma_{\calW}^{1/2}  \end{bmatrix}\in\R^{kd\times kd},\label{eq:noisetostatetransition}
\end{equation}
Fix an arbitrary test vector $v \in \R^{d}$ and define
the degree-two polynomial $p_{v,k}: \R^{kd} \mapsto \R$ as $p_{k,v}(\bar{\bm{w}}_{k}) := \frac{1}{k} \sum_{t=1}^{k} \left\langle v, x_{t} \right\rangle^2$. Since
$$
\left(\left\langle v , x_1\right\rangle, \dots, \left\langle v,x_{k}\right\rangle\right)^{\T}=\left(I_{k}\otimes v^{\T}\right)\Phi_{k}\bar{\bm{w}}_{k},
$$
we can rewrite
$$
p_{k,v}(\bar{\bm{w}}_{k})=\left(\bar{\bm{w}}_{k}\right)^{\T}\underbrace{\left(\left(I_{k}\otimes v^{\T}\right)\Phi_{k}\right)^{\T}\left(\left(I_{k}\otimes v^{\T}\right)\Phi_{k}\right)}_{:=\Psi(v)}\bar{\bm{w}}_{k}.
$$
We can compute the $L^1\left(\otimes_{t=1}^{k}\bar{\calW}\right)$-norm of $p_v$ similar to \eqref{eq:traceofLTv}:
$$
    \E|p_{k,v}(\bar{\bm{w}}_{k})| = \Tr\left( \frac{1}{k}\Psi(v)\E\left[\bar{\bm{w}}_{k}\bar{\bm{w}}_{k}^{\T}\right]\right)=\Tr\left(\frac{1}{k}\Psi(v)\right)=v^{\T}\Gamma_{k}v.
$$
Since $\bar{\bm{w}}_{k}\sim \otimes_{t=1}^{k}\bar{\calW}$ is the product measure of log-concave measures (under either $(\mathrm{SLC})$ or $(\mathrm{BLC})$), it is still log-concave by \cite[Theorem 4.2]{borell1975convex}. Therefore we can apply \cite[Theorem 8]{carbery2001distributional} (with $d = 2$ and $q = 2$), to get that there exists an universal constant $C > 0$ such that for all $\alpha > 0$,
$$
    \Pr\left(p_{k,v}(\bar{\bm{w}}_{k}) \leqslant \alpha \right)\leq C \alpha^{1/2} \left( \E |p_{k,v}(\bar{\bm{w}}_{k})| \right)^{-1/2}
    = C \left( \frac{\alpha}{v^\top \Gamma_k \, v} \right)^{1/2}.
$$
Setting $\alpha = \e  v^{\T} \Gamma_k v$, we obtain for any $\e>0$,
$$
    \Pr\left( \frac{1}{k}\sum_{t=1}^{k} \left\langle v, x_t \right\rangle^2 \leq \e v^{\T} \Gamma_k v \right)\leq  \left(\sqrt{C}\e\right)^{1/2}.
$$
By \Cref{assmp:sub_gaussian_noise} $(i)$, $\Sigma_{\calW}\succ 0$ which implies  $\Gamma_{k}\succ 0$ for any $1\leq k\leq T$, verifying \Cref{assmp:traj_small_ball}.
\end{proof}

\section{Supporting Proofs for Analysis of Term \texorpdfstring{$T_{1}$}{T1}}\label{sec:operatorfull}
With results from \Cref{sec:frobeniusfull} we are now ready to analyze term $T_{1}$. We recall that the analysis of $T_{1}$ under Frobenius norm is completed as in \eqref{eq:t1frobeniusproof}. We therefore devote this section to the analysis of $T_{1}$ under operator norm. For a random matrix $M$, we will define $\norm{M}_p := \left(\E \norm{M}^p_{S_p} \right)^{1/p}$. 

We now restate and prove \Cref{thm:matrix_BDG_main}.

\matrixbdg*

\begin{proof}
We will start by adapting the operator algebraic language of \cite{randrianantoanina07} to our finite-dimensional setting. Let $M_n(\R)$ denote the space of $\R^{n\times n}$-valued matrices.
We let $\calM = L^\infty(\Omega, \calF, \Pr) \bar{\otimes} M_n(\R)$ denote the
finite von Neumann algebra with a normal faithful finite trace $\tau(M) := \E[\Tr(M)]$.
With the modulus $\abs{M} := (M^\T M)^{1/2}$, with $(\cdot)^{1/2}$ denoting the PSD square root,
we have the $L^p(\calM, \tau)$ norm for a random matrix $M$ 
coincides with $\norm{M}_{p}$ as previously defined.

Now for $t \in \N_+$, we apply \cite[Theorem 4.1]{randrianantoanina07}
to the stopped martingale $\{M_{i \wedge t}\}_{i \geq 1}$ to get that
for any $d\geq 1$ and $r \geq 2$,
\begin{align*}
    \max_{1\leq i \leq t} \norm{ M_i }_{r} \leq Cr \max\left\{ \bignorm{ \left( \sum_{i=1}^{t}  \E_{i-1}[ D_i^\T D_i]  \right)^{1/2} }_{r} ,  \bignorm{ \left( \sum_{i=1}^{t}  \E_{i-1}[ D_i D_i^\T ]  \right)^{1/2} }_{r}, \left( \sum_{i=1}^{t} \norm{ D_i}_{r}^{r}  \right)^{1/r} \right\},
\end{align*}
where $C > 0$ is a universal constant. Now assume $d\geq 3$ and $r\geq \log d$, applying \Cref{fact:randommatrixnormconversion} on both sides, we obtain
\begin{align*}
     \left(\E\opnorm{M_{t}}^r\right)^{1/r}
    &\leq Cer \max\left\{\left(\E\bigopnorm{ \left({\sum_{i=1}^{t}} \E_{i-1}\left[ D_i^{\T}D_i  \right]\right)^{1/2}}^{r} \right)^{1/r} ,  \right.\\
    &\left.\qquad\qquad\qquad\qquad\left(\E\bigopnorm{ \left({\sum_{i=1}^{t}} \E_{i-1}\left[ D_iD_i^\T  \right]\right)^{1/2}}^{r} \right)^{1/r}, \left( \sum_{i=1}^{t} \norm{ D_i}_{r}^{r}  \right)^{1/r} \right\}\\
    &\stackrel{(a)}{\leq}  Cer \left[  
    \left(\E\bigopnorm{ {\sum_{i=1}^{t}} \E_{i-1}\left[ D_i^\T D_i\right]}^{r/2}\right)^{1/r}\right.\\
    &\left.\qquad\qquad\qquad\qquad+\left(\E\bigopnorm{ {\sum_{i=1}^{t}} \E_{i-1}\left[ D_iD_i^\T  \right]}^{r/2} \right)^{1/r} + \left( \sum_{i=1}^{t} \norm{ D_i}_{r}^{r}  \right)^{1/r} \right],
\end{align*}
where $(a)$ follows from for PSD $M$, $\opnorm{M^{1/2}}=\opnorm{M}^{1/2}$. Now if we assume $r\geq \log d\geq 2$ (the last inequality is implied by $d\geq 8$), we can apply Jensen's inequality on the LHS to get
\begin{align*}
\left(\E\bigopnorm{M_{t}}^2\right)^{1/2}&\leq  Cer \left[  
    \left(\E\bigopnorm{ {\sum_{i=1}^{t}} \E_{i-1}\left[ D_i^\T D_i\right]}^{r/2}\right)^{1/r}\right.\\
    &\left.\qquad\qquad\qquad\qquad+\left(\E\bigopnorm{ {\sum_{i=1}^{t}} \E_{i-1}\left[ D_iD_i^\T  \right]}^{r/2} \right)^{1/r} + \left( \sum_{i=1}^{t} \norm{ D_i}_{r}^{r}  \right)^{1/r} \right],
\end{align*}
which finishes the proof.
\end{proof}

We now recall for a problem instance $\instance$, the martingale defined by \eqref{eq:martingaleorder}: for $1\leq j,t\leq mT$,
\begin{equation*}
    D_j := (mT)^{-1} w_{t_j}^{(i_j)} (x_{t_j}^{(i_j)})^\T \Gamma_T^{-1}, \quad M_{s}:=\sum_{j=1}^{s}D_{j},
\end{equation*}
where $i_j := \floor{(j-1)/T} + 1$ and $t_j := ((j-1) \mod T)+1$. We denote the corresponding natural filtration as $\{\calG_{s}\}_{s=0}^{mT}$, where for $s=1,\ldots, mT$ $\calG_{s}:=\sigma\left(\left\{w_{t_{j}}^{(i_{j})}\right\}_{j=1}^{s}\cup \left\{w_{0}^{(i)}\right\}_{i=1}^{m}\right)$, and $\calG_{0}=\sigma\left(\left\{w_{0}^{(i)}\right\}_{i=1}^{m}\right)$ so that the state trajectory remains adapted when the next state is the start of a new independent trajectory. We also additionally denote
$y_t^{(i)} := \Gamma_T^{-1} x_t^{(i)}$ so that $D_{j}=(mT)^{-1}w_{t_j}^{(i_j)} (y_{t_j}^{(i_j)})^\T$, and recall the notation $\bar{\Tr}(\cdot) = \max\{\Tr(\cdot), \log(d) \opnorm{\cdot} \}$. 
From these definitions, we have $\hat{Q}_{m,T}\Gamma_{T}^{-1}=M_{mT}$. Therefore, we now apply \Cref{thm:matrix_BDG_main} on $M_{mT}$ to obtain \Cref{lemma:t1opbound}.

\bdgopbound*

\begin{proof}
Directly applying \Cref{thm:matrix_BDG_main} on $M_{mT}$ gives us for any $r\geq \log d$,
\begin{align*}
&\quad\frac{1}{r}\left(\E\bigopnorm{\hat{Q}_{m,T}\Gamma_{T}^{-1}}^2\right)^{1/2}\\&\lesssim   
    \underbrace{\left(\E\bigopnorm{ {\sum_{j=1}^{mT}} \E_{j-1}\left[ D_{j}^{\T} D_j\right]}^{r/2}\right)^{1/r}+\left(\E\bigopnorm{ {\sum_{j=1}^{mT}} \E_{j-1}\left[ D_{j}D_{j}^\T  \right]}^{r/2} \right)^{1/r}}_{:=(I)}+ \underbrace{\left( \sum_{j=1}^{mT} \norm{ D_j}_{r}^{r}  \right)^{1/r}}_{:=(II)} .
\end{align*}
and therefore
$$
\E\bigopnorm{\hat{Q}_{m,T}\Gamma_{T}^{-1}}^2 \lesssim r^{2}\left((I)^{2}+(II)^{2}\right).
$$

We analyze $(I)$ and $(II)$ in order. We start with $(I)$. We can compute:
\begin{align*}
    \E\left[ D_{j}^{\T} D_j\mid \calG_{j-1}\right] = (mT)^{-2} \Tr(\Sigma_{\calW}) y_{t_j}^{(i_j)}(y_{t_j}^{(i_j)})^\T,\quad \E\left[ D_{j} D_{j}^{\T}\mid \calG_{j-1}\right]= (mT)^{-2} \bignorm{ y_{t_j}^{(i_j)} }^2 \Sigma_{\calW}.
\end{align*}
Hence, we can compute the sum:
\begin{align*}
     \sum_{j=1}^{mT}\E\left[ D_{j}^{\T} D_j\mid \calG_{j-1}\right] &= \frac{\Tr(\Sigma_{\calW})}{mT} \left( \frac{1}{mT} \sum_{i=1}^{m}\sum_{t=1}^{T} \Gamma_{T}^{-1}x_{t}^{(i)} x_{t}^{(i)\T} \Gamma_{T}^{-1}\right)=\frac{\Tr(\Sigma_{\calW})}{mT} \left(\Gamma_{T}^{-1/2}\bar{\Sigma}_{m,T}\Gamma_{T}^{-1/2}\right),\\
    \sum_{j=1}^{mT}\E\left[ D_{j} D_{j}^{\T}\mid \calG_{j-1}\right] &= \frac{1}{mT} \left( \frac{1}{mT} \sum_{i=1}^{m}\sum_{t=1}^{T} \bignorm{\Gamma_{T}^{-1}x_{t}^{(i)}}^2 \right) \Sigma_{\calW}=\frac{\Tr\left( \Gamma_{T}^{-1}\bar{\Sigma}_{m,T} \right)}{mT} \Sigma_{\calW}.
\end{align*}
Therefore taking $r/2$-th power on both sides,
\begin{align*}
    \E\bigopnorm{ \sum_{j=1}^{mT}\E\left[ D_{j}^{\T} D_j\mid \calG_{j-1}\right] }^{r/2} & = \left(\sqrt{\frac{\Tr(\Sigma_{\calW})}{mT}}\right)^{r} \E\left[\bigopnorm{ \Gamma_T^{-1/2} \bar{\Sigma}_{m,T} \Gamma_T^{-1/2} }^{r/2}\right]\\
    &\leq \left(\sqrt{\frac{\Tr(\Sigma_{\calW}) \bigopnorm{\Gamma_T^{-1}} }{mT}}\right)^{r} \E \left[\bigopnorm{\bar{\Sigma}_{m,T}}^{r/2} \right],\\
    \E\bigopnorm{ \sum_{j=1}^{mT}\E\left[ D_{j} D_{j}^{\T}\mid \calG_{j-1}\right] }^{r/2} &= \left(\sqrt{\frac{\bigopnorm{\Sigma_{\calW}}}{mT}}\right)^{r} \E \left[\bigabs{ \Tr( \Gamma_T^{-1}  \bar{\Sigma}_{m,T} )   }^{r/2}\right]\\
    &\leq \left(\sqrt{\frac{\bigopnorm{\Sigma_{\calW}} \Tr\left(\Gamma_T^{-1}\right)}{mT}} \right)^{r}\E\left[\bigopnorm{\bar{\Sigma}_{m,T}}^{r/2}\right] .
\end{align*}
Putting the above bounds together we read for any $r\geq 2$
\begin{align*}
(I)^{2}&\leq \left(\sqrt{\frac{\Tr(\Sigma_{\calW}) \bigopnorm{\Gamma_T^{-1}} }{mT}}+\sqrt{\frac{\bigopnorm{\Sigma_{\calW}} \Tr\left(\Gamma_T^{-1}\right)}{mT}} \right)^{2}\left(\E\left[\bigopnorm{\bar{\Sigma}_{m,T}}^{r/2}\right]\right)^{2/r}\\
&\lesssim \frac{\opnorm{\Sigma_{\calW}}\Tr(\Gamma_T^{-1})+\Tr(\Sigma_{\calW}) \bigopnorm{\Gamma_T^{-1}} }{mT}\left(\E\left[\bigopnorm{\bar{\Sigma}_{m,T}}^{r/2}\right]\right)^{2/r}\\
&\stackrel{(a)}{\leq} \frac{\opnorm{\Sigma_{\calW}}\Tr(\Gamma_T^{-1})+\Tr(\Sigma_{\calW}) \bigopnorm{\Gamma_T^{-1}} }{mT}\left(1 + \left( \E\left[\bigopnorm{I_d - \bar{\Sigma}_{m,T}}^{r/2} \right]\right)^{2/r}\right),
\end{align*}
where at $(a)$ we applied the Minkowski's inequality for $r/2\geq 1$. We see that we are left with a quantity related to \Cref{lemma:approximate_isometry_main}, which means we need to split cases in terms of stability. We shall pause here to analyze $(II)$ first. 

We now inspect the summand of $(II)$: by \Cref{fact:randommatrixnormconversion} (recall $r \geq \log{d}$) and the tower property,
\begin{align*}
    \norm{ D_{j} }_{r}^{r} \leq e^{r} \E \left[\opnorm{D_{j} }^{r}\right] = \frac{e^{r}}{(mT)^{r}} \E \left[\bignorm{w_{t_{j}}^{(i_{j})}}^{r} \bignorm{ y_{t_{j}}^{(i_{j})}}^{r}\right]=\frac{e^{r}}{(mT)^{r}}\E\left[\bignorm{w_{t_{j}}^{(i_{j})}}^{r}\right] \E \left[\bignorm{ y_{t_{j}}^{(i_{j})}}^{r}\right].
\end{align*}
First for $\E\left[\bignorm{w_{t_{j}}^{(i_{j})}}^{r}\right]$, by \Cref{prop:sub_Gaussian_norm_moment},
\begin{align*}
    \E\left[\bignorm{w_{t_{j}}^{(i_{j})}}^{r}\right] \lesssim \nu^{r} \left( \sqrt{\Tr(\Sigma_{\calW})} + \sqrt{r \bigopnorm{\Sigma_{\calW}} } \right)^{r}.
\end{align*}
Next for $\E\left[\bignorm{y_{t_{j}}^{(i_{j})}}^{r}\right]$, we define the following notation:
$$
\Xi_{T,k}:=\begin{bmatrix}A^{[k-1]} & A^{[k-2]} & \ldots & A^{[k-T]}   \end{bmatrix}\in\R^{d\times Td},
$$
where $A^{[n]}=A^{n}$ if $n\geq 0$ and $A^{[n]}=0$ if $n<0$. This yields $x_{t_{j}}^{(i_{j})} \stackrel{d}{=} \Xi_{T,t_{j}} w^{(1)}_{0:t_{j}-1}$, where $w^{(1)}_{0:t_j-1} = \left(w^{(1)\T}_{0}, \dots, w^{(1)\T}_{t_{j}-1}\right)^{\T}$.
We know $\E\left[ w^{(1)}_{0:t_{j}-1} w^{(1)\T}_{0:t_{j}-1} \right] =I_{t_{j}}\otimes \Sigma_{\calW}$, and by \Cref{prop:directional_concat_subGaussian}, $\left(I_{t_{j}}\otimes \Sigma_{\calW}\right)^{-1/2} w^{(1)}_{0:t_{j}-1}$ is directionally $\nu$-sub-Gaussian.
Hence, by \Cref{prop:sub_Gaussian_norm_moment},
\begin{align*}
    \E\left[\bignorm{y_{t_{j}}^{(i_{j})}}^{r}\right] &= \E\left[\bignorm{ \Gamma_T^{-1} \Xi_{T,t_{j}} w^{(1)}_{0:t_{j}-1}}^{r}\right] \\
    &\lesssim \nu^{r} \left( \sqrt{\Tr\left( \Gamma_T^{-1} \Xi_{T,t_{j}} \left(I_{t_{j}}\otimes \Sigma_{\calW}\right) \Xi_{T,t_{j}}^{\T} \Gamma_T^{-1} \right)} + \sqrt{r}  \bigopnorm{ \left(I_{t_{j}}\otimes \Sigma_{\calW}\right)^{1/2} \Xi_{T,t_{j}}^{\T} \Gamma_T^{-1} }   \right)^{r} \\
    &=\nu^{r} \left( \sqrt{\Tr\left( \Gamma_T^{-2} \left(\sum_{k=0}^{t_{j}-1}A^{k}\Sigma_{\calW}(A^{k})^{\T}\right)  \right)} + \sqrt{r}  \sqrt{\lambda_{\max}\left(\Gamma_T^{-1} \Xi_{T,t_{j}} \left(I_{t_{j}}\otimes \Sigma_{\calW}\right) \Xi_{T,t_{j}}^{\T} \Gamma_T^{-1}\right)}  \right)^{r}\\
    &= \nu^{r} \left( \sqrt{\Tr\left( \Gamma_T^{-2} \Sigma_{t_{j}} \right)} + \sqrt{r} \sqrt{\lambda_{\max}\left( \Gamma_T^{-2} \Sigma_{t_{j}}  \right)  }  \right)^{r}.
\end{align*}
Now by $\Gamma_{T}^{-1}$ being positive definite, we have
\begin{align*}
    \Tr\left( \Gamma_T^{-2} \Sigma_{t_{j}} \right) &= \Tr\left( \Gamma_T^{-1}  \Gamma_T^{-1/2} \Sigma_{t_{j}} \Gamma_T^{-1/2}  \right) \leq \lambda_{\max}\left( \Gamma_T^{-1/2} \Sigma_{t_{j}} \Gamma_T^{-1/2}\right) \Tr\left( \Gamma_T^{-1} \right), \\
    \lambda_{\max}(\Gamma_T^{-2} \Sigma_t ) &=\lambda_{\max}\left(\Gamma_T^{-1}  \Gamma_T^{-1/2} \Sigma_{t_{j}} \Gamma_T^{-1/2} \right)\leq \lambda_{\max}\left( \Gamma_T^{-1/2} \Sigma_{t_{j}} \Gamma_T^{-1/2} \right) \opnorm{\Gamma_T^{-1}}.
\end{align*}
Putting the bounds together, and using $\Sigma_{T}\succeq \Sigma_{t}$ for any $1\leq t\leq T$, we have
\begin{align*}
\E\left[\bignorm{y_{t_{j}}^{(i_{j})}}^{r}\right]&\lesssim \nu^{r}\left(\lambda_{\max}\left( \Gamma_T^{-1/2} \Sigma_{t_{j}} \Gamma_T^{-1/2} \right)\right)^{r/2}\left(\sqrt{\Tr\left(\Gamma_{T}^{-1}\right)}+\sqrt{r\bigopnorm{\Gamma_{T}^{-1}}}\right)^{r}\\
&\leq \nu^{r}\left(\lambda_{\max}\left( \Gamma_T^{-1/2} \Sigma_{T} \Gamma_T^{-1/2} \right)\right)^{r/2}\left(\sqrt{\Tr\left(\Gamma_{T}^{-1}\right)}+\sqrt{r\bigopnorm{\Gamma_{T}^{-1}}}\right)^{r}.
\end{align*}
Denoting $\bar{\Tr}_{r}(\cdot) = \max\{\Tr(\cdot), r \opnorm{\cdot} \}$ (notice $\bar{\Tr}(\cdot)=\bar{\Tr}_{\log d}(\cdot)$), we have 
$$
\norm{ D_{j} }_{r}^{r} \lesssim \frac{e^{r}\nu^{2r}}{(mT)^{r}}\left(\lambda_{\max}\left( \Gamma_T^{-1/2} \Sigma_{T} \Gamma_T^{-1/2} \right)\right)^{r/2}\left(\bar{\Tr}_{r}(\Sigma_{\calW})\bar{\Tr}_{r}(\Gamma_{T}^{-1})\right)^{r/2}.
$$
Summing across $j$ gives us
$$
(II)^{2}\lesssim \frac{\nu^{4}\bar{\Tr}_{r}(\Sigma_{\calW})\bar{\Tr}_{r}(\Gamma_{T}^{-1})}{(mT)^{2-2/r}}\lambda_{\max}\left( \Gamma_T^{-1/2} \Sigma_{T} \Gamma_T^{-1/2} \right).
$$
We are now ready to divide into two cases in terms of stability. We will set $r = \log(d)$.

\mysubpara{Case 1: no restriction on $\rho(A)$} For $(I)$, we invoke \Cref{lemma:approximate_isometry_main} to get 
$$
\left( \E \bigopnorm{ I_d - \bar{\Sigma}_{m,T} }^{\log(d)/2} \right)^{2/\log(d)} \lesssim \nu^{2} \phi\left(\sqrt{\frac{d+\log (d)/2}{m}}\right)\lesssim \nu^{2}\phi\left(\sqrt{\frac{d}{m}}\right).
$$
By the assumption $m\gtrsim \max\{ \nu^{2}, \nu^{4}\}d$, we have $\nu^2 \phi(\sqrt{d/m}) \lesssim 1$, and therefore:
$$
(I)^{2}\lesssim \frac{\opnorm{\Sigma_{\calW}}\Tr(\Gamma_T^{-1})+\Tr(\Sigma_{\calW}) \bigopnorm{\Gamma_T^{-1}} }{mT}.
$$
 
For $(II)$, we note by definition $\Gamma_T = \frac{1}{T} \sum_{t=1}^{T} \Sigma_t \succeq \frac{1}{T} \Sigma_T$, which implies $\lambda_{\max}(\Gamma_T^{-1/2} \Sigma_{T} \Gamma_T^{-1/2}) \leq T$. Therefore we have
$$
(II)^{2}\lesssim \frac{\nu^{4}\bar{\Tr}(\Sigma_{\calW})\bar{\Tr}(\Gamma_{T}^{-1})}{m^{2-2/\log(d)}T^{1-2/\log(d)}}.
$$
Putting the bounds together concludes the case:
\begin{align*}
    \E\opnorm{\hat{Q}_{m,T} \Gamma_T^{-1}}^2 \lesssim \log^2(d)\left( \frac{\opnorm{\Sigma_{\calW}} \Tr(\Gamma_T^{-1}) + \Tr(\Sigma_{\calW}) \opnorm{\Gamma_T^{-1}}}{mT} + \frac{\nu^4 \bar{\Tr}(\Sigma_{\calW}) \bar{\Tr}(\Gamma_T^{-1}) }{m^{2(1-1/\log{d})} T^{2(1/2-1/\log{d})}} \right).
\end{align*}

\mysubpara{Case 2: $A$ is $(M,\rho)$-strictly-stable} For $(I)$, we invoke \Cref{lemma:approximate_isometry_main} to get 
$$
\left( \E \bigopnorm{ I_d - \bar{\Sigma}_{m,T} }^r \right)^{1/r} \lesssim \nu^{2} \phi\left(\sqrt{ \frac{\opnorm{\Sigma_{\calW}^{1/2}}M\left(d+\log(d)/2\right)}{\lambda_{\min}(\Gamma_{T}^{1/2})(1-\rho)mT} }\right)\lesssim \nu^{2} \phi\left(\sqrt{ \frac{\opnorm{\Sigma_{\calW}^{1/2}}Md}{\lambda_{\min}(\Gamma_{T}^{1/2})(1-\rho)mT} }\right).
$$
By the assumption $mT\gtrsim \frac{\max\{\nu^{2},\nu^{4}\}\opnorm{\Sigma_{\calW}^{1/2}}Md}{\lambda_{\min}(\Gamma_{T}^{1/2})(1-\rho)}$, the RHS of the above inequality is $\lesssim 1$, which yields:
$$
(I)^{2}\lesssim \frac{\opnorm{\Sigma_{\calW}}\Tr(\Gamma_T^{-1})+\Tr(\Sigma_{\calW}) \bigopnorm{\Gamma_T^{-1}} }{mT}.
$$

For $(II)$, by \Cref{lemma:averagegramianrip_main}, given $T \geq \kappa(A)$, we have $\Gamma_T \succeq \frac{1}{4} \Sigma_\infty$, and hence
$\lambda_{\max}(\Gamma_T^{-1/2} \Sigma_{T} \Gamma_T^{-1/2}) \leq \lambda_{\max}(\Gamma_T^{-1/2} \Sigma_\infty \Gamma_T^{-1/2}) \leq 4$. Therefore we have
$$
(II)^{2}\lesssim \frac{\nu^{4}\bar{\Tr}(\Sigma_{\calW})\bar{\Tr}(\Gamma_{T}^{-1})}{(mT)^{2-2/\log(d)}}.
$$
Again putting the bounds together concludes the case:
\begin{align*}
    \E\opnorm{\hat{Q}_{m,T} \Gamma_T^{-1}}^2 &\lesssim \log^2(d) \left( \frac{\opnorm{\Sigma_{\calW}} \Tr(\Gamma_T^{-1}) + \Tr(\Sigma_{\calW}) \opnorm{\Gamma_T^{-1}}}{mT} + \frac{\nu^4 \bar{\Tr}(\Sigma_{\calW}) \bar{\Tr}(\Gamma_T^{-1})}{(mT)^{2(1-1/\log{d})}} \right).
\end{align*}
This concludes the proof.
\end{proof}

\section{Miscellaneous Technical Facts}\label{sec:supporting}
In this section we collect miscellaneous
technical facts and results that are 
utilized in our analysis.

\begin{myfact}
\label{prop:directional_concat_subGaussian}
Let $z_1, \dots, z_k \in \R^{n}$ be zero-mean independent random vectors which are each directionally $R$-sub-Gaussian: for $i=1,\ldots,k$, for all $\lambda\in \R$,
$$
\sup_{v\in \mathbb{S}^{n-1}}\mathbb{E}\left[ \exp\left(\lambda\left\langle v,z_{i}\right\rangle\right) \right]\leq \exp\left(\lambda^{2}R^{2}/2\right).
$$
Then the concatenated vector $\bm{z} := (z_1, \dots, z_k) \in \R^{nk}$ is also directionally $R$-sub-Gaussian.
\end{myfact}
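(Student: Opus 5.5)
The plan is to verify the moment generating function bound directly, one direction at a time, using independence to factorize. Fix an arbitrary unit vector $u \in \bbS^{nk-1}$ and write it in blocks as $u = (u_1, \dots, u_k)$ with $u_i \in \R^n$, so that $\sum_{i=1}^k \norm{u_i}^2 = 1$. Then $\ip{u}{\bm{z}} = \sum_{i=1}^k \ip{u_i}{z_i}$, and for any $\lambda \in \R$ independence of $z_1,\dots,z_k$ gives
\begin{align*}
\E\left[\exp\left(\lambda \ip{u}{\bm{z}}\right)\right] = \prod_{i=1}^k \E\left[\exp\left(\lambda \ip{u_i}{z_i}\right)\right].
\end{align*}

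For each factor with $u_i \neq 0$, I will write $u_i = \norm{u_i} v_i$ with $v_i \in \bbS^{n-1}$ and apply the directional sub-Gaussian hypothesis with the rescaled parameter $\lambda \norm{u_i}$, obtaining
\begin{align*}
\E\left[\exp\left(\lambda \ip{u_i}{z_i}\right)\right] = \E\left[\exp\left(\lambda\norm{u_i} \ip{v_i}{z_i}\right)\right] \leq \exp\left(\lambda^2 \norm{u_i}^2 R^2/2\right).
\end{align*}
Factors with $u_i = 0$ equal $1 = \exp(0)$, so the same bound holds trivially. Multiplying, the product is at most $\exp(\lambda^2 R^2 \sum_i \norm{u_i}^2/2) = \exp(\lambda^2 R^2/2)$. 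Taking the supremum over $u \in \bbS^{nk-1}$ gives the claim; the zero-mean property of $\bm{z}$ is immediate from that of each $z_i$.

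There is no real obstacle here. The only points needing care are the degenerate blocks $u_i = 0$, and the observation that the hypothesis is a statement for all $\lambda \in \R$, which is what permits absorbing the scale $\norm{u_i}$ into $\lambda$.
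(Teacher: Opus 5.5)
Your proof is correct. Splitting $u\in\mathbb{S}^{nk-1}$ into blocks, factorizing the moment generating function by independence, absorbing $\norm{u_i}$ into $\lambda$, and using $\sum_i \norm{u_i}^2 = 1$ is the standard argument; the paper states this as a Fact without proof, so there is no different route to compare against.
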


\begin{mylemma}[{\cite[Proposition 5.2.7]{Bakry2014}}]
\label{prop:prod_lsi}
Let $\mu=\otimes_{i=1}^{n}\mu^{(i)}$ be a product measure, where for $1\leq i \leq n$, $\mu^{(i)}\in \calP(\R^{d_{i}})$ satisfies $\mathrm{LS}(C_{i})$. Then $\mu$ satisfies $\mathrm{LS}(\max\{C_{1},\ldots,C_{n}\})$.
\end{mylemma}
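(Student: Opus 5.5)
Write $\mu_i := \mu^{(i)}$ and $x=(x_1,\dots,x_n)$ with $x_i\in\R^{d_i}$. The plan is the standard tensorization of entropy, followed by applying each coordinate inequality. Set $C := \max_i C_i$; since $\mathrm{LS}(C_i)$ implies $\mathrm{LS}(C)$ trivially, it suffices to prove
$$
\ent_\mu(f^2) \leq 2C\,\E_\mu\big[\norm{\nabla f}^2\big]
$$
for smooth $f$ with $f,\nabla f\in L^2(\mu)$.

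\textbf{Step 1 (subadditivity of entropy).} For a product measure one has
$$
\ent_\mu(g)\leq \sum_{i=1}^n \E_\mu\big[\ent_{\mu_i}(g)\big],
$$
where $\ent_{\mu_i}(g)$ is the entropy in the $i$-th block with the other blocks frozen. I would prove this through the variational formula
$$
\ent_\mu(g)=\sup\{\E_\mu[gh] : \E_\mu e^{h}\leq 1\}.
$$
Take $h=\log g-\log\E_\mu g$ and telescope it as $h=\sum_i h_i$, where $h_i=\log \E^{(i-1)}g-\log\E^{(i)}g$ and $\E^{(i)}$ integrates out blocks $1,\dots,i$. Each $h_i$ satisfies $\E_{\mu_i}e^{h_i}=1$ for fixed values of the remaining blocks. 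Hence $\E_\mu[gh_i]=\E_\mu[(\E^{(i-1)}g)h_i]\leq \E_\mu[\ent_{\mu_i}(\E^{(i-1)}g)]$. Finally, $\ent_{\mu_i}(\E^{(i-1)}g)\leq \E^{(i-1)}\ent_{\mu_i}(g)$ by convexity of entropy (Jensen applied to the jointly convex map $g\mapsto\ent_{\mu_i}(g)$). Summing over $i$ gives the claim.

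\textbf{Step 2 (apply coordinate LSI).} With $g=f^2$, for fixed values of the other blocks the map $x_i\mapsto f(x)$ is smooth. Applying $\mathrm{LS}(C_i)$ gives
$$
\ent_{\mu_i}(f^2)\leq 2C_i\,\E_{\mu_i}\big[\norm{\nabla_{x_i}f}^2\big].
$$
Integrate over the remaining blocks with Fubini and sum. Using $\sum_i\norm{\nabla_{x_i}f}^2=\norm{\nabla f}^2$, this yields $\ent_\mu(f^2)\leq 2C\,\E_\mu\norm{\nabla f}^2$.

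\textbf{Main obstacle.} The main difficulty is the integrability bookkeeping. One must ensure that for $\mu$-a.e.\ frozen choice of the other blocks, the section $x_i\mapsto f(x)$ lies in $L^2(\mu_i)$ with gradient in $L^2(\mu_i)$, so that $\mathrm{LS}(C_i)$ applies. This follows from Fubini because $f,\nabla f\in L^2(\mu)$. One must also justify the convexity step in Step 1 for unbounded $g$. I would handle both by first proving the result for bounded positive $g$ (or $f$ bounded away from $0$ and $\infty$, via truncation) and then passing to the limit with monotone and dominated convergence. Since this is a classical result, one may also simply cite \cite[Proposition 5.2.7]{Bakry2014} directly.
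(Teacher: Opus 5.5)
Your proof is correct and is the standard argument: subadditivity of entropy for product measures, obtained from the variational formula by telescoping and Jensen, followed by the coordinatewise $\mathrm{LS}(C_i)$ integrated via Fubini. This is exactly the route behind the cited Bakry--Gentil--Ledoux proposition; the paper gives no proof beyond that citation, and the integrability and truncation details you flag are routine to handle as you describe.
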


\begin{mylemma}[{\cite[Proposition 5.7.1]{Bakry2014}}]\label{fact:logconcavetolsi}
If $\mu\in\calP(\R^{d})$ satisfies $\psi_\mu := -\log\frac{\rmd \mu}{\rmd \lambda_{d}}$ is an $\alpha$-strongly-convex twice-differentiable function with convex support, then it satisfies $\mathrm{LS}(1/\alpha)$.
\end{mylemma}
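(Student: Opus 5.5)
The statement is the strongly log-concave to log-Sobolev implication (Bakry--\'Emery): if $\psi_\mu$ is twice differentiable and $\alpha$-strongly convex on a convex support, then $\mu$ satisfies $\mathrm{LS}(1/\alpha)$, i.e. $\ent_\mu(f^2)\le \frac{2}{\alpha}\E_\mu\norm{\nabla f}^2$. I would prove it with the $\Gamma_2$ (Bakry--\'Emery) semigroup argument for the Langevin diffusion with generator $Lf=\Delta f-\ip{\nabla\psi_\mu}{\nabla f}$, which is symmetric with respect to $\mu$ and has carr\'e du champ $\Gamma(f)=\norm{\nabla f}^2$.

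\textbf{Curvature.} By Bochner's formula, $\Gamma_2(f)=\norm{\nabla^2 f}_F^2+\nabla f^\T\nabla^2\psi_\mu\nabla f\ge \alpha\,\Gamma(f)$, so the curvature--dimension condition $CD(\alpha,\infty)$ holds. For the Markov semigroup $P_t=e^{tL}$ this yields the strong gradient commutation bound $\sqrt{\Gamma(P_tf)}\le e^{-\alpha t}P_t\sqrt{\Gamma(f)}$.

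\textbf{Entropy decay.} Take $g=f^2$, assumed positive and smooth; the general case follows by approximation. Write
$$\ent_\mu(g)=-\int_0^\infty \frac{\rmd}{\rmd t}\int P_tg\log P_tg\,\rmd\mu\,\rmd t=\int_0^\infty\int \frac{\Gamma(P_tg)}{P_tg}\,\rmd\mu\,\rmd t .$$
The commutation bound together with Cauchy--Schwarz gives $\Gamma(P_tg)/P_tg\le e^{-2\alpha t}P_t\big(\Gamma(g)/g\big)$. Invariance of $\mu$ then bounds the inner integral by $e^{-2\alpha t}\int \Gamma(g)/g\,\rmd\mu=4e^{-2\alpha t}\int\norm{\nabla f}^2\rmd\mu$. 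Integrating in $t$ gives $\ent_\mu(f^2)\le\frac{2}{\alpha}\E_\mu\norm{\nabla f}^2$, which is $\mathrm{LS}(1/\alpha)$.

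\textbf{Main obstacle.} The hard step is rigor when the support $S$ is a proper convex set rather than $\R^d$. There $L$ needs Neumann (reflecting) boundary conditions, and one must justify the commutation bound, which relies on convexity of $S$ so that boundary terms have the right sign. I would avoid this by approximation: replace $\psi_\mu$ with $\psi_\mu+\frac{1}{\e}\mathrm{dist}(\cdot,S)^2$, suitably smoothed. This is $\alpha$-strongly convex on all of $\R^d$, because the squared distance to a convex set is convex. Prove the inequality on $\R^d$, where the semigroup is well behaved, then let $\e\to0$ using weak convergence of the measures and lower semicontinuity of entropy.

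An alternative route is Caffarelli's contraction theorem, which gives a $1/\sqrt{\alpha}$-Lipschitz transport map from $\sfN(0,\alpha^{-1}I)$ to $\mu$ and pushes forward the Gaussian log-Sobolev inequality. Since this is a cited classical fact (\cite[Proposition 5.7.1]{Bakry2014}), invoking it directly also suffices.
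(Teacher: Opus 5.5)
Your proposal is correct and follows the paper's route: the paper gives no argument of its own and simply invokes the Bakry--\'Emery criterion (Proposition 5.7.1 of Bakry--Gentil--Ledoux), and your $\Gamma_2$, strong-gradient-bound and entropy-decay computation is the standard proof of that criterion, with the correct constant $\ent_\mu(f^2)\le\frac{2}{\alpha}\E_\mu\norm{\nabla f}^2$. Two small repairs are needed. First, in the convex-support reduction, $\psi_\mu$ need not admit any finite convex extension off $S$, since its gradient can blow up at $\partial S$; so instead of adding $\frac{1}{\e}\mathrm{dist}(\cdot,S)^2$ to it, regularize with the Moreau envelope of $\psi_\mu$ (set to $+\infty$ off $S$), which is $\frac{\alpha}{1+\lambda\alpha}$-strongly convex on $\R^d$, then mollify and let $\lambda\to 0$. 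Second, in the Caffarelli aside, the Brenier map is $1$-Lipschitz from $\sfN(0,\alpha^{-1}I_d)$, or equivalently $\alpha^{-1/2}$-Lipschitz from $\sfN(0,I_d)$; the scaling you stated would only yield $\mathrm{LS}(\alpha^{-2})$.
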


\begin{mylemma}\label{fact:lsitoccp}
If $\mu$ is a measure on $\R^{n}$ that satisfies $\mathrm{LS}(C)$, then for every $1$-Lipschitz function $f:\R^{n}\mapsto\R$,
$$
\Pr_{x\sim \mu}\left(|f(x)-\E[f(x)]|\geq t\right)\leq 2\exp\left(-\frac{t^{2}}{2C}\right).
$$
\end{mylemma}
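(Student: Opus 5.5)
We derive this as the standard consequence of the log-Sobolev inequality via the Herbst argument, and it suffices to treat the upper tail: the lower tail follows by applying the same bound to $-f$, which is also $1$-Lipschitz, and a union bound then produces the factor $2$. First we reduce to bounded smooth $f$. A $1$-Lipschitz $f$ can be approximated by smooth $1$-Lipschitz functions (convolve with a mollifier; this keeps the Lipschitz constant), and truncations $\max\{\min\{f,N\},-N\}$ remain $1$-Lipschitz. Monotone/dominated convergence then passes the final tail bound to the limit. As part of this step we must check that $\E|f|<\infty$. This follows from the bound for truncations: the truncated tails are uniformly Gaussian, and the medians of the truncations stay bounded.

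For smooth bounded $1$-Lipschitz $f$ and $\lambda\in\R$, set $H(\lambda):=\E_\mu[e^{\lambda f}]$. We apply $\mathrm{LS}(C)$ to $g=e^{\lambda f/2}$, noting that $\nabla g=\frac{\lambda}{2}e^{\lambda f/2}\nabla f$ and $\norm{\nabla f}\le 1$:
\[
\ent_\mu(e^{\lambda f})\le 2C\,\E\Big[\tfrac{\lambda^2}{4}\norm{\nabla f}^2 e^{\lambda f}\Big]\le \tfrac{C\lambda^2}{2}H(\lambda).
\]
Since $\ent_\mu(e^{\lambda f})=\lambda H'(\lambda)-H(\lambda)\log H(\lambda)$, this can be rewritten as $\frac{\rmd}{\rmd\lambda}\big(\lambda^{-1}\log H(\lambda)\big)\le C/2$ for $\lambda>0$. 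The limit of $\lambda^{-1}\log H(\lambda)$ as $\lambda\to0^+$ is $\E f$, so integrating gives $\log H(\lambda)\le \lambda\E f+C\lambda^2/2$. In other words, $\E e^{\lambda(f-\E f)}\le e^{C\lambda^2/2}$.

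Chernoff's bound then gives $\Pr(f-\E f\ge t)\le \exp(-\lambda t+C\lambda^2/2)$, and optimizing at $\lambda=t/C$ yields $\exp(-t^2/(2C))$. Combining this with the same bound for $-f$ gives the claim.

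The only delicate point is the approximation step. Specifically, we need the log-Sobolev inequality to apply to $e^{\lambda f/2}$. This function is continuously differentiable and lies in $L^2$ with an $L^2$ gradient once $f$ is bounded. Removing boundedness is also needed, and we handle it by truncation plus Fatou's lemma, as described in the first step.
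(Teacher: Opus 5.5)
Your proposal is correct and is essentially the paper's proof. The paper simply cites the Herbst argument \cite[Proposition 5.4.1]{Bakry2014} for the bound $\E e^{\lambda(f-\E f)}\le e^{C\lambda^2/2}$ and converts it to a tail bound, whereas you carry out that argument explicitly, including the approximation step; your direct Chernoff optimization at $\lambda=t/C$ also gives the stated constant $2\exp(-t^2/(2C))$ exactly, which the paper's appeal to the up-to-constants equivalence of sub-Gaussian definitions does not strictly deliver.
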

\begin{proof}
This follows immediately from \cite[Proposition 5.4.1]{Bakry2014}. In particular, \cite[Equation 5.4.1]{Bakry2014} states for any 1-Lipschitz function $f$ and $x\sim \mu$, $f(x)-\E[f(x)]$ is $\sqrt{C}$-sub-Gaussian. Then the claimed property follows from an equivalent sub-Gaussian definition \cite[Proposition 2.6.1]{vershynin2018high}.
\end{proof}

\begin{myfact}\label{fact:randommatrixnormconversion}
For any $M\in \R^{d\times d}$ and $p \geq 1$,
\begin{align*}
    \opnorm{M} \leq \norm{M}_{S_p} \leq d^{1/p} \opnorm{M}.
\end{align*}
If $d\geq 3$ then for any $p \geq \log d$, we have
\begin{align*}
    \opnorm{ M } \leq \norm{ M }_{S_p} \leq e \opnorm{M}.
\end{align*}
Similarly, for any $\R^{d\times d}$-valued random matrix $M$ and $p \geq 1$,
\begin{align*}
    \left(\E\opnorm{M}^p\right)^{1/p} \leq \left(\E \norm{M}^p_{S_p} \right)^{1/p}=\norm{M}_p.
\end{align*}
If $d \geq 3$ then for any $p \geq \log d$,
\begin{align*}
    \norm{M}_{p} =\left(\E \norm{M}^p_{S_p} \right)^{1/p} \leq e \left( \E\opnorm{M}^p \right)^{1/p}.
\end{align*}
\end{myfact}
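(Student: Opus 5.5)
The plan is to reduce to the scalar case via Schatten-norm monotonicity, and then pass from $S_p$ moments to operator-norm moments using the dimension factor $d^{1/p}$. For the deterministic part, I will write the singular values of $M$ as $\sigma_1\geq\dots\geq\sigma_d\geq 0$. Then
\begin{align*}
\norm{M}_{S_p}=\Big(\sum_{i=1}^d \sigma_i^p\Big)^{1/p}.
\end{align*}
Since $\sigma_1^p$ is one of the nonnegative summands, $\opnorm{M}=\sigma_1\leq \norm{M}_{S_p}$. Bounding each summand by $\sigma_1^p$ gives $\norm{M}_{S_p}\leq (d\sigma_1^p)^{1/p}=d^{1/p}\opnorm{M}$.

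For the refined constant, I will use that $p\geq \log d$ gives
\begin{align*}
d^{1/p}=\exp\big((\log d)/p\big)\leq e.
\end{align*}
The hypothesis $d\geq 3$ only ensures $\log d\geq 1$, so that the range $p\geq\log d$ is compatible with $p\geq 1$. This yields $\norm{M}_{S_p}\leq e\opnorm{M}$.

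For the random-matrix statements, I will apply the deterministic inequalities pointwise on the sample space, raise them to the $p$-th power, take expectations, and then take $p$-th roots. Pointwise $\opnorm{M}^p\leq \norm{M}_{S_p}^p$ gives $(\E\opnorm{M}^p)^{1/p}\leq(\E\norm{M}^p_{S_p})^{1/p}=\norm{M}_p$ by definition of $\norm{\cdot}_p$. Likewise, $\norm{M}_{S_p}^p\leq e^p\opnorm{M}^p$ for $p\geq\log d$ gives $\norm{M}_p\leq e(\E\opnorm{M}^p)^{1/p}$.

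There is no real obstacle here. The only point needing care is to record that the constant $e$ comes from $d^{1/p}\leq e^{(\log d)/p}\leq e$, and that monotonicity of $x\mapsto x^{1/p}$ and of expectation justify passing the pointwise bounds through.
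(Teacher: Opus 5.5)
Your proposal is correct. The paper states this fact without proof, and your argument is the standard justification it implicitly relies on: compare $\sigma_1^p$ with $\sum_i \sigma_i^p \leq d\,\sigma_1^p$, use $d^{1/p} = e^{(\log d)/p} \leq e$ for $p \geq \log d$, and pass the pointwise bounds through $\E$ and the monotone map $x \mapsto x^{1/p}$.
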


\begin{myfact}
\label{fact:frobenius_norm_submultiplicative}
For any size-conforming matrices $M, N$, 
it holds that $\norm{MN}_F^2 \leq \opnorm{N}^2 \norm{M}_F^2$.
\end{myfact}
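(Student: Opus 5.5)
The plan is a row-by-row argument, which reduces the Frobenius norm to a sum of Euclidean norms of vectors. Let $M \in \R^{a \times b}$ and $N \in \R^{b \times c}$, and write $m_1^\T, \dots, m_a^\T$ for the rows of $M$. Then the $i$-th row of $MN$ is $m_i^\T N = (N^\T m_i)^\T$. Summing squared row norms, we get
\begin{align*}
\norm{MN}_F^2 = \sum_{i=1}^{a} \norm{N^\T m_i}^2.
\end{align*}

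The key step is to bound each summand by the definition of the operator norm:
\begin{align*}
\norm{N^\T m_i} \leq \opnorm{N^\T}\norm{m_i} = \opnorm{N}\norm{m_i}.
\end{align*}
Here we use that a matrix and its transpose have the same largest singular value. Squaring and summing over $i$ gives
\begin{align*}
\sum_{i} \opnorm{N}^2 \norm{m_i}^2 = \opnorm{N}^2 \norm{M}_F^2,
\end{align*}
which is the claim.

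As a cross-check, there is an equivalent trace formulation. We have
\begin{align*}
\norm{MN}_F^2 = \Tr\left(M (NN^\T) M^\T\right).
\end{align*}
Since $NN^\T \preceq \opnorm{N}^2 I_b$, congruence by $M$ preserves this Loewner order, so $M NN^\T M^\T \preceq \opnorm{N}^2 MM^\T$. The trace is monotone on PSD matrices, which yields the same bound.

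There is no real obstacle here. The only point requiring care is the identity $\opnorm{N^\T} = \opnorm{N}$ (equivalently, $\lambda_{\max}(NN^\T) = \opnorm{N}^2$), which is standard.
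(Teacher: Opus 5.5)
Your proof is correct and complete. The row-wise identity $\norm{MN}_F^2 = \sum_i \norm{N^\T m_i}^2$, combined with $\opnorm{N^\T} = \opnorm{N}$, gives the bound directly, and your trace/Loewner-order cross-check is equally valid. The paper states this as a standard fact without proof; it only invokes it with $M = \bm{\widetilde{L}}_{T}(v)^{\T}$ and $N = \bm{\widetilde{L}}_{T}(v)$ in the stable case of the approximate isometry lemma, so there is no competing argument to compare against.
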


\begin{myprop}[Moments of sub-Gaussian vector norms]
\label{prop:sub_Gaussian_norm_moment}
Let $x \in \R^n$ be a zero-mean random vector with finite positive definite covariance $Q := \E[ xx^\T ]$.
Suppose that $z := Q^{-1/2} x$ is directionally $R$-sub-Gaussian. 
Then, we have for any fixed $M \in \R^{m \times n}$ and $r \geq 1$,
\begin{align*}
    (\E \norm{M x}^r)^{1/r} \lesssim  R\left(\sqrt{\Tr(MQM^\T)} + \sqrt{r}  \bigopnorm{Q^{1/2} M^\T}\right) .
\end{align*}
\end{myprop}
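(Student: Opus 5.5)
The plan is to reduce to the case $Q = I_n$ and then control the norm through a sub-Gaussian concentration argument. Write $x = Q^{1/2} z$ and set $B := M Q^{1/2} \in \R^{m \times n}$. Then $Mx = Bz$, $\Tr(MQM^\T) = \norm{B}_F^2$ and $\opnorm{Q^{1/2}M^\T} = \opnorm{B}$. So it suffices to show
\[
(\E\norm{Bz}^r)^{1/r} \lesssim R(\norm{B}_F + \sqrt{r}\opnorm{B})
\]
for a zero-mean, isotropic ($\E zz^\T = I_n$), directionally $R$-sub-Gaussian $z$. By Minkowski's inequality, $(\E\norm{Bz}^r)^{1/r} \leq \E\norm{Bz} + (\E|\norm{Bz} - \E\norm{Bz}|^r)^{1/r}$. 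The first term is at most $(\E\norm{Bz}^2)^{1/2} = \norm{B}_F$ by Jensen and isotropy, and isotropy together with sub-Gaussianity forces $R \gtrsim 1$. It therefore remains to bound the centered fluctuation.

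For the fluctuation, I will avoid Lipschitz concentration, which is not available under directional sub-Gaussianity alone. Instead I will use a net argument on $\norm{Bz} = \sup_{u \in \bbS^{m-1}} \ip{B^\T u}{z}$. First reduce to the range: let $k = \mathrm{rank}(B)$ and take the SVD $B = U S V^\T$, so that $\norm{Bz} = \norm{S V^\T z}$ with $SV^\T z \in \R^k$. Because this crude route would give $\sqrt{k}$ instead of $\norm{B}_F$, I will decompose dyadically by singular values. Group the singular values $\sigma_j$ into blocks $J_\ell = \{ j : \sigma_j \in (2^{-\ell-1}\opnorm{B}, 2^{-\ell}\opnorm{B}]\}$. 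On each block, use a $1/2$-net of the sphere of dimension $|J_\ell|$ together with the scalar sub-Gaussian tail for $\ip{v}{z}$. This gives
\[
\norm{P_\ell z} \lesssim R\bigl(\sqrt{|J_\ell|} + \sqrt{r + \ell}\bigr)
\]
with probability at least $1 - e^{-(r+\ell)}$, where $P_\ell$ projects onto the right singular vectors in $J_\ell$. Then
\[
\norm{Bz}^2 \leq \sum_\ell 4^{-\ell}\opnorm{B}^2 \norm{P_\ell z}^2 \lesssim R^2\Bigl(\sum_\ell 4^{-\ell}\opnorm{B}^2 |J_\ell| + \opnorm{B}^2 \sum_\ell 4^{-\ell}(r+\ell)\Bigr).
\]
The first sum is at most $4\norm{B}_F^2$ and the second is at most a constant times $r\opnorm{B}^2$.

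A union bound over $\ell$ gives, for every $s \geq 1$,
\[
\Pr\bigl(\norm{Bz} \gtrsim R(\norm{B}_F + \sqrt{s}\opnorm{B})\bigr) \lesssim e^{-s}.
\]
Integrating this tail, in the same way as \Cref{lemma:tailtomoment}, yields $(\E\norm{Bz}^r)^{1/r} \lesssim R(\norm{B}_F + \sqrt{r}\opnorm{B})$, and no separate centering step is needed. The main obstacle is this dyadic bookkeeping. A simpler alternative, if it goes through, is to bound $\E\norm{Bz}^{2p}$ directly via the MGF of the quadratic form $z^\T B^\T B z$. Directional sub-Gaussianity gives $\E\exp(\lambda z^\T B^\T B z) \leq \E_g \E_z \exp(\sqrt{2\lambda}\ip{Bg}{z})$ with $g$ Gaussian, by decoupling through a Gaussian linearization. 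This is bounded by $\E_g \exp(\lambda R^2\norm{Bg}^2) = \det(I - 2\lambda R^2 B^\T B)^{-1/2}$, which is the Gaussian case. That Gaussian-chaos bound gives exactly the $\norm{B}_F^2 + r\opnorm{B}^2$ moment scaling, and I would try this linearization route first since it avoids nets entirely.
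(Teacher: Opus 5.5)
Your proposal is correct, but your main route differs from the paper's. The paper does the same reduction, with $\Sigma := Q^{1/2}M^\T M Q^{1/2} = B^\T B$. It then proves no tail bound itself. Instead it cites the quadratic-form tail inequality of \cite{hsu2012subgaussian},
$\norm{Bz}^2 \leq R^2(\Tr(\Sigma) + 2\sqrt{\Tr(\Sigma^2)\log(1/\delta)} + 2\opnorm{\Sigma}\log(1/\delta))$.
It simplifies this via H\"older and Young to $R^2(2\Tr(\Sigma) + 3\opnorm{\Sigma}\log(1/\delta))$. This shows that $\max\{\norm{Bz}^2 - 2R^2\Tr(\Sigma), 0\}$ has a sub-exponential tail, which it integrates with \Cref{lemma:tailtomoment} before finishing with Jensen and Minkowski.

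Your dyadic peeling over singular-value blocks derives an equivalent tail bound from scratch, up to constants. Inside each block you use a $1/2$-net and a union bound, and then a second union bound over $\ell$. The bookkeeping checks out: $\sum_\ell 4^{-\ell}\opnorm{B}^2\abs{J_\ell} \leq 4\norm{B}_F^2$ and $\sum_\ell 4^{-\ell}(s+\ell) \lesssim s$. Peeling is exactly what removes the $\sqrt{\mathrm{rank}(B)}$ loss of a single net. This route needs only scalar sub-Gaussian tails and is self-contained, at the cost of worse constants and more bookkeeping. The paper's route is two lines given a sharp black box. Your ``simpler alternative'' (Gaussian linearization of the MGF) is in fact how the cited Hsu--Kakade--Zhang theorem is proved. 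So that route amounts to the paper's argument with the black box opened.

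Two small fixes if you take the linearization route.

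First, the Gaussian vector should be $g \in \R^m$. The first step is the equality $\E_z \exp(\lambda\norm{Bz}^2) = \E_z\E_g \exp(\sqrt{2\lambda}\ip{B^\T g}{z})$. The inequality only enters at the next step, and it requires $2\lambda R^2\opnorm{B}^2 < 1$.

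Second, to extract the $\norm{B}_F^2 + r\opnorm{B}^2$ scaling from the MGF, $\lambda$ must depend on $r$. Use $y^{p} \leq (p/(e\lambda))^{p}e^{\lambda y}$ with $y = \norm{Bz}^2$ and $p = r/2$, together with $\log \E e^{\lambda\norm{Bz}^2} \leq 2\lambda R^2\norm{B}_F^2$ for $\lambda R^2\opnorm{B}^2 \leq 1/4$. Then take $\lambda \asymp \min\{r/(R^2\norm{B}_F^2),\, 1/(R^2\opnorm{B}^2)\}$. A fixed $\lambda \asymp 1/(R^2\opnorm{B}^2)$ would leave an extra factor $e^{c\norm{B}_F^2/(r\opnorm{B}^2)}$.

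Finally, once you go through a tail bound, your opening Minkowski/centering step and the remark that $R \gtrsim 1$ are no longer needed, as you yourself note.
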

\begin{proof}
This result follows as a consequence of 
generic chaining moment inequalities for sub-Gaussian processes~\cite[Theorem 3.2]{dirksen15chaining}.
Here we provide a more direct proof.

Consider the random variable $\norm{ M Q^{1/2} z}^2$. From \cite[Thoerem 1]{hsu2012subgaussian}, with probability at least $1-\delta$,
\begin{align*}
    \norm{ M Q^{1/2} z }^2 &\leq R^2\left( \Tr(\Sigma) + 2 \sqrt{\Tr(\Sigma^2) \log(1/\delta)} + 2 \opnorm{\Sigma} \log(1/\delta) \right), \quad \Sigma := Q^{1/2} M^\T M Q^{1/2} \\
    &\stackrel{(a)}{\leq} R^2\left( \Tr(\Sigma) + 2 \sqrt{ \Tr(\Sigma) \opnorm{\Sigma} \log(1/\delta)} + 2 \opnorm{\Sigma} \log(1/\delta) \right) \\
    &\stackrel{(b)}{\leq} R^2\left( 2 \Tr(\Sigma) + 3 \opnorm{\Sigma} \log(1/\delta) \right),
\end{align*}
where (a) applies H{\"{o}}lder's inequality 
and (b) applies Young's inequality.
Hence, defining the random variable
$Y := \norm{ M Q^{1/2} z }^2 - 2 R^2 \Tr(\Sigma)$,
we can write for any $t > 0$,
\begin{align*}
    \Pr( \max\{Y, 0 \} > t ) = \Pr( Y > t ) \leq \exp\left( - \frac{t}{3R^2\opnorm{\Sigma}}\right).
\end{align*}
From \Cref{lemma:tailtomoment}, for $p \geq 1$, we have
$(\E \max\{Y, 0\}^p)^{1/p} \lesssim p R^2 \opnorm{\Sigma}$.

On the other hand, we have surely that:
\begin{align*}
    \norm{ M Q^{1/2} z}^2 = Y + 2 R^2 \Tr(\Sigma)\leq \max\{ Y, 0 \} + 2 R^2 \Tr(\Sigma).
\end{align*}
Therefore:
\begin{align*}
    (\E\norm{MQ^{1/2}z}^r)^{1/r} &\stackrel{(a)}{\leq} (\E\norm{MQ^{1/2} z}^{2r})^{1/(2r)}\leq(\E[(\max\{ Y, 0 \} + 2 R^2 \Tr(\Sigma))^{r}])^{1/(2r)}\\
    &\stackrel{(b)}{\leq} ( (\E \max\{Y,0\}^r)^{1/r} + 2 R^2 \Tr(\Sigma)   )^{1/2} \\
    &\lesssim \left( r R^2 \opnorm{\Sigma} + R^2 \Tr(\Sigma) \right)^{1/2} \leq R\left( \sqrt{\Tr(\Sigma)} + \sqrt{r} \sqrt{\opnorm{\Sigma}} \right),
\end{align*}
where $(a)$ follows from Jensen's inequality and $(b)$ follows from Minkowski inequality.
\end{proof}

\begin{myprop}\label{lemma:matrixtail}
Let $A$ be a $\R^{d\times d}$-valued symmetric random matrix. Suppose there exists a family of functions $\calG=\{g_{v}:\R_{\geq 0} \mapsto \R_{\geq 0} \;|\;v\in \mathbb{S}^{d-1}\}$ and a non-negative universal constant $b$ such that for every $v\in \mathbb{S}^{d-1}$, with probability at least $1-\delta$,
$$
\abs{v^{\T}Av} \leq g_{v}(\log(1/\delta)+b),
$$
then with probability at least $1-\delta$,
$$
\opnorm{A}\leq 2 \sup_{v\in \mathbb{S}^{d-1}}g_{v}\left(\log(1/\delta)+b+d\log 9\right),
$$
\end{myprop}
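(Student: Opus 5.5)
This is the standard $\e$-net argument for quadratic forms of symmetric matrices. Let $\calN$ be a $1/4$-net of $\bbS^{d-1}$ with $\abs{\calN}\leq 9^d$; the volumetric bound gives $(1+2/\e)^d = 9^d$ at $\e=1/4$. The plan has three steps: control $v^\T A v$ uniformly on $\calN$ by a union bound, pass from the net to $\opnorm{A}$ with a factor-$2$ loss, and then replace the sup over $\calN$ by the sup over the whole sphere.

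\emph{Net-to-sphere reduction.} Since $A$ is symmetric, $\opnorm{A}=\sup_{u\in\bbS^{d-1}}\abs{u^\T A u}$. Fix $u$ attaining this and pick $v\in\calN$ with $\norm{u-v}\leq 1/4$. Then
\[
\abs{u^\T A u - v^\T A v} = \abs{(u-v)^\T A u + v^\T A (u-v)} \leq 2\cdot\tfrac14\opnorm{A} = \tfrac12\opnorm{A}.
\]
Hence $\opnorm{A}\leq \max_{v\in\calN}\abs{v^\T A v} + \tfrac12\opnorm{A}$, which gives $\opnorm{A}\leq 2\max_{v\in\calN}\abs{v^\T A v}$.

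\emph{Union bound.} For each $v\in\calN$, apply the hypothesis with failure probability $\delta' := \delta/9^d$. This gives $\abs{v^\T A v}\leq g_v(\log(1/\delta)+d\log 9+b)$ with probability at least $1-\delta/9^d$. A union bound over the at most $9^d$ points of $\calN$ shows that, with probability at least $1-\delta$, this holds simultaneously for all $v\in\calN$.

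\emph{Conclusion.} On that event,
\[
\opnorm{A}\leq 2\max_{v\in\calN} g_v(\log(1/\delta)+b+d\log 9)\leq 2\sup_{v\in\bbS^{d-1}} g_v(\log(1/\delta)+b+d\log 9),
\]
since $\calN\subset\bbS^{d-1}$. No monotonicity of $g_v$ is needed, because the argument of each $g_v$ is exactly the one supplied by the hypothesis at level $\delta'$.

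The only delicate point is getting the constant $2$ together with the $9^d$ cardinality. This is why the net resolution is chosen as $1/4$ rather than the more common $1/2$: at $1/2$ the reduction step loses $\opnorm{A}$ entirely and yields no bound. With resolution $1/4$, both the net size and the factor $2$ match the statement exactly.
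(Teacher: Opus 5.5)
Your proof is correct and follows the same route as the paper: a $1/4$-net of size at most $9^d$, the $\frac{1}{1-2\e}=2$ net-to-sphere reduction (which you prove inline; the paper cites it), and a union bound. One difference is worth noting. You assign each net point failure level $\delta/9^d$, whereas the paper uses $\delta/\abs{\calN}$; your choice avoids the implicit monotonicity of $g_v$ that the paper needs when it replaces $\log\abs{\calN}$ by $d\log 9$.
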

\begin{proof}
We proceed with a standard union bound argument. 
Fix $\e \in (0, 1/2)$ and let $\mathbb{S}^{d-1}_{\e}$ denote a minimal $\e$-net on $\mathbb{S}^{d-1}$. By \cite[Lemma 5.2]{vershynin2018high} we have $\left|\mathbb{S}^{d-1}_{\e}\right|\leq \left(1+\frac{2}{\e}\right)^{d}$, and by \cite[Lemma 4.4.2]{vershynin2018high}:
$$
\opnorm{A}\leq \frac{1}{1-2\e}\max_{v\in \mathbb{S}^{d-1}_{\e}}\left|v^{\T}Av\right|.
$$
We know for every $v\in \mathbb{S}^{d-1}_{\e}$, with probability at least $\delta / \left|\mathbb{S}^{d-1}_{\e}\right|$, 
$$
v^{\T}Av \leq g_{v}\left(\log\left(\left|\mathbb{S}^{d-1}_{\e}\right|/\delta\right)+b\right).
$$
Therefore by union bound, with probability at least $1-\delta$
\begin{align*}
\opnorm{A}&\leq  \frac{1}{1-2\e}\sup_{v\in \mathbb{S}^{d-1}_{\e}}\left|v^{\T}Av\right|\\
&\leq \frac{1}{1-2\e}\sup_{v\in \mathbb{S}^{d-1}_{\e}}g_{v}\left(\log\left(\left|\mathbb{S}^{d-1}_{\e}\right|/\delta\right)+b\right)\\
&\leq \frac{1}{1-2\e}\sup_{v\in \mathbb{S}^{d-1}}g_{v}\left(\log(1/\delta)+b +d\log \left(1+\frac{2}{\e}\right)\right).
\end{align*}
Now taking $\e=\frac{1}{4}$ finishes the proof.
\end{proof}

\begin{myprop}\label{lemma:tailtomoment}
Let $Y$ be a real-valued random variable. Suppose there exists constants $b\geq 0$, and a continuous $f:\R_{\geq 0}\mapsto\R_{\geq 0}$ that satisfies (i) $f(0)=0$ and (ii) $\forall\;t>0,\;f(t)/t>0$ and is nondecreasing, such that for any $t>0$,
$$
\Pr\left(|Y|>t\right)\leq \exp (-f(t)+b).
$$
Then we have for $r\geq 1$,
$$
\E\left[|Y|^{r}\right]\leq 2\left(f^{-1}(r+b)\right)^{r}.
$$
\end{myprop}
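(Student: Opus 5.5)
The plan is to use the layer-cake formula and integrate the tail bound directly. For $r \geq 1$ we write
\begin{align*}
\E|Y|^r = \int_0^\infty r t^{r-1}\Pr(|Y|>t)\,\rmd t \leq \int_0^{t_0} r t^{r-1}\,\rmd t + \int_{t_0}^\infty r t^{r-1} \exp(-f(t)+b)\,\rmd t,
\end{align*}
where the threshold is $t_0 := f^{-1}(r+b)$. This inverse is well defined because $f$ is continuous, $f(0)=0$, and $f(t) = t\cdot (f(t)/t)$ is strictly increasing and unbounded: $f(t)/t$ is positive and nondecreasing. The first integral is exactly $t_0^r$. So it remains to show that the second integral is at most $t_0^r$.

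For the second integral the key structural fact is that $g(t) := f(t)/t$ is nondecreasing. Hence for $t \geq t_0$ we have $f(t) \geq (t/t_0) f(t_0) = (t/t_0)(r+b)$. Substituting $t = t_0 s$ with $s\geq 1$ bounds the integral by
\begin{align*}
t_0^r \int_1^\infty r s^{r-1} e^{b} e^{-(r+b)s}\,\rmd s \leq t_0^r\, r \int_1^\infty s^{r-1} e^{-rs}\,\rmd s,
\end{align*}
using $e^{b - bs} \leq 1$ for $s \geq 1$.

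The main obstacle is checking that $r\int_1^\infty s^{r-1}e^{-rs}\,\rmd s \leq 1$ uniformly in $r \geq 1$. I would handle it as follows. Since $s^{r-1} e^{-(r-1)s}$ is maximized at $s=1$ for $s \geq 1$ (the function $\log s - s$ is decreasing there), we get $s^{r-1}e^{-rs} = s^{r-1}e^{-(r-1)s}e^{-s} \leq e^{-(r-1)} e^{-s}$. Therefore the integral is at most $r e^{-(r-1)} e^{-1} = r e^{-r} \leq e^{-1} < 1$.

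Combining the two pieces gives $\E|Y|^r \leq t_0^r + t_0^r = 2\,(f^{-1}(r+b))^r$, as claimed.
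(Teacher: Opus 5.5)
Your proof is correct and follows essentially the same route as the paper: the layer-cake split at $t_0=f^{-1}(r+b)$, the linear-growth bound $f(t)\geq (t/t_0)(r+b)$ coming from monotonicity of $f(t)/t$, and dropping the $b$ term for $t\geq t_0$. The only difference is the last step, where the paper extends the tail integral to a full Gamma integral and uses $\Gamma(r)\leq r^{r-1}$, while your pointwise bound $s^{r-1}e^{-(r-1)s}\leq e^{-(r-1)}$ works equally well and even gives the slightly better constant $1+e^{-1}$.
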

\begin{proof}
This result is standard when $f$ is instantiated to be quadratic (i.e., a sub-Gaussian tail) \cite[Proposition 2.6.1]{vershynin2018high} or linear (i.e., a sub-exponential tail) \cite[Proposition 2.8.1]{vershynin2018high}. Here we prove a more generic version with the same proof technique. 

We first note that by the assumption that $f(t)/t$ is nondecreasing, for any $c\geq 1$ and $t>0$,
\begin{equation}
f(ct)/ct \geq f(t)/t \implies f(ct)\geq cf(t).\label{eq:flineargrowth}
\end{equation}
Now by the Fubini-Tonelli theorem, we have
\begin{align}
\E\left[|Y|^{r}\right]
&= r\int_{0}^{\infty}t^{r-1}\mathbb{P}\left(\abs{Y}>t\right)\rmd t \nonumber \\
&\leq r\int_{0}^{f^{-1}(r+b)}t^{r-1}\rmd t+ r\int_{f^{-1}(r+b)}^{\infty}t^{r-1}\exp(-f(t)+b)\rmd t \nonumber \\
&\leq r\int_{0}^{f^{-1}(r+b)}t^{r-1}\rmd t +r\int_{f^{-1}(r+b)}^{\infty}t^{r-1}\exp\left(-f\left(f^{-1}(r+b)\left(t/f^{-1}(r+b)\right)\right)+b\right)\rmd t \nonumber\\
&\stackrel{(a)}{\leq} \left(f^{-1}(r+b)\right)^{r}+r\int_{f^{-1}(r+b)}^{\infty}t^{r-1}\exp\left(-\left(t(r+b)/f^{-1}(r+b)\right)+b\right)\rmd t \nonumber\\
&\leq \left(f^{-1}(r+b)\right)^{r}+r\int_{f^{-1}(r+b)}^{\infty}t^{r-1}\exp\left(-tr/f^{-1}(r+b)\right)\rmd t \nonumber\\
&=\left(f^{-1}(r+b)\right)^{r}+r\int_{r}^{\infty}\left(f^{-1}(r+b)r^{-1}s\right)^{r-1}\exp\left(-s\right) f^{-1}(r+b)r^{-1}\rmd s \nonumber\\
&\leq\left(f^{-1}(r+b)\right)^{r}+\frac{1}{r^{r-1}}\left(f^{-1}(r+b)\right)^{r}\int_{0}^{\infty}s^{r-1}\exp\left(-s\right) \rmd s \nonumber\\
&\stackrel{(b)}{=}\left(f^{-1}(r+b)\right)^{r}+\frac{\Gamma(r)}{r^{r-1}}\left(f^{-1}(r+b)\right)^{r}\stackrel{(c)}{\leq} 2\left(f^{-1}(r+b)\right)^{r},\nonumber
\end{align}
where at $(a)$ we invoke \eqref{eq:flineargrowth} with $c=t/f^{-1}(r+b)\geq 1$, $(b)$ follows from standard moment computation for exponential distribution, and $(c)$ follows from when $r\geq 1$, $\Gamma(r)\leq r^{r-1}$. 
\end{proof}

\begin{myprop}\label{lemma:polytailmoment}
Let $Y$ be a real-valued nonnegative random variable. Suppose for some $\delta_{0}\leq 1$ there exists positive constants $C_{1}$ and $C_{2}$ such that for any $\delta\in (0,\delta_{0}]$, 
with probability at least $1-\delta$:
$$
Y\leq C_{1}\delta^{-C_{2}}.
$$
Then for any $r > 0$ satisfying $rC_{2}<1$,
$$
\mathbb{E}\left[Y^{r}\right]\leq \frac{\left(C_{1}\delta_{0}^{-C_{2}}\right)^r}{1-rC_{2}}.
$$
\end{myprop}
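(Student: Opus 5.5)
We will prove the bound by the layer-cake formula $\E[Y^r] = \int_0^\infty r s^{r-1}\Pr(Y>s)\,\rmd s$, converting the polynomial-in-$\delta$ tail hypothesis into a polynomial tail in $s$. The hypothesis, read with $\delta = (C_1/s)^{1/C_2}$, gives $\Pr(Y > s) \leq (s/C_1)^{-1/C_2}$. This is admissible whenever $\delta \in (0,\delta_0]$, i.e.\ whenever $s \geq s_0 := C_1\delta_0^{-C_2}$.

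Split the integral at $s_0$. On $[0,s_0]$, bound the probability by one, which contributes $s_0^r = (C_1\delta_0^{-C_2})^r$. On $[s_0,\infty)$, insert the tail bound. The integrand is then $r C_1^{1/C_2} s^{r-1-1/C_2}$. This is integrable at infinity exactly because $r - 1/C_2 < 0$, which is the condition $rC_2<1$. The integral evaluates to
\begin{align*}
r C_1^{1/C_2}\frac{s_0^{\,r-1/C_2}}{1/C_2 - r} = \frac{rC_2}{1-rC_2}\, C_1^{1/C_2} s_0^{\,r-1/C_2}.
\end{align*}
Substituting $s_0 = C_1\delta_0^{-C_2}$ gives $C_1^{1/C_2}s_0^{r-1/C_2} = s_0^r\,\delta_0$, so this piece is at most $\frac{rC_2}{1-rC_2}s_0^r\delta_0 \leq \frac{rC_2}{1-rC_2}s_0^r$, using $\delta_0\leq 1$.

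Summing the two pieces gives $s_0^r\bigl(1 + \frac{rC_2}{1-rC_2}\bigr) = \frac{s_0^r}{1-rC_2}$, which is the claimed bound.

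The only delicate step is the change of variables: checking that the range $s \geq s_0$ corresponds exactly to admissible $\delta\leq\delta_0$. Strictness of the inequalities is harmless because the measure of the boundary point does not matter in the integral.
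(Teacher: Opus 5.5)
Your proof is correct and follows the paper's argument essentially step for step. Both use the layer-cake formula split at $C_1\delta_0^{-C_2}$ (your $s_0$), the trivial bound on the lower piece, and the polynomial tail $\Pr(Y>t)\leq (C_1/t)^{1/C_2}$ on the upper piece, which is integrable because $rC_2<1$; your simplification $C_1^{1/C_2}s_0^{\,r-1/C_2}=s_0^{r}\delta_0$ gives the paper's intermediate factor $\frac{1-rC_2+rC_2\delta_0}{1-rC_2}$, and $\delta_0\leq 1$ then removes $\delta_0$.
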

\begin{proof}
By the tail assumption, we have for any $t\geq C_{1}\delta_{0}^{-C_{2}}$,
$$
\mathbb{P}\left(Y>t\right)\leq \left(\frac{C_{1}}{t}\right)^{1/C_{2}}.
$$
By the Fubini-Tonelli theorem, when $r-1/C_{2}<0$:
\begin{align*}
\mathbb{E}\left[Y^{r}\right]&=r\int_{0}^{\infty}t^{r-1}\mathbb{P}\left(Y>t\right)\rmd t=r\int_{0}^{C_{1}\delta_{0}^{-C_{2}}}t^{r-1}\Pr\left(Y>t\right)\rmd t+r\int_{C_{1}\delta_{0}^{-C_{2}}}^{\infty}t^{r-1}\Pr\left(Y>t\right)\rmd t\\
&\leq r\int_{0}^{C_{1}\delta_{0}^{-C_{2}}}t^{r-1}\rmd t+r\int_{C_{1}\delta_{0}^{-C_{2}}}^{\infty}t^{r-1}\Pr\left(Y>t\right)\rmd t\\
&\leq \left(C_{1}\delta_{0}^{-C_{2}}\right)^{r}+rC_{1}^{1/C_{2}}\int_{C_{1}\delta_{0}^{-C_{2}}}^{\infty}t^{r-1-1/C_{2}}\rmd t\\
&=\left(C_{1}\delta_{0}^{-C_{2}}\right)^{r}+rC_{1}^{1/C_{2}}\frac{-\left(C_{1}\delta_{0}^{-C_{2}}\right)^{r-1/C_{2}}}{r-1/C_{2}}\\
&=\left(1-\frac{r\delta_{0}}{r-1/C_{2}}\right)\left(C_{1}\delta_{0}^{-C_{2}}\right)^{r}=\frac{1-rC_{2}+rC_{2}\delta_{0}}{1-rC_{2}}\left(C_{1}\delta_{0}^{-C_{2}}\right)^{r}\leq \frac{\left(C_{1}\delta_{0}^{-C_{2}}\right)^r}{1-rC_{2}},
\end{align*}
which finishes the proof.
\end{proof}

\end{document}